\newcommand{\Pa}{\textit{Pa}}
\newcommand{\supp}{\text{supp}}
\newtheorem{theorem}{Theorem}
\newtheorem{lemma}{Lemma}
\newtheorem{proposition}{Proposition}
\newtheorem{corollary}{Corollary}
\newtheorem{definition}{Definition}
\newtheorem{example}{Example}
\newtheorem{remark}{Remark}
\newtheorem{claim}{Claim}
\newtheorem{assumption}{Assumption}
\icmltitlerunning{Characterizing Distribution Equivalence and Structure Learning for Cyclic and Acyclic Directed Graphs}
\begin{document}

\twocolumn[
\icmltitle{Characterizing Distribution Equivalence and Structure Learning for\\ Cyclic and Acyclic Directed Graphs}

\icmlsetsymbol{equal}{*}

\begin{icmlauthorlist}
\icmlauthor{AmirEmad Ghassami}{to}
\icmlauthor{Alan Yang}{to}
\icmlauthor{Negar Kiyavash}{ed}
\icmlauthor{Kun Zhang}{goo}

\end{icmlauthorlist}

\icmlaffiliation{to}{Department of Electrical and Computer Engineering, University of Illinois at Urbana-Champaign, Urbana, IL, USA}
\icmlaffiliation{ed}{College of Management of Technology, \'Ecole Polytechnique F\'ed\'erale de Lausanne (EPFL), Switzerland}
\icmlaffiliation{goo}{Department of Philosophy, Carnegie Mellon University, Pittsburgh, PA, USA}

\icmlcorrespondingauthor{AmirEmad Ghassami}{ghassam2@illinois.edu}

\icmlkeywords{Machine Learning, ICML}

\vskip 0.3in
]

\printAffiliationsAndNotice{}

\begin{abstract}
The main approach to defining equivalence among acyclic directed causal graphical models is based on the conditional independence relationships in the distributions that the causal models can generate, in terms of the Markov equivalence. However, it is known that when cycles are allowed in the causal structure, conditional independence may not be a suitable notion for equivalence of two structures, as it does not reflect all the information in the distribution that is useful for identification of the underlying structure. In this paper, we present a general, unified notion of equivalence for linear Gaussian causal directed graphical models, whether they are cyclic or acyclic. In our proposed definition of equivalence, two structures are equivalent if they can generate the same set of data distributions. We also propose a weaker notion of equivalence called quasi-equivalence, which we show is the extent of identifiability from observational data. We propose analytic as well as graphical methods for characterizing the equivalence of two structures. Additionally, we propose a score-based method for learning the structure from observational data, which successfully deals with both acyclic and cyclic structures.
\end{abstract}

\section{Introduction}
\label{sec:intro}

The problem of learning directed graphical models from data has received a significant amount of attention over the past three decades since those models provide a compact and flexible way to represent constraints on the joint distribution of the data \cite{koller2009probabilistic}. When interpreted causally, they can model causal relationships among the variables of the system and help make predictions under intervention \cite{pearl2009causality,spirtes2000causation}.

There exists an extensive literature on learning causal graphical models from observational data under the assumption that the model is a directed acyclic graph (DAG). 
Existing approaches include constraint-based methods \cite{spirtes2000causation,pearl2009causality}, score-based methods \cite{heckerman1995learning,chickering2002optimal},  hybrid methods \cite{tsamardinos2006max}, as well as methods which make extra assumptions on the data generating process. For example, the model may be assumed to be linear with non-Gaussian exogenous noise variables \cite{shimizu2006linear} or contain specific types of non-linearity in the causal modules \cite{hoyer2009nonlinear,zhang2008distinguishing}. 

\begin{wrapfigure}{r}{0.15\textwidth}
  \vspace{-6mm}
  \begin{center}
    \includegraphics[scale=0.43]{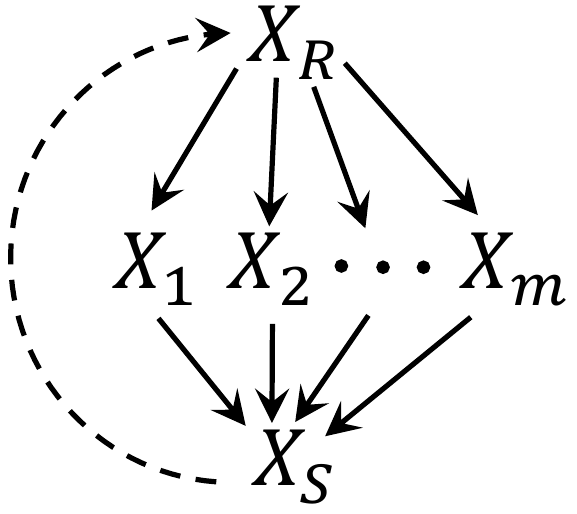}
  \end{center}
    \vspace{-5mm}
  \caption{}
  \vspace{-2mm}
  \label{fig:ignore}
\end{wrapfigure}
Most real-life causal systems contain feedback loops, since feedback is generally required to stabilize the system and improve performance in the presence of noise. Hence, the causal directed graph (DG) corresponding to such systems will be cyclic \cite{spirtes1995directed,hyttinen2012learning}. However, there are relatively few works on learning structures that contain cycles. In many state-of-the-art causal models, not only is feedback ignored, it is also explicitly assumed that there are no cycles passing information among the considered quantities. 
Note that ignoring cycles in structure learning can be very consequential. For instance, in Figure \ref{fig:ignore}, if one uses a conditional independence-based learning method designed for DAGs such as the PC algorithm \cite{spirtes2000causation}, in the absence of the dashed feedback loop the skeleton will be estimated correctly on the population dataset and the directions for all edges into $X_S$ can be determined.
However, in the presence of the feedback loop, the output is a complete directed graph since no two variables will be independent conditioned on any subset of the rest of the variables. 

The lack of attention to cyclic structures in the literature is primarily due to the simplicity of working with acyclic models (see \cite{spirtes1995directed}) and the fact that in contrast to DAGs, there exists no generally accepted characterization of statistical equivalence among cyclic structures in the literature.
The main method for defining equivalence among DAGs is based on the conditional independence (CI) relationships in the distributions that they imply. That is, two DAGs are equivalent if and only if they imply the same CI relations. CI relationships can be seen from statistical data, and the CI-based equivalence characterization for DAGs is attractive because CI relationships contain all the information in the distribution that can be used for structure learning under the assumption of causal sufficiency.
However, when causal sufficiency is violated or cycles are allowed in the structure, conditional independency may not reflect all the information in the distribution that can be used to identify the underlying structure.
That is, the joint distribution may contain information that can be used to distinguish among the members of a CI-based equivalence class, which is also known as a Markov equivalence class. This means that it is possible for two graphs to be distinguishable from observational data even though they are in the same Markov equivalence class.
For more details, see \cite{lacerda2008discovering} for the case of the violation of acyclicity and \cite{tian2002testable,shpitser2014introduction} for the case of the violation of causal sufficiency.

With the goal of bridging the gap between cyclic and acyclic DGs, in this paper we present a general characterization of equivalence for linear Gaussian DGs.\footnote{Note that for non-linear cyclic SEMs, even the Markov property does not necessarily hold \cite{spirtes1995directed,pearl1996identifying,neal2000deducing}, and hence, it is not clear if one can make general statements about the equivalence of structures regardless of the involved equations.} 
In the case of DAGs, our approach provides a novel alternative to the customary tests for Markov equivalence.
The proposed distribution equivalence characterization (Theorems \ref{thm:mainrot2} and \ref{thm:graphical}) not only is capable of characterizing equivalence beyond conditional independencies, but also provides a simpler and more concise evaluation approach compared to \cite{richardson1996polynomial}. 
We summarize our contributions as follows.
\begin{itemize}
\item We present a general, unified notion of equivalence based on the set of distributions that the directed graphs are able to generate (Section \ref{sec:desc}). In our proposed definition of equivalence, two structures are equivalent if they can generate the same {\it set} of data distributions. 
\item We propose an algebraic and graphical characterization of the equivalence of two DGs, be they cyclic or acyclic, based on the so-called Givens rotations (Sections \ref{sec:test} and \ref{sec:testgraphical}). 
\item We also propose a weaker notion of equivalence called quasi-equivalence, which we show is the extent of identifiability from observational data (Section \ref{sec:learning}). 
\item We propose a score-based method for structure learning from observational data with local search. We show that our score asymptotically achieves the extent of identifiability (Section \ref{sec:learning}). To the best of our knowledge, this is the first local search method capable of learning structures with cycles. The implementation is publicly available at \href{https://github.com/syanga/dglearn}{https://github.com/syanga/dglearn}.
\end{itemize}

\subsection{Related Work}

\citet{richardson1996discovery,richardson1996polynomial} proposed graphical constraints necessary and sufficient for Markov equivalence for general cyclic DGs and proposed a constraint-based algorithm for learning cyclic DGs. That algorithm was later extended to handle latent confounders and selection bias \cite{strobl2019constraint}.
\citet{hyttinen2013discovering,hyttinen2014constraint}
also focused on structure learning based on CI relationships for possibly cyclic and causally insufficient data gathered from multiple domains that may contain conflicting CI information. They proposed an approach based on an SAT or ASP solver. Due to generality of their setup, the run time of this approach can be restricting.
A similar approach was proposed in \cite{forre2018constraint} for the case of nonlinear functional relationships with an extended notion of graphical separation called $\sigma$-separation.
Also, \citet{hyttinen2012learning} provided an algorithm for learning linear models with cycles and confounders
that deals with perfect interventions.  
As mentioned earlier, having the assumption of non-Gaussian exogenous noises and specific types of non-linearity may lead to unique identifiability in DAGs. This idea was also investigated for cyclic DGs. \citet{lacerda2008discovering} proposed a method for learning DGs based on the ICA approach for linear systems with non-Gaussian exogenous noises, and \citet{mooij2011causal} investigated the case of nonlinear causal mechanisms with additive noise.

To the best of our knowledge, there exists no work on learning cyclic linear Gaussian models which utilizes the observational joint distribution itself rather than CI relationships in the distribution.

\section{Distribution Equivalence}
\label{sec:desc}

We consider a linear structural causal model over $p$ observable variables $\{X_i\}_{i = 1}^p$, with exogenous Gaussian noise. For $i\in[p]$, variable $X_i$ is generated as $X_i= \sum_{j=1}^pB_{j,i} X_j+N_i$, in which $N_i$ is the exogenous noise corresponding to variable $X_i$. We assume that $B_{i,i}=0$, for all $i\in[p]$. 
Variable $X_j$ is a direct cause of $X_i$ if $B_{j,i}\neq 0$.
We represent the causal structure among the variables with a DG $G=(V(G),E(G))$, in which $X_i\rightarrow X_j\in G$ if $X_i$ is a direct cause of $X_j$.  
Let $X\coloneqq[X_1\cdots X_p]^\top$. The model can be represented in matrix form as $X=B^\top X+N$, where $B$ is a $p\times p$ weighted adjacency matrix of $G$ 
with $B_{j,i}$ as its $(j,i)$-th entry
and $N=[N_1\cdots N_p]^\top$. Elements of $N$ are assumed to be jointly Gaussian and independent. Since we can always center the data, without loss of generality, we assume that $N$, and hence, $X$ is zero-mean. Therefore, $X\sim\mathcal{N}(0,\Sigma)$, where $\Sigma$ is the covariance matrix of the joint Gaussian distribution on $X$, and suffices to describe the distribution of $X$. We assume that $\Sigma$ is always invertible (the Lebesgue measure of non-invertible matrices is zero). Therefore, equivalently the precision matrix $\Theta=\Sigma^{-1}$ contains all the information regarding the distribution of $X$. $\Theta$ can be written as
\begin{equation}	
\Theta=(I-B)\Omega^{-1}(I-B)^{\top}, 
\label{eq:prec}
\end{equation}
where $\Omega$ is a $p\times p$ diagonal matrix with $\Omega_{i,i}=\sigma_i^2=\textit{Var}(N_i)$. In the sequel, we use the terms precision matrix and distribution interchangeably.

The most common notion of equivalence for DGs in the literature is Markov equivalence (also called independence equivalence) defined as follows:
\begin{definition}[Markov Equivalence]
\label{def:i-eq}
Let $\mathcal{I}(G)$ denote the set of all conditional d-separations\footnote{See \cite{pearl2009causality} for the definition of d-separation.} implied by the DG $G$.
DGs $G_1$ and $G_2$ are Markov equivalent if $\mathcal{I}(G_1)=\mathcal{I}(G_2)$.
\end{definition}
When cycles are permitted, defining equivalence of DGs based on CI relations that they represent is not suitable, as CI relations do not reflect all the information in the distribution that can be used for identification of the underlying structure; e.g., see \cite{lacerda2008discovering}.
That is, there exist DGs which can be distinguished using observational data with probability one despite representing the same CI relations.
We define the notion of equivalence based on the set of distributions which can be generated by a structure:
\begin{definition}[Distribution Set]
\label{def:distset}
 The distribution set of structure $G$, denoted by $\Theta(G)$, is  defined as
 \begin{equation*}
\begin{aligned}
\Theta(G)\!\coloneqq\! \{\Theta\!:\! \Theta &=(I-B)\Omega^{-1}(I-B)^{\top},\text{ for any }(B,\Omega)\\
&\text{ s.t. }\Omega\in \textit{diag}^+ \text{ and } \supp(B)\subseteq \supp(B_G)\},
\end{aligned}
\end{equation*}
where $\textit{diag}^+$ is the set of diagonal matrices with positive diagonal entries, $B_G$ is the binary adjacency matrix of $G$, and $\supp(B)=\{(i,j): B_{ij}\neq0\}$.
\end{definition}
$\Theta(G)$ is the set of all precision matrices (equivalently, distributions) that can be generated by $G$ for different choices of exogenous noise variances and edge weights in $G$. 
\begin{definition}[Distribution Equivalence]
\label{def:eq}
DGs $G_1$ and $G_2$ are distribution equivalent, or for short, equivalent, denoted by $G_1\equiv G_2$, if $\Theta(G_1)=\Theta(G_2)$.
\end{definition}
It is important to note that for DG $G$ and distribution $\Theta$, having $\Theta\in\Theta(G)$ does not imply that all the constraints of $\Theta$, such as its conditional independencies, can be read off of $G$. For instance, a complete DAG does not represent any conditional d-separations, yet all distributions are contained in its distribution set. This is due to the fact that the parameters in $B$ can be designed to represent certain extra constraints in the generated distribution. 

As mentioned earlier, we can have a pair of DGs which are distinguishable using observational data despite having the same conditional d-separations. This is not the case for DAGs. In fact, restricting the space of DGs to DAGs, Definitions \ref{def:eq} and \ref{def:i-eq} are equivalent.
\begin{proposition}
\label{prop:dag}
 Two DAGs $G_1$ and $G_2$ are equivalent if and only if they are Markov equivalent.
\end{proposition}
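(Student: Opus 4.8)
The plan is to prove the two directions separately, using the classical fact (due to Verma--Pearl, and independently characterized by Frydenberg and by the ``same skeleton + same v-structures'' criterion) that two DAGs are Markov equivalent if and only if they have the same skeleton and the same set of immoralities. So it suffices to show that $\Theta(G_1)=\Theta(G_2)$ holds if and only if $G_1$ and $G_2$ share skeleton and immoralities.

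For the ``if'' direction, I would argue that Markov equivalent DAGs generate the same distribution set. Since $G_1$ and $G_2$ are Markov equivalent, they are connected by a sequence of covered edge reversals (Chickering's theorem), so it is enough to treat the case where $G_2$ is obtained from $G_1$ by reversing a single covered edge $X_i \to X_j$. For a covered edge, $\Pa_{G_1}(X_j) = \Pa_{G_1}(X_i)\cup\{X_i\}$ and $\Pa_{G_2}(X_i) = \Pa_{G_2}(X_j)\cup\{X_j\}$, and one can explicitly reparametrize: given $(B,\Omega)$ realizing some $\Theta\in\Theta(G_1)$, construct $(B',\Omega')$ with $\supp(B')\subseteq\supp(B_{G_2})$ yielding the same $\Theta$, by locally re-solving the linear regression of $X_i,X_j$ on their common parent set. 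Concretely, both factorizations correspond to two orderings of a Gaussian vector, and the Cholesky-type factorization of the same $\Theta$ with respect to either ordering of that 2-block gives the two parametrizations; the supports match exactly because the edge is covered. Hence $\Theta(G_1)\subseteq\Theta(G_2)$, and by symmetry equality holds.

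For the ``only if'' direction, I would show that if $G_1$ and $G_2$ are not Markov equivalent then $\Theta(G_1)\neq\Theta(G_2)$. First, if the skeletons differ, say $\{X_i,X_j\}$ is adjacent in $G_1$ but not in $G_2$: every $\Theta\in\Theta(G_2)$ satisfies a conditional-independence constraint forcing a particular polynomial in the entries of $\Sigma=\Theta^{-1}$ to vanish (the relevant partial correlation is zero), whereas a generic choice of edge weights in $G_1$ violates it; so $\Theta(G_1)\not\subseteq\Theta(G_2)$. Second, if the skeletons agree but the immoralities differ, say $X_i\to X_k \leftarrow X_j$ is an immorality in $G_1$ with $X_i\not\sim X_j$, but not in $G_2$: then in $G_2$ the d-separation statement $X_i \perp X_j \mid S$ holds for some $S$ with $X_k\notin S$, which again imposes a nontrivial polynomial constraint on $\Theta(G_2)$; choosing generic parameters on $G_1$ (where this d-separation fails) produces a $\Theta\in\Theta(G_1)\setminus\Theta(G_2)$. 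In both sub-cases the key point is that a missing edge or a ``non-collider'' in a DAG forces an algebraic constraint on the entire distribution set, while the presence of the edge / collider leaves the corresponding quantity free to be nonzero on a set of positive measure.

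The main obstacle is the bookkeeping in the ``if'' direction: verifying that the single-covered-edge reparametrization lands exactly inside $\supp(B_{G_2})$ and does not accidentally require an edge absent from $G_2$, and that iterating over the Chickering chain stays within the respective distribution sets at every step. The ``only if'' direction is comparatively routine once one invokes that $\Theta(G)$ is contained in the variety cut out by $G$'s d-separation constraints (which follows from the Markov property for linear Gaussian DAGs) together with a genericity/dimension argument showing these constraints are not vacuously satisfied by $\Theta(G_i)$ when the corresponding separation does not hold in $G_i$.
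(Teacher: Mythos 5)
Your proof is essentially correct but follows a genuinely different route from the paper's. The paper reduces the proposition to its own machinery: by Corollary~\ref{cor:DAGgraphical} (which rests on Theorems~\ref{thm:graphical} and~\ref{thm:mainrot2}), two DAGs are distribution equivalent iff they are connected by parent exchanges, and a parent exchange on a DAG is exactly a covered edge flip, which preserves skeleton and v-structures; combined with the Verma--Pearl criterion this gives both directions at once. You instead bypass the support-rotation framework entirely: for ``Markov $\Rightarrow$ distribution'' you use Chickering's covered-edge-reversal chain and re-factor the Gaussian locally (two orderings of the $(X_i,X_j)$ block over their common parents), and for ``distribution $\Rightarrow$ Markov'' you use the global Markov property of linear Gaussian DAG models plus a genericity argument that a d-separation present in one graph but absent in the other yields a polynomial constraint satisfied by one distribution set and violated on a positive-measure subset of the other. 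Both proofs ultimately lean on Verma--Pearl and (explicitly or implicitly) on Chickering's decomposition; what the paper's route buys is that the proposition falls out as a corollary of its general cyclic theory, while your route buys independence from that theory and is the more standard, self-contained argument for readers coming from the DAG literature.

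One small slip to fix in the v-structure case: if $X_i\to X_k\leftarrow X_j$ is an immorality in $G_1$ but not in $G_2$ (same skeleton), then in $G_2$ the vertex $X_k$ is a \emph{non-collider} on the path $X_i - X_k - X_j$, so every set $S$ d-separating $X_i$ from $X_j$ in $G_2$ must \emph{contain} $X_k$, not exclude it. The argument goes through verbatim with $X_k\in S$: that CI holds on all of $\Theta(G_2)$ but fails for generic parameters of $G_1$ because $S\ni X_k$ opens the collider path in $G_1$. As written, with $X_k\notin S$, no such d-separation exists in $G_2$ and the step collapses; the fix is a one-word change.
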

Therefore, one does not lose any information by caring only about Markov equivalence when dealing with acyclic structures.
All proofs are provided in the Supplementary Materials.

For general DGs, the graphical test for Markov equivalence is known to be significantly more complex \cite{richardson1996polynomial} than the test for DAGs \cite{verma1991equivalence}.
There are currently no known graphical conditions for distribution equivalence. This is the goal of Section \ref{sec:testgraphical}.

\section{Characterizing Equivalence}
\label{sec:test}

In order to determine whether DGs $G_1$ and $G_2$ are equivalent, a baseline equivalence test is as follows: We consider a distribution $\Theta\in\Theta(G_1)$ which results from a certain choice of parameters of $G_1$ in expression \eqref{eq:prec}, i.e., a certain choice of exogenous noise variances and edge weights. We then check whether there exists a choice of parameters for which $G_2$ generates $\Theta$. We then repeat the same procedure for $G_1$, considering $G_2$ as the original generator. 
More specifically, for DG $G_i$, let $Q_i=(I-B)\Omega^{-\frac{1}{2}}$ for any choice of $B$ such that $\supp(B)\subseteq \supp(B_{G_i})$ for $i\in\{1,2\}$.
For any choice of parameters of $G_1$ that results in distribution $\Theta=Q_1Q_1^\top$, we check if $Q_2Q_2^\top=\Theta$ has real-valued solution, and vice versa. 
Although this baseline equivalence test provides a systematic approach, it is tedious in many cases to check for the existence of a solution. In the following, we propose an alternative equivalence test based on rotations of $Q$.

Let $v_i$ be the $i$-th row of matrix $Q$. Therefore, $\Theta=QQ^\top$ is the Gramian matrix of the set of vectors $\{v_1,\cdots v_p\}$.
The set of generating vectors of a Gramian matrix can be determined up to isometry. That is, given $Q_1Q_1^\top=\Theta$, we have $Q_2Q_2^\top=\Theta$ if and only if $Q_2=Q_1U$ for some orthogonal transformation $U$. Therefore, $Q_1$ should be transformable to $Q_2$ by a rotation or an improper rotation (a rotation followed by a reflection).

In our problem of interest, for \emph{any} parameterization of $Q_1$ (resp. $Q_2$) it is necessary to check if there exists an orthogonal transformation of $Q_1$ (resp. $Q_2$) which can be generated for \emph{some} parameterization of $Q_2$ (resp. $Q_1$). Therefore, only the support of the matrix before and after the orthogonal transformation matters. Hence, we only need to consider rotation transformations. This can be formalized as follows: Let $Q_G$ be $B_G$ with 1s on its diagonal, i.e. $Q_G:= I+B_G$. This is the binary matrix that for all choices of parameters $B$ and $\Omega$, $\textit{supp}(Q)\subseteq \textit{supp}(Q_G)$.
\begin{proposition}
\label{prop:rot}
$G_1\equiv G_2$ if and only if for any choice of $Q_1$, there exists rotation $U^{(1)}$ such that $\supp(Q_{1}U^{(1)})\subseteq \supp(Q_{G_2})$, and for any choice of $Q_2$, there exists rotation $U^{(2)}$ such that $\supp(Q_{2}U^{(2)})\subseteq \supp(Q_{G_1})$.
\end{proposition}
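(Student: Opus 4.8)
The plan is to re-express membership in each distribution set as the existence of a support-constrained matrix square root of the precision matrix, and then use the Gramian/isometry fact recalled just above the statement — that any two square roots of the same invertible matrix differ by an orthogonal factor.

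First I would set up the dictionary between $\Theta(G)$ and matrices supported inside $\supp(Q_G)$. With $Q=(I-B)\Omega^{-1/2}$ as above, one has $Q_{jj}=\Omega_{jj}^{-1/2}>0$, $Q_{ij}=-B_{ij}\Omega_{jj}^{-1/2}$ for $i\ne j$, and $QQ^\top=\Theta$ by \eqref{eq:prec}; conversely any $Q$ with strictly positive diagonal and $\supp(Q)\subseteq\supp(Q_G)$ recovers a legal pair via $\Omega_{jj}=Q_{jj}^{-2}$ and $B_{ij}=-Q_{ij}/Q_{jj}$. Hence $\Theta\in\Theta(G)$ iff $\Theta=QQ^\top$ for some $Q$ with strictly positive diagonal and $\supp(Q)\subseteq\supp(Q_G)$; moreover ``strictly positive diagonal'' may be weakened to ``nowhere-zero diagonal'', since right-multiplying $Q$ by the diagonal sign matrix $\mathrm{diag}(\mathrm{sgn}(Q_{11}),\dots,\mathrm{sgn}(Q_{pp}))$ is orthogonal, fixes both $\supp(Q)$ and $QQ^\top$, and makes the diagonal positive.

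Next I would split $\Theta(G_1)=\Theta(G_2)$ into its two inclusions and handle $\Theta(G_1)\subseteq\Theta(G_2)$, the other being symmetric. Since the elements of $\Theta(G_1)$ are exactly the matrices $Q_1Q_1^\top$ over legal parameterizations $Q_1$ of $G_1$, this inclusion says: for every legal $Q_1$, $Q_1Q_1^\top\in\Theta(G_2)$. Because $\Theta$ is invertible, $Q_1$ is square and invertible, so by the isometry fact every square root of $Q_1Q_1^\top$ equals $Q_1U$ for an orthogonal $U$; combining this with the dictionary, $Q_1Q_1^\top\in\Theta(G_2)$ iff some orthogonal $U$ makes $Q_1U$ nowhere-zero on the diagonal with $\supp(Q_1U)\subseteq\supp(Q_{G_2})$. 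Finally, restricting from orthogonal transformations to rotations costs nothing here: if $U$ is an improper rotation, replace it by $U'=U\,\mathrm{diag}(-1,1,\dots,1)$, which is a rotation with $\supp(Q_1U')=\supp(Q_1U)$ and the same diagonal up to sign. Running this for both inclusions produces precisely the two-sided condition in the statement, modulo the nowhere-zero-diagonal qualifier.

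The delicate point, and where I expect the real work to be, is discharging that qualifier: showing that whenever some rotation achieves $\supp(Q_1U)\subseteq\supp(Q_{G_2})$, one can also be taken with $Q_1U$ having no zero on the diagonal (the converse being trivial). I would establish this by a Givens-rotation argument, fitting the paper's theme: a zero entry $(Q_1U)_{jj}=0$ can be removed by composing with a single plane rotation in a coordinate pair $(i,j)$, where invertibility of $Q_1U$ together with $\supp(Q_1U)\subseteq\supp(Q_{G_2})$ supply an index $i$ and an angle for which the new $(j,j)$ entry becomes nonzero while no entry leaves $\supp(Q_{G_2})$, after which the finitely many zero diagonal entries are cleared one at a time. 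A genericity argument — the set of legal parameterizations is Zariski dense and the vanishing of a diagonal entry of $Q_1U$ is a proper algebraic condition on $U$ — provides an alternative route for the generic parameterizations, with the non-generic ones handled by approximation within the parameter space; the remaining verifications are routine support-of-product bookkeeping.
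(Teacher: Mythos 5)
Your core argument is the same as the paper's: both proofs reduce membership in $\Theta(G_i)$ to the existence of a support-constrained square root of $\Theta$, invoke the Gramian/isometry fact to write any such square root as $Q_1U$ with $U$ orthogonal, and convert an improper rotation into a rotation by right-multiplying with a sign-flip diagonal matrix (the paper uses an odd number of $-1$'s; your $\mathrm{diag}(-1,1,\dots,1)$ is the same device). Where you go beyond the paper is the ``nowhere-zero diagonal'' qualifier. You are right that this is a genuine subtlety in the \emph{if} direction: a legal $Q_2=(I-B)\Omega^{-1/2}$ has diagonal $\Omega_{jj}^{-1/2}\neq 0$, so the paper's assertion that the entries of $Q_1U^{(1)}$ ``are all free variables'' of $Q_2$ is not literally correct on the diagonal, and the paper's proof silently elides this.

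However, your proposed discharge of that qualifier has a gap. A Givens rotation in the $(i,j)$ plane replaces columns $i$ and $j$ of $Q_1U$ by nontrivial linear combinations of both, so the new column $j$ generically has support $\supp(\mathrm{col}_i)\cup\supp(\mathrm{col}_j)$; this need not be contained in column $j$ of $\supp(Q_{G_2})$ (and symmetrically for the new column $i$). Invertibility of $Q_1U$ only guarantees an index $i$ with $(Q_1U)_{ji}\neq0$, not one for which the two columns' allowed supports are compatible, so ``no entry leaves $\supp(Q_{G_2})$'' is exactly the claim that needs proof and is not routine bookkeeping. The genericity route has the same problem: you need the vanishing of a diagonal entry to be a proper condition on the subvariety of rotations \emph{already satisfying the support constraints} (which could a priori be rigid or entirely contained in that locus for special but legal $Q_1$), not merely a proper condition on $SO(p)$; Zariski-density of parameterizations does not help because the claim must hold for every choice of $Q_1$. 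Either prove that the qualifier is automatic, or state it as an explicit hypothesis; as written, this step is asserted rather than established.
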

To test the existence of a rotation required in Proposition \ref{prop:rot}, we propose utilizing a sequence of a special type of planar rotations called \emph{Givens rotations} \cite{golub2012matrix}.
\begin{definition}[Givens rotation] A Givens rotation is a rotation in the plane spanned by two coordinate axes. For a $\theta$-radian rotation in the $(j,k)$ plane, the entries of the Givens rotation matrix $G(j,k,\theta)=[g]_{p\times p}$ in $\mathbb{R}^p$ are 
$g_{i,i}=1$ for $i\not\in\{j,k\}$,
$g_{i,i}=\cos(\theta)$ for $i\in \{j,k\}$, and
$g_{k,j}\!=\!-g_{j,k}\!=\!-\sin(\theta)$,
and the rest of the entries are zero.
\end{definition}
Any rotation in $\mathbb{R}^p$ can be decomposed into a sequence of Givens rotations. Hence, in Proposition \ref{prop:rot}, we need to find a sequence of Givens matrices and define $U$ to be their product. The advantage of this approach is that the effect of a Givens rotation is easy to track: The effect of $G(j,k,\theta)$ on a row vector $v$ is as follows.
\begin{small}
\begin{equation}
\label{eq:Givens}
\begin{aligned}
  &[ v_1~ \cdots~ v_j ~\cdots~ v_k  \cdots~ v_p ]G(j,k,\theta)=\\
&
  [ v_1~ \cdots~ \cos(\theta)v_j\!+\!\sin(\theta)v_k  ~\cdots~  \!-\!\sin(\theta)v_j\!+\!\cos(\theta)v_k ~\cdots~ v_p ].
\end{aligned}
\end{equation}
\end{small}

\vspace{-5mm}
\subsection{Support Rotation}

As previously mentioned, since all choices of parameters in the structure need to be considered, it is necessary to determine the existence of a rotation that maps one support to another. We define support matrix and support rotation as follows.
\begin{definition}[Support matrix]
\label{def:supp}
For any matrix $Q$, its support matrix is a binary matrix $\xi$ of the same size with entries in $\{0,\times\}$, where $\xi_{i,j}=\times$ if $Q_{i,j}\neq0$ and $\xi_{i,j}=0$ otherwise. For directed graph $G$, we define its support matrix as support matrix of $Q_G$.
\end{definition}
Givens rotations can be used to introduce zeros in a matrix, and hence, change its support. Consider input matrix $Q$. Using expression \eqref{eq:Givens}, for any $i,j\in[p]$, $Q_{i,j}$ can be set to zero using a Givens rotation in the $(j,k)$ plane with angle $\theta=\tan^{-1}(-Q_{i,j}/Q_{i,k})$.
When zeroing $Q_{i,j}$, there may exist an index $l$ such that $Q_{l,j}$ or $Q_{l,k}$ will also become zero. However, since we consider all parameterizations of $Q$, we cannot take advantage of such accidental zeroings.
\begin{definition}[Support Rotation]
The support rotation $A(i,j,k)$ is a transformation that takes a support matrix $\xi$ as the input and sets $\xi_{i,j}$ to zero using a Givens rotation in the $(j,k)$ plane. The output is the support matrix of $QG(j,k,\tan^{-1}(-Q_{i,j}/Q_{i,k}))$, where $Q\in\arg\max_{Q'}|\supp(Q'G(j,k,\tan^{-1}(-Q'_{i,j} / Q'_{i,k})))|$ such that the support matrix of $Q'$ is $\xi$.
Note that $G(j,k,\tan^{-1}(-Q'_{ij}/Q'_{i,k}))$ is the Givens rotation in the $(j,k)$ plane which zeros $Q'_{i,j}$.
\end{definition}
Note that due to \eqref{eq:Givens}, $A(i,j,k)$ only affects the $j$-th and $k$-th columns of the input. The general effect of support rotation $A(i,j,k)$ is described in the following proposition.
\begin{proposition}
\label{prop:effect}
Support rotation $A(i,j,k)$ can have three possible effects on support matrix $\xi$:
\begin{enumerate}
\item If $\xi_{i,j}=0$, $A(i,j,k)$ has no effect.

\item If $\xi_{i,j}=\times$ and $\xi_{i,k}=\times$, $A(i,j,k)$ makes $\xi_{i,j}=0$, and for any $l\in[p]\setminus\{i\}$ such that
at least one of $\xi_{l,j}$ and $\xi_{l,k}$ is $\times$, $A(i,j,k)$ makes $\xi_{l,j}=\times$ and $\xi_{l,k}=\times$. 
This is obtained by an acute rotation.
\item If $\xi_{i,j}=\times$ and $\xi_{i,k}=0$, $A(i,j,k)$ switches columns $j$ and $k$ of $\xi$. This is obtained by a $\pi/2$ rotation.
\end{enumerate}
\end{proposition}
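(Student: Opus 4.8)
The plan is to read off the effect of $A(i,j,k)$ directly from the coordinate formula \eqref{eq:Givens}, being careful to distinguish vanishings that are forced by the support $\xi$ from accidental ones, since by definition the support rotation is computed for a support-maximizing parameterization $Q$.

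First I would eliminate the trigonometric functions. For $\theta=\tan^{-1}(-Q_{i,j}/Q_{i,k})$ one may take $\cos\theta=Q_{i,k}/\rho$ and $\sin\theta=-Q_{i,j}/\rho$ with $\rho=\sqrt{Q_{i,j}^2+Q_{i,k}^2}>0$, so that by \eqref{eq:Givens} the $(l,j)$ and $(l,k)$ entries of $QG(j,k,\theta)$ equal $\rho^{-1}(Q_{i,k}Q_{l,j}-Q_{i,j}Q_{l,k})$ and $\rho^{-1}(Q_{i,j}Q_{l,j}+Q_{i,k}Q_{l,k})$ respectively, for every row $l$, while all other columns are untouched. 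In particular the $(i,j)$ entry is identically $0$ and the $(i,k)$ entry equals $\rho>0$; and, crucially, after clearing the factor $\rho^{-1}$ the nonvanishing of each affected entry becomes the nonvanishing of a \emph{polynomial} in the entries of $Q$.

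Next I would split on the support of row $i$. If $\xi_{i,j}=0$, the prescribed angle is $0$ and $G(j,k,\theta)=I$, which gives case~1. If $\xi_{i,j}=\times$ and $\xi_{i,k}=0$, then $Q_{i,k}=0$ and $Q_{i,j}\neq0$ for the chosen $Q$, so the two entry-polynomials above collapse to $-Q_{i,j}Q_{l,k}$ and $Q_{i,j}Q_{l,j}$; hence column $j$ becomes a nonzero multiple of the old column $k$ and column $k$ a nonzero multiple of the old column $j$, i.e.\ $A(i,j,k)$ swaps columns $j$ and $k$ of $\xi$, which is case~3 (the $\pi/2$ rotation, $\theta=\pm\pi/2$). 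If $\xi_{i,j}=\times$ and $\xi_{i,k}=\times$, then $Q_{i,j},Q_{i,k}\neq0$, so $\theta$ is acute with $\sin\theta\cos\theta\neq0$; the $(i,j)$ entry becomes $0$, the $(i,k)$ entry stays $\times$, and for every $l\neq i$ both affected entries vanish identically precisely when $Q_{l,j}=Q_{l,k}=0$, i.e.\ when $\xi_{l,j}=\xi_{l,k}=0$. This is the content of case~2.

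The only point requiring care, and the step I expect to be the main obstacle, is showing that in case~2 the support-maximizing $Q$ genuinely realizes this ``no accidental cancellation'' pattern; this is not deep, but it must be argued rather than asserted. A standard genericity argument does it: the angle $\theta$ depends only on $Q_{i,j}$ and $Q_{i,k}$, whereas for each $l\neq i$ the entries $Q_{l,j},Q_{l,k}$ are free parameters of $Q$; each condition ``this image entry is nonzero'' is the nonvanishing of one of the polynomials above, which is either identically zero (when forced by $\xi$) or nonzero on a Zariski-dense set of parameter choices. Since a finite intersection of nonempty Zariski-open sets is nonempty, a single $Q$ with support $\xi$ attains the maximal image support simultaneously in all entries, and that support is exactly the one claimed. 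Combining the three cases completes the proof.
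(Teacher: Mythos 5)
Your proof is correct and follows essentially the same route as the paper's: a case split on the support of row $i$, a direct reading of the rotated entries from \eqref{eq:Givens}, and a genericity argument to rule out accidental cancellations in the acute-rotation case. The only difference is presentational — the paper asserts the no-accidental-alignment step geometrically ("rows will not become aligned with either axis for some $Q$"), whereas you justify it explicitly via non-vanishing of the polynomials $Q_{i,k}Q_{l,j}-Q_{i,j}Q_{l,k}$ and $Q_{i,j}Q_{l,j}+Q_{i,k}Q_{l,k}$ on a Zariski-dense set of parameterizations, which is a welcome tightening of the same idea.
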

Figure \ref{fig:Aex} visualizes an example of a support rotation.
\begin{figure}[t]
\begin{center}
\includegraphics[scale=0.51]{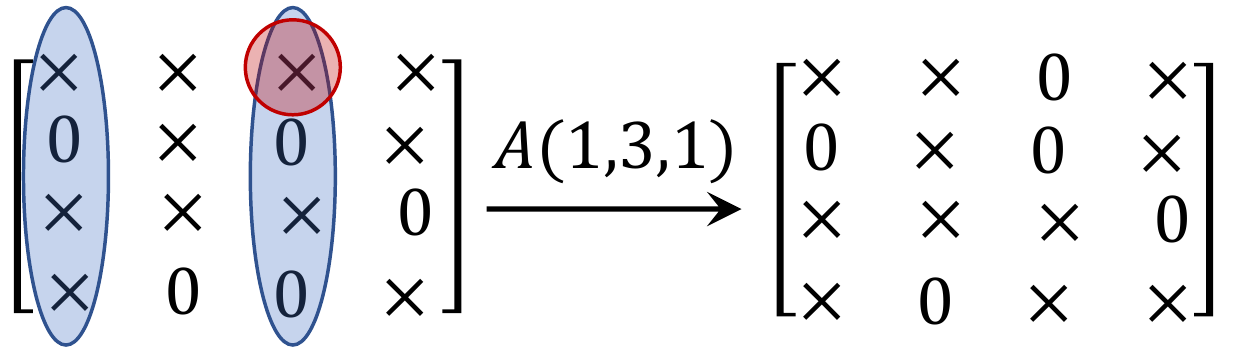}	
\end{center}
\caption{An example of support rotation (Case 2, Prop. \ref{prop:effect}). Element $\xi_{i,j}$ is in red, and columns $j$ and $k$ are in blue.}
\label{fig:Aex}
\end{figure}
Observe that the following four cases partition all the effects that can be obtained from a support rotation $A(i,j,k)$.
\begin{itemize}
\item {\bf Reduction.} If $\xi_{i,j}=\xi_{i,k}=\times$ and $\xi_{l,j}=\xi_{l,k}$ for all $l\in[p]\setminus\{i\}$, then only $\xi_{i,j}$ becomes zero.
\item {\bf Reversible acute rotation.} If $\xi_{i,j}=\xi_{i,k}=\times$ and there exists a row $i'$ such that the $j$-th and $k$-th columns differ only in that row, then $\xi_{i,j}$ becomes zero and both $\xi_{i',j}$ and $\xi_{i',k}$ become $\times$.
\item {\bf Irreversible acute rotation.} If $\xi_{i,j}=\xi_{i,k}=\times$ and the $j$-th and $k$-th columns differ in at least two rows, then $\xi_{i,j}$ becomes zero and all entries on the $j$-th and $k$-th columns become $\times$ on the rows on which they differed.
\item {\bf Column swap.} If $\xi_{i,j}=\times$ and $\xi_{i,k}=0$, then columns $j$ and $k$ are swapped.
\end{itemize}
Note that if $\xi$ is transformed to $\xi'$ via a reversible acute rotation $A(i,j,k)$, and $\xi_{i',j}=0$, then $\xi'$ can be mapped back to $\xi$ via $A(i',j,k)$, hence the name reversible.

\subsection{Characterizing Equivalence via Support Rotations}

We give the following necessary and sufficient condition for distribution equivalence of two structures using the introduced support operations. We show that irreversible acute rotations are not needed for checking equivalence. Here, for two support matrices $\xi$ and $\xi'$, we say $\xi\subseteq\xi'$ if $\supp(\xi)\subseteq\supp(\xi')$.
\begin{theorem}
\label{thm:mainrot2}
Let $\xi_1$ and $\xi_2$ be the support matrices of DGs $G_1$ and $G_2$, respectively.
$G_1$ is distribution equivalent to $G_2$ if and only if 
there exists a sequence of reductions, reversible acute rotations, and column swaps that maps $\xi_1$ to a subset of $\xi_2$, and a sequence that maps $\xi_2$ to a subset of $\xi_1$.
\end{theorem}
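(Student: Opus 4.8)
The plan is to build on Proposition~\ref{prop:rot}, which reduces distribution equivalence to a statement about rotations: $G_1\equiv G_2$ iff for every $Q_1$ there is a rotation $U^{(1)}$ with $\supp(Q_1U^{(1)})\subseteq\supp(Q_{G_2})$, and symmetrically. Since the conditions are symmetric in $G_1$ and $G_2$, it suffices to prove the one-directional equivalence: ``for every $Q_1$ there exists a rotation $U$ with $\supp(Q_1U)\subseteq\supp(Q_{G_2})$'' holds if and only if ``there is a sequence of reductions, reversible acute rotations, and column swaps mapping $\xi_1$ into a subset of $\xi_2$.'' Throughout I would exploit two facts already laid out in the excerpt: (i) any rotation in $\mathbb R^p$ decomposes into Givens rotations, so $U$ can be taken to be a product $G(j_1,k_1,\theta_1)\cdots G(j_m,k_m,\theta_m)$; and (ii) when we only care about support and must range over all parameterizations of $Q$, the effect of each Givens factor on the support is captured exactly by the corresponding support rotation $A(i,j,k)$ (Proposition~\ref{prop:effect}), with the worst-case (generic) rule that accidental zeros cannot be relied upon.

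For the \emph{if} direction I would argue as follows. Given a sequence of support operations mapping $\xi_1$ to some $\xi'\subseteq\xi_2$, each operation corresponds to a Givens rotation applied to the current matrix $Q$ (a reduction or reversible acute rotation is realized by $\theta=\tan^{-1}(-Q_{i,j}/Q_{i,k})$ in the appropriate plane, a column swap by $\theta=\pi/2$). I would check that for a \emph{generic} starting $Q_1$ with $\supp(Q_1)=\xi_1$, the composition of these Givens rotations produces a $Q$ whose support is exactly the predicted $\xi'$ — the point being that the set of parameterizations where an extra (unpredicted) cancellation occurs is a proper algebraic subvariety, hence measure zero, so generically the support is as small as the support-rotation calculus says. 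Thus $\supp(Q_1U)\subseteq\xi'\subseteq\xi_2$ for generic $Q_1$; and for the measure-zero exceptional $Q_1$ one either perturbs or observes that accidental zeros only help, so the inclusion still holds. Symmetrically for the other direction, giving $G_1\equiv G_2$.

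For the \emph{only if} direction, suppose for every $Q_1$ there is a rotation $U$ with $\supp(Q_1U)\subseteq\xi_2$. Fix a generic $Q_1$ and decompose its witnessing $U$ into Givens rotations $G(j_1,k_1,\theta_1)\cdots G(j_m,k_m,\theta_m)$. I would process these factors left to right, at each stage reading off from Proposition~\ref{prop:effect} which of the four support-rotation cases the factor induces on the current generic support. The key reduction — and this is where the real work lies — is to argue that \textbf{irreversible acute rotations can be eliminated}: any Givens factor whose generic effect would be an irreversible acute rotation (filling in $\ge 2$ rows on columns $j,k$) can never be ``undone'' by later factors in a way that still lands inside $\xi_2$, unless it was unnecessary to begin with. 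More precisely, once columns $j$ and $k$ have been made to agree on $\ge 2$ previously-differing rows, the support is strictly larger and stays larger under all subsequent support operations except ones that re-separate those columns, which in the generic regime is impossible without reintroducing nonzeros elsewhere; hence a $U$ requiring an irreversible rotation cannot achieve $\supp(Q_1U)\subseteq\xi_2$ for generic $Q_1$ unless the offending factor can be replaced by the identity or a reversible/swap operation. After this pruning, the surviving sequence of support operations is exactly a sequence of reductions, reversible acute rotations, and column swaps mapping $\xi_1$ into $\xi_2$, as required.

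The main obstacle I anticipate is precisely this elimination of irreversible acute rotations: one must show that the generic-support bookkeeping of Proposition~\ref{prop:effect} is not just an upper bound on what a real rotation can do, but is \emph{tight} in the sense that a composition of Givens rotations cannot achieve an inclusion into $\xi_2$ that the support-rotation calculus forbids. Concretely this needs a monotonicity-type invariant — something like ``the multiset of column-difference patterns can only coarsen, never refine, under an irreversible step, and refining it back requires filling an entry outside $\xi_2$'' — together with a careful genericity argument to rule out algebraic coincidences. Making that invariant precise and checking it survives arbitrary interleavings of the four operation types is the technical heart of the proof; the rest (the genericity/measure-zero perturbation arguments and the reduction to one direction by symmetry) is routine.
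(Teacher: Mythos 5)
Your \emph{if} direction and the basic dictionary between Givens rotations and support rotations (work with a generic $Q$, note that accidental cancellations only shrink the support) is essentially the paper's route: the paper first proves an intermediate version of the theorem in which \emph{all four} support-rotation types are allowed, via a lemma stating that if $\supp(Q)\subseteq\xi$ then $\supp(QG(j,k,\theta))\subseteq \xi A(i,j,k)$ for a suitably chosen $\theta$ (this handles the non-generic $Q_1$ exactly, rather than by perturbation --- note you cannot perturb $Q_1$, since the rotation must exist for \emph{that} $Q_1$; the fix is to choose $\theta\in\{0,\pi/2\}$ in the degenerate cases). The real problems are in your \emph{only if} direction, and there are two of them.

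First, the reduction ``by symmetry'' to a one-directional equivalence is invalid. The statement ``for every $Q_1$ there is a rotation $U$ with $\supp(Q_1U)\subseteq\supp(Q_{G_2})$ iff a sequence of reductions, reversible acute rotations, and column swaps maps $\xi_1$ into a subset of $\xi_2$'' is false as a one-directional claim: take $\xi_1$ with two columns $j,k$ differing in two rows and let $\xi_2$ be exactly the image of $\xi_1$ under the irreversible acute rotation $A(i,j,k)$. Then every $Q_1$ rotates into $\supp(Q_{G_2})$ (one Givens rotation suffices), but no sequence of the three restricted operations maps $\xi_1$ into $\xi_2$, and indeed $G_1\not\equiv G_2$ because that rotation is lossy ($\Theta(G_1)\subsetneq\Theta(G_2)$). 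The restriction to the three reversible operation types is inherently a \emph{two-sided} phenomenon: the paper derives it from the fact that, when $\Theta(G_1)=\Theta(G_2)$, every support rotation in a certifying sequence must be \emph{lossless} (since each support rotation can only enlarge the distribution set, a single lossy step would force $\Theta(G_1)\subsetneq\Theta(G_2)$). Your argument never uses the return sequence from $\xi_2$ to $\xi_1$, so it cannot reach this conclusion.

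Second, the monotonicity invariant you propose for pruning irreversible acute rotations --- that once columns $j,k$ are filled in on $\geq 2$ previously-differing rows the support ``stays larger'' and cannot be re-separated --- is false. Reductions can shrink the support later, and the paper explicitly notes that an irreversible acute rotation \emph{can} be lossless, i.e., undoable by a longer sequence; so such a step is not automatically ``unnecessary.'' The actual mechanism in the paper is structural rather than monotonic: if an irreversible acute rotation on columns $j,k$ (differing in $m$ rows) is lossless, then reversing it requires reintroducing $m-1$ zeros, each of which needs a reduction, and each reduction needs a fully matching auxiliary column; an induction (the paper's Claim 1) shows this forces $\xi_1$ to already contain columns $t_1,\dots,t_{M-1}$ forming, together with $j$ and $k$, a block in which zeros can be freely relocated by reversible acute rotations alone --- so the irreversible step can be simulated by reductions and reversible acute rotations. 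That existence-of-auxiliary-columns argument, driven by losslessness, is the technical heart you correctly identified as missing, but the invariant you sketch in its place does not hold and would not yield it.
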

Theorem \ref{thm:mainrot2} converts the problem of determining the equivalence of two structures into a search problem for two sequences of support rotations. We propose to use a depth-first search algorithm that performs all column swaps at the end of the sequences. Due to space constraints, the pseudo-code is presented in the Supplementary Materials.

The following result is a nontrivial application of Theorem \ref{thm:mainrot2} regarding reversing cycles in DGs.
\begin{proposition}[Direction of Cycles]
\label{prop:dir}
Suppose structure $G_1$ contains a directed cycle $C$. Let $G_2$ be a structure that differs from $G_1$ in two ways. (1) The direction of cycle $C$ is reversed and (2) any variable pointing to $X_i\in C$ in $G_1$ via an edge which is not part of $C$ is, in $G_2$, pointing to the preceder of $X_i$ in $C$ in $G_1$. In this case, $G_1$ is distribution equivalent to $G_2$. (See Figure \ref{fig:prop5} for an example.)
\end{proposition}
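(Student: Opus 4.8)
The plan is to apply Theorem~\ref{thm:mainrot2}. I will show that, at the level of support matrices, $\xi_2$ is obtained from $\xi_1$ merely by a cyclic permutation of the columns indexed by the vertices of the cycle $C$. Since any permutation of the columns of a support matrix can be realized by a sequence of column swaps (case~3 of Proposition~\ref{prop:effect}) --- one of the three operations allowed in Theorem~\ref{thm:mainrot2} --- this supplies a sequence carrying $\xi_1$ onto $\xi_2$, and the same argument applied to the inverse permutation carries $\xi_2$ onto $\xi_1$; Theorem~\ref{thm:mainrot2} then yields $G_1\equiv G_2$.

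To set up, I would label the vertices of $C$ so that $C$ is $X_1\to X_2\to\cdots\to X_m\to X_1$, with all index arithmetic modulo $m$. For $i\in C$ let $P_i\subseteq[p]$ be the set of $X_j$ with $X_j\to X_i$ in $G_1$ along an edge not in $C$ (so $P_i$ contains the non-cycle parents of $X_i$, whether inside or outside $C$). In $G_1$ the only $C$-edge into $X_i$ comes from its predecessor $X_{i-1}$, so the parents of $X_i$ are $\{X_{i-1}\}\cup P_i$ and column $i$ of $Q_{G_1}$ has support $\{i\}\cup\{i-1\}\cup P_i$; note $i,i-1\notin P_i$ since $X_i\to X_i$ is a (forbidden) self-loop and $X_{i-1}\to X_i$ is a $C$-edge. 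For a vertex $X_u$ with $u\notin C$, neither reversing $C$ nor rule~(2) changes which edges point into $X_u$ (rule~(2) only redirects edge heads among vertices of $C$, and reversing $C$ only touches edges among $C$), so column $u$ is the same in $\xi_1$ and $\xi_2$.

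Now I would read off the cycle columns of $\xi_2$. Reversing $C$ makes $X_{i+1}$ the unique $C$-parent of $X_i$ in $G_2$; and by rule~(2) every non-$C$ edge $X_j\to X_{i+1}$ of $G_1$ becomes, in $G_2$, the edge $X_j\to X_i$, because $X_i$ is the predecessor of $X_{i+1}$ along $C$. Hence the parents of $X_i$ in $G_2$ are exactly $\{X_{i+1}\}\cup P_{i+1}$, so column $i$ of $\xi_2$ has support $\{i\}\cup\{i+1\}\cup P_{i+1}$, which is literally the support of column $i{+}1$ of $\xi_1$, namely $\{i{+}1\}\cup\{i\}\cup P_{i+1}$. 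Thus $\xi_2$ is $\xi_1$ with its $C$-columns cyclically shifted (column $i{+}1$ of $\xi_1$ becoming column $i$ of $\xi_2$) and nothing else changed. It then remains to realize this column permutation and its inverse by column swaps: write each as a product of transpositions and perform them one at a time, skipping a transposition whenever the two columns it would exchange already coincide (a genuine no-op on $\xi$, so the end result is unaffected) and otherwise choosing a row $i$ where the two columns differ and applying $A(i,\cdot,\cdot)$ with the columns ordered so that the $\times$ entry sits in the first one, which triggers case~3 of Proposition~\ref{prop:effect} with no side effect.

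The only delicate part is the edge accounting in the previous paragraph: one must check that ``reroute to the predecessor'' is exactly the rule that makes column $i$ of $\xi_2$ coincide \emph{on the nose} with column $i{+}1$ of $\xi_1$ --- a different reroute would leave a one-entry discrepancy and force irreversible acute rotations --- and that nothing degenerates in corner cases, such as chords between two cycle vertices (which are rerouted just like the other edges into $C$) or a $2$-cycle (where the transformation collapses to a single column swap). As an independent cross-check one can avoid Theorem~\ref{thm:mainrot2} altogether: given parameters $(B,\Omega)$ of $G_1$, solving the structural equation of $X_{i+1}$ for $X_i$ (its coefficient $B_{i,i+1}$ on $X_i$ is nonzero) expresses $X_i$ in terms of $X_{i+1}$, the $X_j\in P_{i+1}$, and $N_{i+1}$, which reads off parameters $(B',\Omega')$ supported on $G_2$ with the same precision matrix --- the new noise attached to $X_i$ is $-N_{i+1}/B_{i,i+1}$, a rescaled re-indexing of the old noises and hence still jointly independent --- and since the construction $G_1\mapsto G_2$ is an involution, equality of the distribution sets follows by symmetry.
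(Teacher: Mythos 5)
Your proof takes essentially the same route as the paper's: both observe that the construction of $G_2$ makes $\xi_2$ a cyclic permutation of the columns of $\xi_1$ indexed by $C$, and both realize that permutation (and its inverse) by a chain of column swaps via case~3 of Proposition~\ref{prop:effect}, skipping swaps of identical columns. Your version is in fact somewhat more careful than the paper's, since you verify explicitly that the ``reroute to the predecessor'' rule makes column $i$ of $\xi_2$ coincide exactly with column $i{+}1$ of $\xi_1$, a bookkeeping step the paper leaves implicit.
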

\citet{richardson1996polynomial} presented a result similar to Proposition \ref{prop:dir} for the case of using CI relationships in the data and concluded that ``it is impossible to orient a cycle merely using CI information.'' Proposition \ref{prop:dir} extends that result by concluding that it is impossible to orient a cycle merely using observational data.

The following proposition provides a necessary and sufficient condition for equivalence for a specific class of DGs.
\begin{proposition}
\label{prop:col}
Consider DGs $G_1$ and $G_2$ with support matrices $\xi_1$ and $\xi_2$, respectively. If every pair of columns of $\xi_1$ differ in more than one entry, then $G_1\equiv G_2$ if and only if the columns of $\xi_2$ are a permutation of columns of $\xi_1$. 
\end{proposition}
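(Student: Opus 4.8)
The plan is to apply the rotation-based equivalence test of Theorem~\ref{thm:mainrot2} and exploit the hypothesis to determine exactly which support operations can ever be performed.

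\textbf{Sufficiency.} If the columns of $\xi_2$ are a permutation of those of $\xi_1$, then $\xi_1$ can be transformed into $\xi_2$, and vice versa, using only column swaps: process the columns from left to right as in selection sort; whenever two columns must be exchanged they are necessarily distinct (two identical columns are never exchanged), and for any row $i$ in which columns $j$ and $k$ differ, one of $A(i,j,k)$ or $A(i,k,j)$ swaps them by the third case of Proposition~\ref{prop:effect}. Theorem~\ref{thm:mainrot2} then yields $G_1\equiv G_2$. This direction uses nothing about column separation.

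\textbf{Necessity, step 1.} Suppose $G_1\equiv G_2$, and let $S$ be a sequence of reductions, reversible acute rotations, and column swaps taking $\xi_1$ to some $\xi'\subseteq\xi_2$ (Theorem~\ref{thm:mainrot2}). A reduction can be applied only to a pair of identical columns, and a reversible acute rotation only to a pair of columns differing in exactly one entry, whereas a column swap does not alter the multiset of columns. Since ``every pair of columns differs in more than one entry'' is a property of that multiset and hence invariant under column swaps, no reduction or reversible acute rotation is ever enabled anywhere along $S$. Therefore $S$ consists only of column swaps, $\xi'$ is a column permutation of $\xi_1$, and $\xi'\subseteq\xi_2$; matching columns through this permutation, every column of $\xi_1$ is contained in the corresponding column of $\xi_2$, so in particular $|\xi_1|\le|\xi_2|$.

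\textbf{Necessity, step 2 (the main obstacle).} It remains to prove $|\xi_1|=|\xi_2|$, since together with the columnwise containment this forces the matched columns to coincide, which is exactly the asserted permutation. I would use the other sequence promised by Theorem~\ref{thm:mainrot2}, a sequence $T$ taking $\xi_2$ to some $\xi''\subseteq\xi_1$: column swaps and reversible acute rotations preserve the number of nonzero entries while each reduction removes exactly one, so $T$ must contain at least $|\xi_2|-|\xi_1|$ reductions. The plan is to rule these out by tracking the permuted copy of $\xi_1$ that sits inside the running support matrix --- column swaps merely relocate it, and a reversible acute rotation on two columns deletes one entry of the first but restores an entry in each, so the copy can be re-embedded --- and thereby show that no two columns ever become identical, no reduction is ever enabled, $T$ is all column swaps, and $|\xi_2|\le|\xi_1|$. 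This is the delicate point: $\xi_2$ need not satisfy the column-separation hypothesis, so the clean argument applied to $\xi_1$ does not transfer, and one must certify that a support matrix merely \emph{containing} a well-separated copy of $\xi_1$ can never be driven, by column swaps and reversible acute rotations, to a state with two equal columns (or two columns differing in a single entry). An alternative that may avoid this is to pass to a representative $G^\star\equiv G_1$ with the fewest edges: two identical columns in $\xi^\star$ would allow a reduction and contradict minimality, and the argument of step~1 embeds a permuted copy of $\xi_1$ in $\xi^\star$, forcing $|\xi^\star|=|\xi_1|$, from which one would conclude that every structure equivalent to $G_1$ has exactly $|\xi_1|$ edges.
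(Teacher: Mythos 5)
Your sufficiency argument and your ``Necessity, step 1'' are exactly the paper's proof: column swaps realize any permutation of columns in one direction, and in the other direction the column-separation property of $\xi_1$ is preserved by column swaps and forbids every reduction and reversible acute rotation, so the sequence out of $\xi_1$ promised by Theorem~\ref{thm:mainrot2} consists of column swaps only and yields $\sigma(\xi_1)\subseteq\xi_2$ for some column permutation $\sigma$. Up to that point the proposal is correct and faithful to the paper (you even add a useful detail the paper omits, namely that any two distinct columns admit a genuine case-3 swap via Proposition~\ref{prop:effect}).

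The problem is that your ``step 2'' is not a proof but an acknowledged plan, and neither of your two proposed routes is carried out. The first (tracking an embedded, permuted copy of $\xi_1$ through the sequence out of $\xi_2$ and showing no two columns ever become identical) is precisely the delicate claim that needs proving, and you explicitly leave it uncertified. The second (passing to a minimum-edge representative $G^\star$) establishes only that $|\xi^\star|=|\xi_1|$ and that $\xi^\star$ is a column permutation of $\xi_1$; it does not yield the bound you actually need, $|\xi_2|\le|\xi_1|$, because minimality constrains $G^\star$ and says nothing about how many edges a \emph{non-minimal} member of the class such as $G_2$ may carry (compare $G_1$ and $G_4$ in Example~\ref{ex:1}, which are equivalent with different edge counts). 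So the final sentence of your step 2 --- ``every structure equivalent to $G_1$ has exactly $|\xi_1|$ edges'' --- does not follow from anything you have established; it is essentially a restatement of what remains to be shown. You should know that you have correctly located the load-bearing step: the paper's own necessity argument also only shows that the sequence out of $\xi_1$ is all column swaps and then asserts the permutation conclusion, without separately ruling out reductions in the sequence out of $\xi_2$. But identifying the gap is not the same as closing it, and as written your proposal does not prove the necessity direction.
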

\begin{figure}[t]
\begin{center}
\includegraphics[scale=0.41]{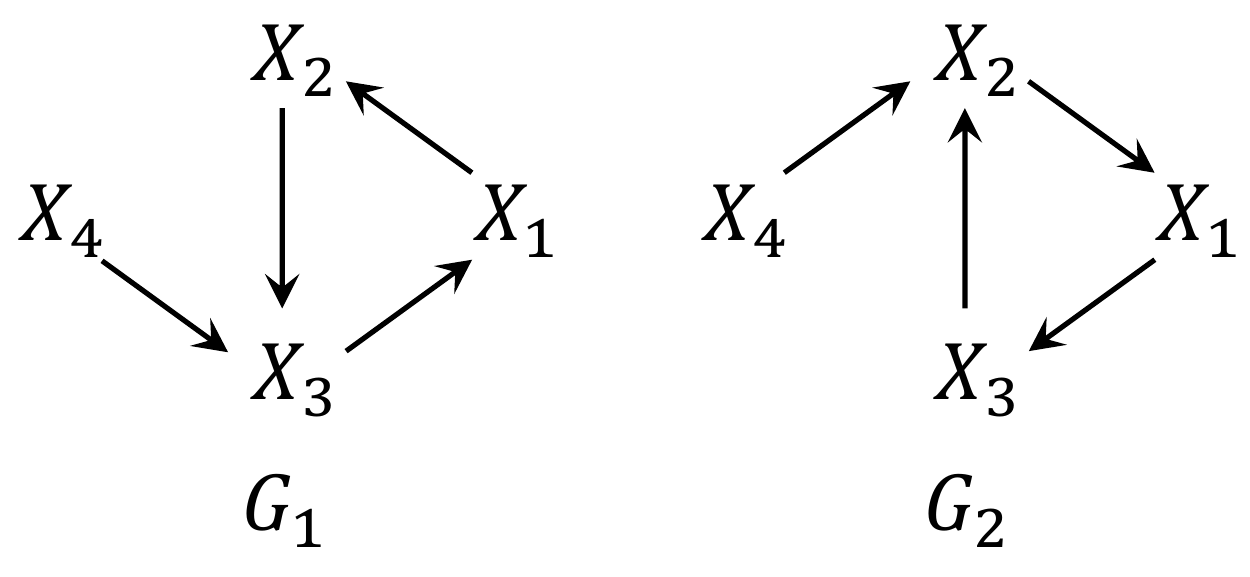}		
\end{center}	
\vspace{-2.5mm}
\caption{Example related to Proposition \ref{prop:dir}.}
\label{fig:prop5}
\end{figure}
\begin{example}
\label{ex:1}
In Figure \ref{fig:ex1}, (a) $G_1\equiv G_2$, (b) $G_1\not\equiv G_3$, and (c) $G_1\equiv G_4$.\\
(a) shows that unlike DAGs, equivalent DGs do not need to have the same skeleton or the same v-structures. 
To see $G_1\equiv G_2$, we note that
\scriptsize
\begin{align*}	
\xi_1=
\begin{bmatrix}
    \times\!\!\!\!\! & \times\!\!\!\!\! & \times \\
    0\!\!\!\!\! & \times\!\!\!\!\! & 0 \\
    0\!\!\!\!\! & \times\!\!\!\!\! & \times
\end{bmatrix}
\underrightarrow{~~A(1,3,1)~~}
\begin{bmatrix}
    \times\!\!\!\!\! & \times\!\!\!\!\! & 0 \\
    0\!\!\!\!\! & \times\!\!\!\!\! & 0 \\
    \times\!\!\!\!\! & \times\!\!\!\!\! & \times
\end{bmatrix}
\underrightarrow{~~A(3,1,2)~~}
\begin{bmatrix}
	\times\!\!\!\!\! & \times\!\!\!\!\! & 0 \\
    \times\!\!\!\!\! & \times\!\!\!\!\! & 0 \\
    0\!\!\!\!\! & \times\!\!\!\!\! & \times
\end{bmatrix}
\subseteq\xi_2.
\end{align*}
\begin{align*}	
\xi_2=
\begin{bmatrix}
    \times\!\!\!\!\! & \times\!\!\!\!\! & 0 \\
    \times\!\!\!\!\! & \times\!\!\!\!\! & 0 \\
    0\!\!\!\!\! & \times\!\!\!\!\! & \times
\end{bmatrix}
\underrightarrow{~~A(2,1,2)~~}
\begin{bmatrix}
    \times\!\!\!\!\! & \times\!\!\!\!\! & 0 \\
    0\!\!\!\!\! & \times\!\!\!\!\! & 0 \\
    \times\!\!\!\!\! & \times\!\!\!\!\! & \times
\end{bmatrix}
\underrightarrow{~~A(3,1,3)~~}
\begin{bmatrix}
    \times\!\!\!\!\! & \times\!\!\!\!\! & \times \\
    0\!\!\!\!\! & \times\!\!\!\!\! & 0 \\
    0\!\!\!\!\! & \times\!\!\!\!\! & \times
\end{bmatrix}
\subseteq\xi_1.
\end{align*}
\normalsize
$(b)$ follows from Proposition \ref{prop:col} since each pair of columns of $\xi_3$ differ in more than one entry. For $(c)$, we already have $\xi_1\subseteq\xi_4$. For the other direction,
\scriptsize
\begin{align*}	
\xi_4=
\begin{bmatrix}
    \times\!\!\!\!\! & \times\!\!\!\!\! & \times \\
    \times\!\!\!\!\! & \times\!\!\!\!\! & 0 \\
    0\!\!\!\!\! & \times\!\!\!\!\! & \times
\end{bmatrix}
\underrightarrow{~~A(2,1,2)~~}
\begin{bmatrix}
    \times\!\!\!\!\! & \times\!\!\!\!\! & \times \\
    0\!\!\!\!\! & \times\!\!\!\!\! & 0 \\
    \times\!\!\!\!\! & \times\!\!\!\!\! & \times
\end{bmatrix}
\underrightarrow{~~A(3,1,3)~~}
\begin{bmatrix}
    \times\!\!\!\!\! & \times\!\!\!\!\! & \times \\
    0\!\!\!\!\! & \times\!\!\!\!\! & 0 \\
    0\!\!\!\!\! & \times\!\!\!\!\! & \times
\end{bmatrix}
\subseteq\xi_1.
\end{align*}
\normalsize
\end{example}
As seen in Example \ref{ex:1}, structures $G_1$ and $G_4$ in Figure \ref{fig:ex1} are distribution equivalent. Therefore, the extra edge $X_2\rightarrow X_1$ in $G_4$ does not enable this structure to generate any additional distributions. In this case, we say structure $G_4$ is reducible. This idea is formalized as follows.
\begin{definition}[Reducibility]
	DG $G$ is reducible if there exists $G'$ such that $G\equiv G'$ and $E(G')\subset E(G)$. In this case, we say edges in $E(G)\setminus E(G')$ are reducible, and $G$ is reducible to $G'$.
\end{definition}
\begin{proposition}
\label{prop:red}
DG $G$ with support matrix $\xi$ is reducible if and only if there exists a sequence of reversible acute rotations that enables us to apply a reduction to $\xi$.	
\end{proposition}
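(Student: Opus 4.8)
\textbf{Proof plan for Proposition \ref{prop:red}.}

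The plan is to prove both directions directly from Theorem \ref{thm:mainrot2}. For the ``only if'' direction, suppose $G$ is reducible to $G'$ with $E(G')\subset E(G)$, so $\xi'\subsetneq\xi$ and $G\equiv G'$. By Theorem \ref{thm:mainrot2} applied to the pair $(G,G')$, there is a sequence of reductions, reversible acute rotations, and column swaps mapping $\xi$ to a subset of $\xi'$. First I would argue that column swaps can be assumed to be absent (or, equivalently, deferred): since $\xi'\subseteq\xi$ has a strict containment of supports and the final matrix must be contained in $\xi'$, and since the columns of $G$ and $G'$ are indexed by the same variable set, any permutation in the final matrix would have to be the identity on the diagonal pattern, which (because $Q_G$ has $1$s on the diagonal) forces the permutation to be trivial; so without loss of generality the sequence uses only reductions and reversible acute rotations. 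Then, because the target support is strictly smaller than $\xi$, at least one reduction must occur in the sequence. Truncating the sequence right before the first reduction gives a (possibly empty) sequence of reversible acute rotations after which a reduction is applicable, which is exactly the claimed condition. The one subtlety to check here is that reversible acute rotations preserve $|\supp|$ (they move a $\times$ from one column to create a matching $\times$ in the other, net change zero), so the first support-decreasing step in the sequence is necessarily a reduction and not a column swap, confirming that the prefix before it consists only of reversible acute rotations.

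For the ``if'' direction, suppose there is a sequence $\sigma$ of reversible acute rotations taking $\xi$ to some $\tilde\xi$ to which a reduction $A(i,j,k)$ applies, producing $\xi''\subsetneq\tilde\xi$ with $|\supp(\xi'')|=|\supp(\xi)|-1$. Let $G'$ be the DG whose support matrix is obtained from $\xi''$ by undoing $\sigma$ (apply the reverse reversible acute rotations in reverse order); call this support matrix $\xi'$. Since each reversible acute rotation and the reduction preserve the number of diagonal $\times$'s and total $|\supp|$ is decreased by exactly one overall, we get $|\supp(\xi')|=|\supp(\xi)|-1$, so $E(G')\subset E(G)$ provided we verify $\supp(\xi')\subseteq\supp(\xi)$; this needs a short argument, discussed below. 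It remains to show $G\equiv G'$. In one direction, $\xi \xrightarrow{\sigma} \tilde\xi \xrightarrow{A(i,j,k)} \xi'' $ and then the reversible acute rotations that map $\xi''$ back to $\xi'$ form a valid sequence of reductions/reversible acute rotations from $\xi$ into $\xi'$ (indeed into $\xi'$ exactly), satisfying one half of the Theorem \ref{thm:mainrot2} criterion. In the other direction, since $\xi'\subseteq\xi$ we trivially have the empty sequence mapping $\xi'$ to a subset of $\xi$. By Theorem \ref{thm:mainrot2}, $G\equiv G'$, so $G$ is reducible.

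The main obstacle I anticipate is the bookkeeping around containment, namely verifying that $\xi'$ (the pullback of the post-reduction matrix through $\sigma$) actually satisfies $\supp(\xi')\subseteq\supp(\xi)$ and represents a genuine subgraph rather than merely a matrix with the same support count. The cleanest way to handle this is to work with the reversibility explicitly: a reversible acute rotation $A(i,j,k)$ with $\tilde\xi_{i',j}=0$ is undone by $A(i',j,k)$, and composing these inverses is itself a legal sequence of reversible acute rotations; applying this inverse sequence to $\xi''$ and noting $\xi''\subseteq\tilde\xi$ while reversible acute rotations are monotone with respect to $\subseteq$ (a point worth stating as a small lemma, following from Proposition \ref{prop:effect}) yields $\xi' = (\text{inverse }\sigma)(\xi'') \subseteq (\text{inverse }\sigma)(\tilde\xi) = \xi$. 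Once this monotonicity observation is in place, both directions follow mechanically from Theorem \ref{thm:mainrot2}, and the role of column swaps is seen to be vacuous because reducibility concerns a fixed labeling of the vertices.
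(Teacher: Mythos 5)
Your ``only if'' direction follows the paper's route and is essentially sound: a strict drop in support forces at least one reduction somewhere in the Theorem~\ref{thm:mainrot2} sequence, and the prefix before the first reduction can be taken to consist of reversible acute rotations once column swaps are set aside. (Your specific justification for discarding column swaps --- that a nonidentity permutation would destroy the diagonal pattern --- is not correct, since a permutation corresponding to reversing a cycle of $G$ preserves the nonzero diagonal; the correct reason, and the paper's, is simply that the positions of the columns do not affect whether a reduction or reversible acute rotation is applicable, so swaps can be deferred or ignored for the purpose of reducibility.)

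The ``if'' direction has two genuine gaps. First, you never rule out the case in which the enabled reduction zeroes a \emph{diagonal} entry $\xi_{j,j}$. In that case $\xi''$, and hence your pullback $\xi'$, has a zero on its diagonal and is not the support matrix of any DG, so the candidate $G'$ you construct does not exist. The paper devotes most of its proof of this direction to precisely this case: it shows the new zero can be relocated off the diagonal by a further reversible acute rotation, and that $(j,j)$ cannot be the only nonzero entry of column $j$ (via an iterated argument terminating in a contradiction with the diagonal of the last column), so such a relocation is always available. Second, your conclusion that undoing $\sigma$ yields $\xi'\subsetneq\xi$ is not secured by the monotonicity observation alone: the \emph{type} of a support rotation depends on the support pattern it is applied to, so an inverse step that was a reversible acute rotation on the larger matrix may, when applied to $\xi''$ (where columns $j$ and $k$ now differ additionally at row $i$), become an irreversible acute rotation that refills the entry $(i,j)$ you just zeroed, leaving $\xi'=\xi$ and no strict subgraph. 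The paper sidesteps this by normalizing the preparatory sequence so that it only moves the zeros of column $k$ (using auxiliary columns) and never modifies column $j$; undoing such a sequence cannot disturb the new zero. Your argument needs either this normalization or a direct proof that the zero at $(i,j)$ survives the inversion.
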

Proposition \ref{prop:red} implies the following necessary condition for reducibility.
\begin{proposition}
\label{prop:2cycle}
A DG with no 2-cycles is {\it irreducible}.
\end{proposition}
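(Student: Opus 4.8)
The plan is to prove the contrapositive: if $G$ is reducible, then $G$ must contain a 2-cycle. By Proposition \ref{prop:red}, reducibility means there is a sequence of reversible acute rotations $A(i_1,j_1,k_1),\dots,A(i_m,j_m,k_m)$ applied to $\xi$, followed by a reduction $A(i,j,k)$. First I would recall what a reversible acute rotation requires: $\xi_{i_t,j_t}=\xi_{i_t,k_t}=\times$, and columns $j_t$ and $k_t$ of the current support matrix differ in exactly one row $i_t'$ with $\xi_{i_t',j_t}=0$ (so that after the rotation $\xi_{i_t',k_t}$ also becomes $\times$, and it is reversible since $\xi_{i_t,j_t}$ becomes $0$). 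So each such step preserves the total number of $\times$ entries, merely moving one $\times$ from position $(i_t,j_t)$ to position $(i_t',k_t)$. The final reduction then removes a single $\times$ entry, which is possible only when the reduction step $A(i,j,k)$ has $\xi_{i,j}=\xi_{i,k}=\times$ and columns $j$ and $k$ agree on every other row.

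Next I would track the diagonal. The key structural fact about a support matrix of a DG with no 2-cycles is that the diagonal is all $\times$ (from the $I$ in $Q_G=I+B_G$) and, crucially, that it has a kind of triangular-like obstruction to column operations. Actually the cleaner approach is to argue directly about what column operations can do. Observe that $\xi$ (the support of $I+B_G$) has $\times$ on the diagonal, and no 2-cycles means: for $i\ne j$, not both $\xi_{i,j}$ and $\xi_{j,i}$ are $\times$ (where here $\xi_{i,j}=\times$ corresponds to edge $X_j\to X_i$ or $i=j$; one must be careful with the indexing convention but the "no 2-cycle" statement is symmetric). So for each off-diagonal pair of positions $(i,j)$ and $(j,i)$ at most one is $\times$.

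The heart of the argument: I claim that under a reversible acute rotation $A(i,j,k)$, if the "no 2-cycle" property holds before, it still holds after, OR we can detect that it never held. More precisely, I would show by an invariant argument that no sequence of reversible acute rotations starting from a 2-cycle-free support matrix can ever reach a support matrix to which a reduction applies. The reduction $A(i,j,k)$ needs columns $j$ and $k$ to agree on all rows except row $i$, and both $\xi_{i,j},\xi_{i,k}=\times$. In particular $\xi_{j,j}=\times$ forces $\xi_{j,k}=\times$ (unless $j=i$), and $\xi_{k,k}=\times$ forces $\xi_{k,j}=\times$ (unless $k=i$); if neither $j$ nor $k$ equals $i$ this gives a 2-cycle between $j$ and $k$ immediately in the reduced matrix — so I would need to show the reversible acute rotations cannot create such a configuration, i.e. that a suitable "no 2-cycle on non-pivot coordinates" invariant is maintained. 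The case analysis where $i\in\{j,k\}$ needs separate handling: if $i=j$, the reduction needs columns $j,k$ to agree off row $j$; then $\xi_{k,k}=\times$ forces $\xi_{k,j}=\times$, and combined with... I would need the diagonal entry $\xi_{j,j}$; since $\xi_{j,k}$ (the off-diagonal in column $k$, row $j$) may be $0$ or $\times$, a finer look is required. The main obstacle I anticipate is exactly this bookkeeping: setting up the right invariant on the support matrix that (a) holds for 2-cycle-free DGs, (b) is preserved by reversible acute rotations and column swaps, and (c) is incompatible with the precondition of a reduction. I expect the cleanest formulation uses the fact that reversible acute rotations preserve the property "the support matrix equals $I+M$ for some $M$ whose associated digraph has no 2-cycle," possibly after accounting for column relabelings, and then observing a reduction would have to collapse two such columns into near-equality, which would manifest a 2-cycle. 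Once that invariant is nailed down, the proof is short.
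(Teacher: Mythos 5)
Your overall strategy (contrapositive via Proposition \ref{prop:red}, tracking what a reduction requires) matches the paper's, and your observation that the support matrix \emph{at the moment of the reduction} must contain a 2-cycle between columns $j$ and $k$ is correct --- in fact it is cleaner than you make it: since a reduction requires $\xi_{i,j}=\xi_{i,k}=\times$ and $\xi_{l,j}=\xi_{l,k}$ for all $l\neq i$, the two columns agree on \emph{every} row, so $\xi_{j,k}=\xi_{j,j}=\times$ and $\xi_{k,j}=\xi_{k,k}=\times$ regardless of whether $i\in\{j,k\}$, and the case split you worry about is unnecessary. The genuine gap is the bridge from that observation back to the \emph{original} support matrix, and the invariant you propose to build that bridge is false: reversible acute rotations do \emph{not} preserve the no-2-cycle property, even up to column relabelling. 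The paper's own Example \ref{ex:1}(a) is a counterexample: $G_1$ there has no 2-cycles, yet the two reversible acute rotations $A(1,3,1)$ and $A(3,1,2)$ map $\xi_1$ to $\xi_2$, whose graph contains a 2-cycle between $X_1$ and $X_2$. So condition (b) of your plan --- preservation of the invariant under reversible acute rotations --- cannot hold in the form you state it, and the step ``once that invariant is nailed down, the proof is short'' never gets off the ground.

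What the paper does instead is a direct structural analysis rather than an invariance argument: it examines the \emph{first} reversible acute rotation touching column $k$ (performed against some auxiliary column $t_1$), shows that avoiding a 2-cycle between $t_1$ and $k$ in the original matrix forces $\xi_{t_1,k}=0$, $\xi_{k,t_1}=\times$, and then $\xi_{j,t_1}=\xi_{j,k}=0$, and finally shows that the further rotations needed to finish aligning column $k$ with column $j$ require a second auxiliary column $t_2$ that must already form a 2-cycle with $k$ in the original $\xi$. In other words, the 2-cycle need not appear between $j$ and $k$ themselves; it is smoked out among the helper columns used to rearrange the zero patterns. If you want to salvage an invariant-style proof you would need a far more delicate quantity than ``no 2-cycles'' --- one that distinguishes the 2-cycles that reversible acute rotations legitimately create (as in Example \ref{ex:1}(a)) from the column configurations needed to enable a reduction --- and your proposal does not identify one.
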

A 2-cycle is a cycle over only two variables, such as the cycle over $X_1$ and $X_2$ in $G_2$ in Figure \ref{fig:ex1}.
Propositions \ref{prop:red} and \ref{prop:2cycle} lead to the following corollary regarding equivalence for DAGs, which bridges our proposed approach with the classic characterization for equivalence of DAGs.
\begin{corollary}
\label{cor:DAG}
DAGs $G_1$ and $G_2$ with support matrices $\xi_1$ and $\xi_2$ are equivalent if and only if 
there exists a sequence of reversible acute rotations and column swaps that maps $\xi_1$ to a subset of $\xi_2$, and one that maps $\xi_2$ to a subset of $\xi_1$.
\end{corollary}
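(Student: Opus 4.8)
The plan is to derive the corollary directly from Theorem \ref{thm:mainrot2} together with Proposition \ref{prop:2cycle}. Theorem \ref{thm:mainrot2} already tells us that $G_1 \equiv G_2$ if and only if there is a sequence of reductions, reversible acute rotations, and column swaps taking $\xi_1$ into a subset of $\xi_2$, and symmetrically one taking $\xi_2$ into a subset of $\xi_1$. So the only thing that needs to be established, in the special case where $G_1$ and $G_2$ are DAGs, is that the reduction operations can be removed from these sequences without loss of generality. The natural claim is: whenever a sequence of the three allowed operations maps $\xi_1$ into a subset of $\xi_2$, there is another such sequence that uses \emph{no reductions} and still maps $\xi_1$ into a subset of $\xi_2$.

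The key observation driving this is that a DAG is irreducible by Proposition \ref{prop:2cycle} (a DAG has no 2-cycles), and by Proposition \ref{prop:red} irreducibility means there is no sequence of reversible acute rotations that exposes a reduction. I would argue as follows. Suppose we have a valid sequence $\sigma$ of operations mapping $\xi_1$ into a subset of $\xi_2$ and suppose $\sigma$ contains at least one reduction; consider the \emph{first} reduction in $\sigma$, occurring after a prefix consisting only of reversible acute rotations and column swaps. Column swaps are just relabelings of columns and commute with everything up to permutation, so we may push them to the end of the prefix and assume the prefix before the first reduction consists purely of reversible acute rotations applied to $\xi_1$ (up to a fixed column permutation). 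But then Proposition \ref{prop:red} says precisely that $G_1$ — or rather the DAG obtained from $G_1$ by the column permutation, which is still a DAG — would be reducible, contradicting Proposition \ref{prop:2cycle}. Hence $\sigma$ contains no reductions at all, and the sequence already has the desired form. The reverse implication — that any sequence of only reversible acute rotations and column swaps mapping $\xi_1$ into a subset of $\xi_2$, plus one the other way, certifies $G_1 \equiv G_2$ — is immediate from Theorem \ref{thm:mainrot2}, since such a sequence is a special case of the sequences allowed there.

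The main obstacle I anticipate is the bookkeeping around column swaps and the precise meaning of ``the prefix consists only of reversible acute rotations.'' A reversible acute rotation $A(i,j,k)$ is defined relative to the current support matrix, and after a column swap the roles of the column indices change; I need to check that commuting a column swap past a reversible acute rotation again yields a reversible acute rotation (with relabeled indices) and that it does not turn a non-reduction step into a reduction or vice versa. This is true because the case analysis in Proposition \ref{prop:effect} and the four-case partition depend only on the \emph{pattern} of $\times$'s in the two affected columns, which is preserved under relabeling. A second subtlety is that ``reducible'' in Definition~\ref{def:eq} and Proposition~\ref{prop:red} is about reducing $G_1$ itself, whereas after a column permutation we are looking at a relabeled graph; but a column permutation of the support matrix of a DAG is the support matrix of the DAG with relabeled vertices, which is still acyclic, so Proposition \ref{prop:2cycle} still applies. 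Once these two points are nailed down, the argument is a short deduction from the three cited results with no further computation.
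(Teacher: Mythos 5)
Your proposal is correct and follows essentially the same route as the paper: DAGs have no 2-cycles, hence are irreducible by Proposition \ref{prop:2cycle}, and by Proposition \ref{prop:red} no sequence of reversible acute rotations can ever make a reduction applicable, so the sequences guaranteed by Theorem \ref{thm:mainrot2} consist only of reversible acute rotations and column swaps (the paper packages this as an intermediate corollary for irreducible DGs). One caveat in your bookkeeping: the claim that a column permutation of a DAG's support matrix is the support matrix of a relabeled DAG is false --- relabeling vertices permutes rows and columns simultaneously, whereas a column-only swap moves diagonal entries off the diagonal, so the permuted matrix need not be graphically representable and Propositions \ref{prop:red} and \ref{prop:2cycle} cannot be applied to it directly. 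The correct justification, which you in fact also state, is that the feasibility of a reduction or reversible acute rotation depends only on the $\times$-patterns of the two affected columns and not on their positions, so the column swaps in the prefix can simply be deleted (re-indexing the later operations) and Proposition \ref{prop:red} invoked on $\xi_1$ itself.
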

\begin{figure}[t]
\centering
\includegraphics[scale=0.41]{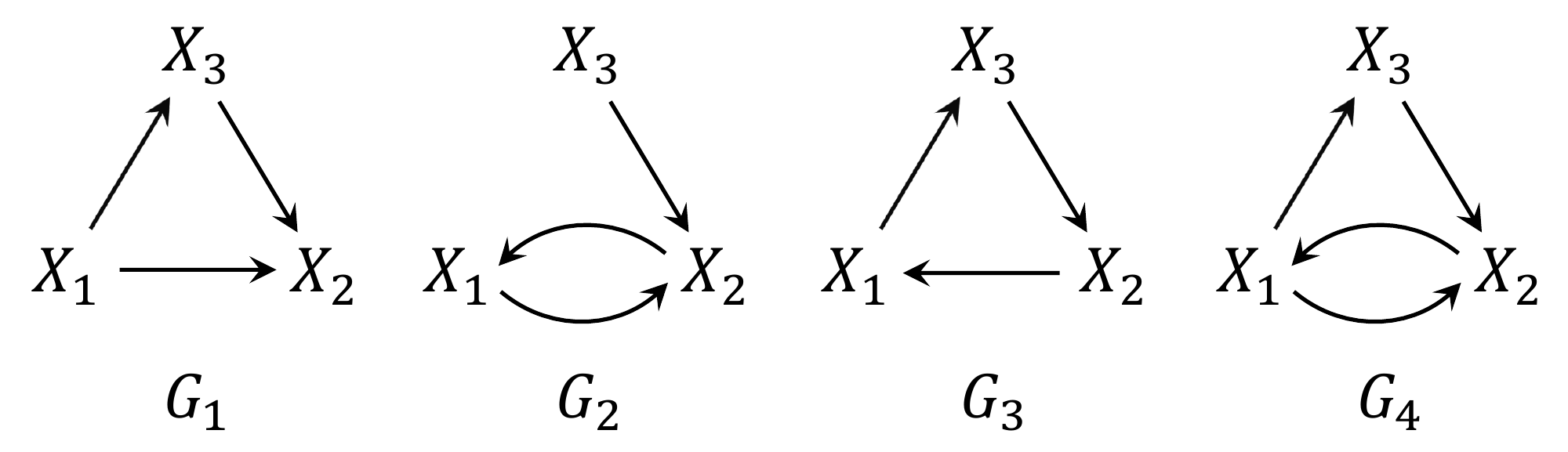}	
\vspace{-5mm}
\caption{DGs related to Example \ref{ex:1}.}
\label{fig:ex1}
\end{figure}

\begin{figure*}[t]
\begin{centering}
\includegraphics[scale=0.43]{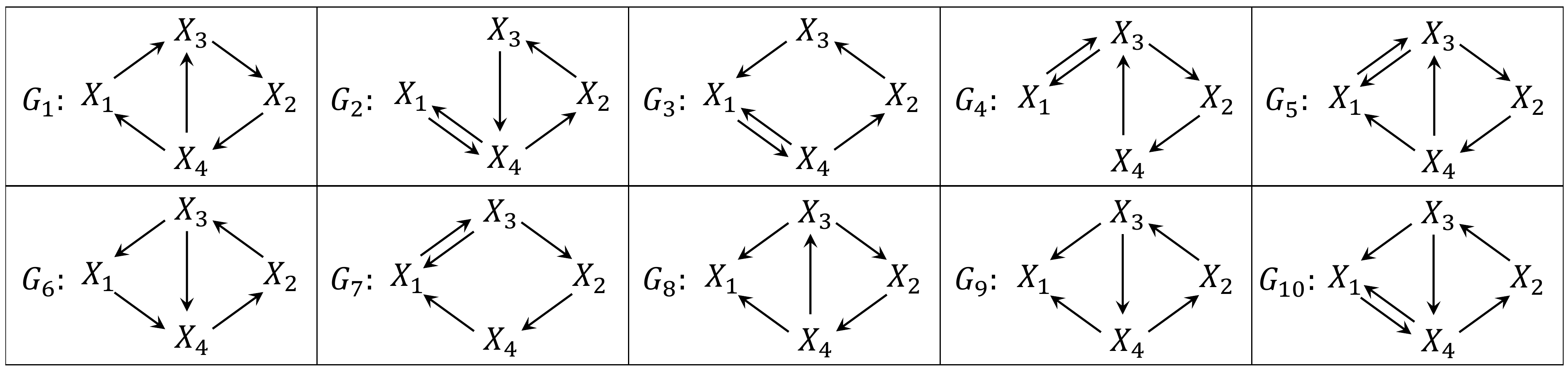}	
\caption{Elements of a distribution equivalence class.}
\label{fig:fex}
\end{centering}
\end{figure*}


\begin{example}
\label{ex:chain}
We demonstrate our approach on a familiar equivalence example on DAGs:
Let $G_1:X_1\rightarrow X_2\rightarrow X_3$, $G_2:X_1\leftarrow X_2\leftarrow X_3$, and $G_3:X_1\rightarrow X_2\leftarrow X_3$.\\
(a) $G_1\equiv G_2$. (b) $G_1\not\equiv G_3$.

To see $G_1\equiv G_2$, we note that
\scriptsize
\begin{align*}	
\xi_1=
\begin{bmatrix}
    \times\!\!\!\!\! & \times\!\!\!\!\! & 0 \\
    0\!\!\!\!\! & \times\!\!\!\!\! & \times \\
    0\!\!\!\!\! & 0\!\!\!\!\! & \times
\end{bmatrix}
\underrightarrow{~~A(1,2,1)~~}
\begin{bmatrix}
    \times\!\!\!\!\! & 0\!\!\!\!\! & 0 \\
    \times\!\!\!\!\! & \times\!\!\!\!\! & \times \\
    0\!\!\!\!\! & 0\!\!\!\!\! & \times
\end{bmatrix}
\underrightarrow{~~A(2,3,2)~~}
\begin{bmatrix}
	\times\!\!\!\!\! & 0\!\!\!\!\! & 0 \\
    \times\!\!\!\!\! & \times\!\!\!\!\! & 0 \\
    0\!\!\!\!\! & \times\!\!\!\!\! & \times
\end{bmatrix}
\subseteq\xi_2.
\end{align*}
\begin{align*}	
\xi_2=
\begin{bmatrix}
    \times\!\!\!\!\! & 0\!\!\!\!\! & 0 \\
    \times\!\!\!\!\! & \times\!\!\!\!\! & 0 \\
    0\!\!\!\!\! & \times\!\!\!\!\! & \times
\end{bmatrix}
\underrightarrow{~~A(3,2,3)~~}
\begin{bmatrix}
    \times\!\!\!\!\! & 0\!\!\!\!\! & 0 \\
    \times\!\!\!\!\! & \times\!\!\!\!\! & \times \\
    0\!\!\!\!\! & 0\!\!\!\!\! & \times
\end{bmatrix}
\underrightarrow{~~A(2,1,2)~~}
\begin{bmatrix}
    \times\!\!\!\!\! & \times\!\!\!\!\! & 0 \\
    0\!\!\!\!\! & \times\!\!\!\!\! & \times \\
    0\!\!\!\!\! & 0\!\!\!\!\! & \times
\end{bmatrix}
\subseteq\xi_1.
\end{align*}
\normalsize
For the second part, we note that $\xi_3$ has two columns with two zeros, while $\xi_1$ has only one column with two zeros. Therefore, reversible acute rotations and column swaps cannot map $\xi_1$ to a subset of $\xi_3$. Therefore $G_1\not\equiv G_3$.
\end{example}


\section{Graphical Characterization of Equivalence}
\label{sec:testgraphical}

In this section, we present a graphical counterpart to Theorem \ref{thm:mainrot2} by providing graphical counterparts to the rotations required by that Theorem.
\begin{definition}
For vertices $X_1$ and $X_2$, let $P_1:=\textit{Pa}(X_1)\cup\{X_1\}$ and $P_2:=\textit{Pa}(X_2)\cup\{X_2\}$, where $\textit{Pa}(X)$ denotes the set of parents of vertex $X$. $X_1$ and $X_2$ are parent reducible if $P_1=P_2$ and parent exchangeable if $|P_1\triangle P_2|=1$, where $\triangle$ is the symmetric difference operator, which identifies elements which are only in one of the sets. 
\end{definition}
The three rotations in Theorem \ref{thm:mainrot2} lead to the following graphical operations:
\begin{itemize}
\item {\bf Parent reduction.} If $X_j$ and $X_k$ are parent reducible, any support rotation on columns $\xi_{\cdot,j}$ and $\xi_{\cdot,k}$ which zeros a non-zero entry on those columns except $\xi_{j,j}$ and $\xi_{k,k}$ removes the parent from $X_j$ or $X_k$ corresponding to the zeroed entry. We call this edge removal a parent reduction. The support rotation in this case is of reduction rotation type.
\item {\bf Parent exchange.} If $X_j$ and $X_k$ are parent exchangeable, by definition there exists $X_i$ such that $P_j\triangle P_k=\{X_{i}\}$. In this case, any support rotation on columns $\xi_{\cdot,j}$ and $\xi_{\cdot,k}$ which zeros a non-zero entry on those columns except $\xi_{j,j}$ and $\xi_{k,k}$ removes the parent from $X_j$ or $X_k$ corresponding to the zeroed entry. Additionally, the missing edge from $X_{i}$ to $X_j$ or $X_k$ is added. We call this a parent exchange. The support rotation in this case is of column swap or reversible acute rotation type.
\item {\bf Cycle reversion.} A cycle reversion swaps the column of each member of a cycle $C$ with the column corresponding to its preceder in the cycle. This reverses the direction of the cycle $C$ and changes any edge outside of $C$ connecting to an $X_i\in C$ in the original DG to point instead to the preceder of $X_i$ in $C$.
\end{itemize}
Note that in the graphical operations above, we exclude support rotations that lead to zeroing a diagonal entry, since they do not have a graphical representation (by Def. \ref{def:supp}).

Equipped with the graphical operations,  
we present a graphical counterpart
to Theorem \ref{thm:mainrot2}.
\begin{theorem}
\label{thm:graphical}
$G_1$ is distribution equivalent to $G_2$ if and only if
there exists a sequence of parent reductions, parent exchanges, and cycle reversions that maps $G_1$ to a subgraph of $G_2$, and a sequence that maps $G_2$ to a subgraph of $G_1$.
\end{theorem}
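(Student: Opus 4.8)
The plan is to derive Theorem~\ref{thm:graphical} as a direct graphical translation of Theorem~\ref{thm:mainrot2}, by showing that the three algebraic support operations in that theorem correspond exactly to the three graphical operations defined just above (parent reduction, parent exchange, cycle reversion), \emph{provided} we never zero a diagonal entry. So the first step is to argue that restricting to support rotations that preserve all diagonal $\times$ entries loses no generality for the equivalence test. This follows because $Q_G = I + B_G$ always has nonzero diagonal (the ``$1$s'' coming from $I - B$), so any target support $\xi_2$ we map into must itself have full diagonal; if at some intermediate stage we zeroed $\xi_{i,i}$, we would need to restore it later, and one checks (using Proposition~\ref{prop:effect}) that a column swap or reversible acute rotation suffices to keep diagonals alive throughout — in fact the cycle-reversion operation is precisely the bookkeeping device that permutes diagonal entries among the columns of a cycle so none is ever destroyed. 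I would state this as a lemma: any sequence of reductions, reversible acute rotations, and column swaps witnessing equivalence can be replaced by one in which (a) every reduction and every reversible acute rotation acts on a pair of columns $j,k$ with $\xi_{j,j}=\xi_{k,k}=\times$ and zeros an off-diagonal entry, and (b) the column swaps are grouped into cycle reversions.

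The second step is the dictionary. For a pair of columns $j, k$, the support pattern of column $\xi_{\cdot,\ell}$ (ignoring the diagonal $\times$ at position $\ell$) is exactly the indicator set $\textit{Pa}(X_\ell)$, while including the diagonal it is $P_\ell = \textit{Pa}(X_\ell)\cup\{X_\ell\}$. Now apply Proposition~\ref{prop:effect}/the four-case partition to columns $j,k$: a \emph{reduction} requires $\xi_{\cdot,j}$ and $\xi_{\cdot,k}$ to agree on every row other than the one being zeroed; combined with constraint (a) above ($\xi_{j,j}=\xi_{k,k}=\times$, and we are zeroing an off-diagonal entry $\xi_{i,j}$ with $i\notin\{j,k\}$), agreement on all other rows forces $P_j = P_k$, i.e. $X_j, X_k$ are parent reducible, and the effect is exactly deletion of the edge $X_i \to X_j$ — a parent reduction. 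A \emph{reversible acute rotation} on columns $j,k$ requires the two columns to differ in exactly one row $i'$; under constraint (a) this means $P_j \triangle P_k = \{X_{i'}\}$, i.e. $X_j, X_k$ are parent exchangeable, and the operation zeros some off-diagonal $\xi_{i,j}$ (deleting $X_i \to X_j$) while setting both $\xi_{i',j}$ and $\xi_{i',k}$ to $\times$ (adding the missing edge from $X_{i'}$): this is a parent exchange. A \emph{column swap} on $j,k$ requires $\xi_{i,j}=\times, \xi_{i,k}=0$, and after grouping (part (b)) a maximal coherent collection of swaps along a cycle is exactly a cycle reversion, whose effect on the support matrix — permuting the columns of the cycle members and redirecting incoming off-cycle edges to the preceder — is read off directly from case~3 of Proposition~\ref{prop:effect}. (Irreversible acute rotations are excluded by Theorem~\ref{thm:mainrot2} itself, so they need no graphical counterpart.) Conversely, each graphical operation is by construction realized by a support rotation of the stated type, so the two lists of operations generate the same reachable set of supports.

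Putting these together: ``$\xi_1$ maps to a subset of $\xi_2$ by reductions, reversible acute rotations, and column swaps'' is equivalent to ``$G_1$ maps to a subgraph of $G_2$ by parent reductions, parent exchanges, and cycle reversions'' — where ``subgraph'' corresponds to ``$\subseteq$'' of support matrices because the diagonal is always full on both sides. Applying this in both directions and invoking Theorem~\ref{thm:mainrot2} yields Theorem~\ref{thm:graphical}. I expect the main obstacle to be step one, the diagonal-preservation normalization: one must show carefully that whenever a witnessing sequence zeros a diagonal entry (or uses an acute rotation between columns whose ``off-diagonal patterns'' agree but whose full patterns do not, because one column's $\times$ at its own diagonal plays the role of the differing row), it can be rewritten — possibly reordering operations so that all column swaps are pushed to the end and reassembled into cycle reversions — without ever leaving the class of legal support matrices (those with full diagonal, which are the only ones representing DGs). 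The bookkeeping of \emph{which} cycle a given swap belongs to, and the claim that the swaps needed by an equivalence witness always organize into cycle reversions rather than arbitrary permutations, is the delicate combinatorial heart of the argument; the rest is a routine translation via Proposition~\ref{prop:effect}.
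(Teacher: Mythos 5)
Your overall plan coincides with the paper's: translate Theorem~\ref{thm:mainrot2} through a dictionary (reduction $\leftrightarrow$ parent reduction, reversible acute rotation $\leftrightarrow$ parent exchange, grouped column swaps $\leftrightarrow$ cycle reversion), with the whole difficulty concentrated in showing that a witnessing sequence of support rotations can be normalized so that no diagonal entry is ever zeroed and the column swaps assemble into cycle reversions. Your ``second step'' (the dictionary) is correct and is essentially the paper's Step~1, including the observation that a diagonal-killing acute rotation followed by a swap of the same two columns can be traded for an off-diagonal-killing acute rotation (the paper's ``flip pair'').

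The gap is that your ``first step'' — the normalization lemma — is announced but never proved, and it is where all the content of the only-if direction lives; you say as much yourself (``I would state this as a lemma,'' ``one checks,'' ``the delicate combinatorial heart''). Concretely, the paper's Step~2 supplies four things you are missing: (i) since column swaps neither enable nor block reductions and acute rotations, all swaps not in flip pairs can be commuted to the end of the sequence; (ii) the first rotation that zeros a diagonal entry $\xi_{i,i}$ using columns $i$ and $j$ necessarily has $\xi_{i,j}\neq 0$, so it can be replaced by one zeroing $\xi_{i,j}$, with column $j$ substituted for column $i$ in all subsequent steps — iterating this yields an intermediate support $\xi_1'$ with full diagonal; (iii) the terminal block of column swaps maps one full-diagonal support to another, and a column that travels from position $j$ to position $k$ must be nonzero in both rows $j$ and $k$, which is exactly what identifies a directed cycle and certifies the permutation as a cycle reversion rather than an arbitrary one; (iv) after undoing the swaps on $\xi_2$ one obtains a \emph{superset} $\xi_2''$ of $\xi_1'$, and closing that gap requires invoking the losslessness lemma (Lemma~\ref{lem:loss}) together with Proposition~\ref{prop:red} to realize the reduction by diagonal-preserving reversible acute rotations before reversing the earlier rotations back into $\xi_1$. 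Your proposal does not address (iv) at all — you treat ``subgraph'' as a purely notational match with ``$\subseteq$'' — and only gestures at (ii) and (iii). Without these the equivalence between the algebraic and graphical statements is asserted rather than established.
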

\begin{example}
Figure \ref{fig:fex} shows the elements of a distribution equivalence class. Suppose $G_1$ is the original structure. Cycle reversion on the cycle $(X_2,X_4,X_3,X_2)$ results in $G_2$, cycle reversion on the cycle $(X_1,X_3,X_2,X_4,X_1)$	results in $G_3$, parent exchange $A(4,1,3)$ results in $G_4$, and parent exchange $A(1,3,1)$ results in $G_8$.
\end{example}
\begin{remark}
Given observational data from any of the structures in Figure \ref{fig:fex}, CI-based structure learning methods such as CCD \cite{richardson1996discovery} may output a structure (for example $G_1$ without edges $X_4\rightarrow X_1$) which is not distribution equivalent to the ground truth. This can be prevented by leveraging other statistical information in the distribution beyond CI relationships.
\end{remark}
We have the following corollary regarding equivalence for DAGs. The reasoning is the same as in Corollary \ref{cor:DAG}.
\begin{corollary}
\label{cor:DAGgraphical}
DAGs $G_1$ and $G_2$ are equivalent if and only if 
there exists a sequence of parent exchanges
that maps $G_1$ to $G_2$, and one that maps $G_2$ to $G_1$.
\end{corollary}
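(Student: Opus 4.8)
The plan is to derive this corollary from Theorem \ref{thm:graphical} exactly as Corollary \ref{cor:DAG} is derived from Theorem \ref{thm:mainrot2}, but working on the graphical side and pushing the argument one step further to eliminate both parent reductions and the ``subgraph'' relaxation. First I would invoke Theorem \ref{thm:graphical}: $G_1 \equiv G_2$ iff there is a sequence of parent reductions, parent exchanges, and cycle reversions mapping $G_1$ to a subgraph of $G_2$, and symmetrically one mapping $G_2$ to a subgraph of $G_1$. Since $G_1$ and $G_2$ are DAGs, the first reduction is to observe that cycle reversions are vacuous: a DAG contains no directed cycle, and — because each graphical operation preserves the distribution equivalence class — any graph reachable from a DAG is itself a DAG (every structure equivalent to a DAG is, by Proposition \ref{prop:dag}, Markov equivalent to it, hence acyclic), so no intermediate graph in either sequence ever admits a cycle reversion. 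Thus only parent reductions and parent exchanges occur.

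Next I would eliminate parent reductions. A parent reduction on $G_1$ produces a proper subgraph $G_1'$ with $G_1' \equiv G_1 \equiv G_2$; in particular $G_1$ would be reducible in the sense of Definition, which by Proposition \ref{prop:2cycle} is impossible since a DAG has no 2-cycles. More carefully, I would argue inductively along the sequence: suppose the sequence from $G_1$ uses a parent reduction at some step, mapping the current graph $H$ (a DAG, equivalent to $G_1$) to a proper subgraph $H'$; then $H' \equiv H$, so $H$ is reducible, contradicting Proposition \ref{prop:2cycle}. Hence the sequences consist purely of parent exchanges. A parent exchange is support-preserving as a multiset of column supports is rearranged — in graphical terms it removes one edge and adds another, keeping the total edge count fixed — so the sequence from $G_1$ maps it to a subgraph $\tilde G \subseteq G_2$ with $|E(\tilde G)| = |E(G_1)|$, and the reverse sequence gives a subgraph of $G_1$ with $|E(G_2)|$ edges, forcing $|E(G_1)| = |E(G_2)|$ and hence $\tilde G = G_2$: the sequence of parent exchanges maps $G_1$ exactly onto $G_2$, and symmetrically. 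This proves the ``only if'' direction. The ``if'' direction is immediate: each parent exchange is one of the graphical operations of Theorem \ref{thm:graphical} (and in particular each is invertible, as noted for reversible acute rotations / column swaps), so a sequence of parent exchanges taking $G_1$ to $G_2$ together with its reversal witnesses the condition of Theorem \ref{thm:graphical}, giving $G_1 \equiv G_2$.

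The main obstacle I anticipate is the bookkeeping needed to justify that parent reductions genuinely cannot appear \emph{anywhere} in the sequences, not merely as the first or last step: this requires the observation that the entire trajectory stays within the class of DAGs, which in turn leans on Proposition \ref{prop:dag} (equivalence implies Markov equivalence for DAGs, hence acyclicity is preserved) combined with the fact that each individual graphical operation outputs a graph in the same equivalence class. A secondary subtlety is reconciling the edge-count invariance of parent exchanges with the ``subgraph'' wording of Theorem \ref{thm:graphical} — one must check a parent exchange cannot strictly decrease the edge count, i.e., that the removed parent and the newly added parent are always distinct edges, which follows from $|P_j \triangle P_k| = 1$ ensuring the added edge $X_i \to X_j$ (or $X_i \to X_k$) was genuinely absent before. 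Everything else is routine once these two points are in place.
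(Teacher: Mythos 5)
Your overall route is the paper's: invoke Theorem \ref{thm:graphical}, rule out cycle reversions and parent reductions for DAGs, and observe that parent exchanges preserve the edge count so the ``subgraph'' relation tightens to equality. The paper's own proof is exactly this (in two lines, citing Proposition \ref{prop:2cycle} for irreducibility and acyclicity for the absence of cycle reversions), and your edge-counting argument for why the sequence maps $G_1$ \emph{onto} $G_2$ rather than into a proper subgraph is a welcome detail the paper leaves implicit.

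However, one load-bearing claim in your write-up is false as stated: you justify that every intermediate graph in the sequence is a DAG by asserting that ``every structure equivalent to a DAG is, by Proposition \ref{prop:dag}, Markov equivalent to it, hence acyclic.'' Proposition \ref{prop:dag} compares two DAGs only; it says nothing about cyclic DGs that are equivalent to a DAG, and such DGs exist: the 2-cycle between $X_1$ and $X_2$ is distribution equivalent to the single-edge DAG $X_1\to X_2$ (its two columns are identical, so a reduction applies). So distribution equivalence to a DAG does not imply acyclicity, and your inductive step ``$H$ (a DAG, equivalent to $G_1$)'' is not yet justified. The conclusion you need --- that the trajectory never leaves the class of DAGs --- is nevertheless true, but for a different reason: on a DAG no two vertices are parent reducible ($P_j=P_k$ would force a 2-cycle) and there is no cycle to reverse, so the only operation applicable to a DAG is a parent exchange; and in a DAG the condition $|P_j\triangle P_k|=1$ forces $\textit{Pa}(X_k)=\textit{Pa}(X_j)\cup\{X_j\}$, i.e., the exchanged edge is a covered edge, whose reversal preserves acyclicity. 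Induction along the sequence then gives exactly what you want, and with that repair your proof is complete and coincides with the paper's.
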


\section{Learning Directed Graphs from Data}
\label{sec:learning}

Structure $G$ imposes constraints on the entries of precision matrix $\Theta$. We will refer to such constraints as the \emph{distributional constraints} of $G$. Every distribution in $\Theta(G)$ should satisfy the distributional constraints of $G$.
Clearly, two DGs are distribution equivalent if and only if they have the same distributional constraints.
We call a distributional constraint a \emph{hard constraint} if the set of the values satisfying that constraint is Lebesgue measure zero over the space of the parameters involved in the constraint. For instance in DAGs, if $X_i$ and $X_j$ are non-adjacent and have no common children, we have the hard constraint $\Theta_{i,j}=0$. We denote the set of hard constraints of a DG $G$ by $H(G)$.

Recall that distribution equivalence of two structures $G_1$ and $G_2$ implies that any distribution that can be generated by $G_1$ can also be generated by $G_2$, and vice versa. Therefore, no distribution can help us distinguish between $G_1$ and $G_2$. However, in practice we usually have access to only one distribution which is generated from a ground truth structure, and it may be the case that this distribution can be generated by another structure which is not equivalent to the ground truth.
Therefore, finding the distribution equivalence class of the ground truth structure from one distribution is in general not possible, and extra considerations are required for the problem to be well defined. Below we will accordingly provide a weaker notion of equivalence and show that the ground truth can be recovered up to this equivalence.

The aforementioned issue also arises when learning DAGs and considering Markov equivalence. The most common approach to dealing with this issue in the literature is to assume that the distribution is \emph{faithful} to the ground truth structure. This requires a one-to-one correspondence between the conditional d-separations of the ground truth structure and the CI relationships in the distribution \cite{spirtes2000causation}. This is a sensible assumption from the perspective that the Lebesgue measure of the parameters which lead to 
extra CIs in the generated distribution is zero \cite{meek2013strong}. 

The case of general DGs is more complex since they can require other distributional constraints besides CIs. In particular, we may have distributional constraints 
other than hard constraints due to cycles.
Hence, in this case the Lebesgue measure of the parameters which lead to extra distributional constraints in the generated distribution is not necessarily zero. This motivates the following weaker notion of equivalence for  structure learning from observational data.
\begin{definition}[Quasi Equivalence]
\label{def:q-eq}
Let $\theta_G$ be the set of linearly independent parameters needed to parameterize any distribution $\Theta\in\Theta(G)$.
For two DGs $G_1$ and $G_2$, let $\mu$ be the Lebesgue measure defined over $\theta_{G_1}\cup\theta_{G_2}$.  
$G_1$ and $G_2$ are quasi equivalent, denoted by $G_1\cong G_2$, if  $\mu(\theta_{G_1}\cap\theta_{G_2})\ne0$.
\end{definition}
Roughly speaking, two DGs are quasi equivalent if the 
set of distributions that they can both generate
has a non-zero Lebesgue measure.
Note that
Definition \ref{def:q-eq} implies that if
DGs $G_1$ and $G_2$ are quasi equivalent they share the same hard constraints.
We have the following assumption for structure learning, which is a generalization of faithfulness:
\begin{definition}[Generalized faithfulness]
\label{def:G_faith}	
A distribution $\Theta$ is generalized faithful (g-faithful) to structure $G$ if $\Theta$ satisfies a hard constraint $\kappa$ if and only if $\kappa\in H(G)$. 
\end{definition}
\begin{assumption}
\label{ass:G_faith}
	The generated distribution is g-faithful to the ground truth structure $G^*$, and for irreducible DG $G^*$, if there exists a DG $G$ such that $H(G)\subseteq H(G^*)$ and $|E(G)|\le|E(G^*)|$, then $H(G)= H(G^*)$.
\end{assumption}
The following justifies the first part of Assumption \ref{ass:G_faith}:
\begin{proposition}
\label{prop:mes0}
With respect to Lebesgue measure over $\theta_{G}$, the set of distributions not g-faithful to $G$ is measure zero.
\end{proposition}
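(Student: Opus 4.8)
The plan is to show that for a fixed structure $G$, the set of parameter vectors $\theta \in \theta_G$ whose generated distribution $\Theta(\theta)$ satisfies some hard constraint $\kappa \notin H(G)$ is a Lebesgue-null subset of the parameter space; since there are only finitely many candidate hard constraints (each hard constraint is, by definition, a polynomial-type vanishing condition on finitely many entries of $\Theta$, and there are finitely many such conditions expressible over $p$ variables), a finite union of null sets is null, and the complement is exactly the g-faithful parameters. So the crux is the single-constraint statement.

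First I would make precise what a ``hard constraint'' is as a function of $\theta$. By the definition given before the proposition, a distributional constraint is a constraint on entries of $\Theta$, and it is \emph{hard} for $G$ precisely when the set of $\theta \in \theta_G$ satisfying it has measure zero. So if $\kappa \notin H(G)$, then either $\kappa$ is not a distributional constraint of $G$ at all (i.e. some $\Theta \in \Theta(G)$ violates it), or it is a ``soft'' constraint of $G$ (satisfied on a positive-measure set but not identically). In the first case, I want to upgrade ``violated somewhere'' to ``violated almost everywhere.'' The key observation is that, by \eqref{eq:prec}, each entry $\Theta_{i,j}$ is a polynomial (in fact a rational function with the common factor being a power of a nonzero polynomial, since $\Omega$ is diagonal with positive entries and $(I-B)$ has polynomial entries) in the coordinates of $\theta$. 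A hard constraint $\kappa$ asserts the vanishing of some polynomial expression $P_\kappa(\theta)$ built from these entries (e.g. $\Theta_{i,j}=0$, or a determinantal/sub-covariance condition of the Richardson type). Thus the set $\{\theta : \kappa \text{ holds}\}$ is the zero set of the real-analytic (indeed polynomial, after clearing the nonvanishing denominator) function $P_\kappa$. A standard fact is that the zero set of a real-analytic function on a connected open domain is either the whole domain or has Lebesgue measure zero. Since $\theta_G$ is a Euclidean parameter domain and $\kappa \notin H(G)$ rules out the case where $P_\kappa \equiv 0$ on $\theta_G$, we conclude the zero set is null.

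Concretely the steps are: (1) recall that the map $\theta \mapsto \Theta(\theta)$ is, entrywise, a rational function whose denominator is a fixed power of $\prod_i \sigma_i^2$ (nonvanishing on $\theta_G$ since $\Omega \in \textit{diag}^+$) times $\det(I-B)^2$-type factors — actually since $\Theta = (I-B)\Omega^{-1}(I-B)^\top$ the entries are genuinely polynomial in $B$ and in $1/\sigma_i^2$, so after reparametrizing by $\tau_i = 1/\sigma_i^2 > 0$ they are polynomials; (2) write each candidate hard constraint $\kappa$ as $P_\kappa(\theta) = 0$ for a polynomial $P_\kappa$; (3) invoke the analytic/polynomial zero-set dichotomy: $\{P_\kappa = 0\}$ is either all of the (connected) domain or measure zero; (4) observe that $\kappa \notin H(G)$ forbids the first alternative — if $P_\kappa \equiv 0$ then every $\Theta \in \Theta(G)$ satisfies $\kappa$ and the satisfying set is the full (positive-measure) domain, which by the definition of hard constraint would force $\kappa \in H(G)$ only if that set were measure zero; so non-membership in $H(G)$ together with $P_\kappa \equiv 0$ is contradictory unless one is careful — I need to handle the ``soft constraint of $G$'' subtlety, namely a constraint satisfied on the whole domain is by our running conventions really just not something we'd call a nontrivial constraint, so the honest statement is: $\kappa \notin H(G)$ means the satisfying set is \emph{not} measure zero, hence (by the dichotomy) it is the full domain, hence $\Theta(G)$ satisfies $\kappa$; then $\Theta$ being g-faithful must also satisfy it, so no violation occurs — this shows the ``only if'' direction of g-faithfulness is automatic and the real content is that violations of constraints \emph{in} $H(G)$ never happen and spurious constraints \emph{not forced} by $G$ occur only on a null set; (5) take the finite union over all $\kappa \notin H(G)$ that \emph{can} fail (i.e. those with $P_\kappa \not\equiv 0$) and conclude the bad set is null.

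I expect the main obstacle to be the bookkeeping in step (4): pinning down the precise logical content of ``g-faithful'' given the paper's somewhat informal notion of ``distributional constraint,'' and in particular arguing that the family of relevant hard constraints is finite (or at least that its union of zero sets is still null) — this needs the observation that a hard constraint is always a polynomial identity among entries of $\Theta$ over $p$ variables, of which there are only finitely many up to the relevant equivalence, so we are taking a finite (or countable) union of measure-zero analytic varieties. The analytic-zero-set lemma itself is standard and I would simply cite it. Everything else — the polynomiality of $\theta \mapsto \Theta$ and the connectedness of $\theta_G$ (it is an open subset of a Euclidean space cut out by $\tau_i > 0$, hence convex, hence connected) — is routine.
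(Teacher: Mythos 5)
Your proposal is correct and follows essentially the same route as the paper: the set of non-g-faithful distributions is a finite union, over hard constraints $\kappa\notin H(G)$, of the parameter sets on which $\kappa$ is spuriously satisfied, and each such set is Lebesgue-null, so the union is null. The only difference is that the paper dispatches the per-constraint measure-zero claim by direct appeal to the definition of a hard constraint, whereas you justify it via polynomiality of $\theta\mapsto\Theta$ and the zero-set dichotomy for real-analytic functions on a connected domain --- a more self-contained (and arguably more careful) treatment of the same step.
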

The second part of Assumption \ref{ass:G_faith} requires that if the ground truth structure $G^*$ has no reducible edges and there exists another DG $G$ that has only relaxed some of the hard constraints of $G^*$, then $G$ must have more edges than $G^*$. This is clearly the case for DAGs.
\begin{proposition}
\label{prop:consistent}
Under Assumption \ref{ass:G_faith}, quasi equivalence is the extent of identifiability from observational data.
\end{proposition}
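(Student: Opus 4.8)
I would establish the two halves of ``extent of identifiability'' separately: (a) from a single observed distribution one cannot, even in principle, distinguish $G^{*}$ from any structure quasi equivalent to it; and (b) under Assumption~\ref{ass:G_faith} there is a procedure -- the large-sample limit of the score-based learner of this section -- whose output is always a structure quasi equivalent to $G^{*}$. Together these say $\cong$ is exactly as fine as identifiability can get.

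Part (a) is essentially immediate from the definition of $\cong$. If $G_{1}\cong G_{2}$, then the set of precision matrices generatable by both is not Lebesgue-null, so when the ground truth is $G_{1}$ and its parameters $(B,\Omega)$ are chosen outside a measure-zero set, the generated $\Theta$ already lies in $\Theta(G_{2})$ as well. Hence any deterministic learner that is correct for every ground truth must include $G_{2}$ in its output whenever it is fed such a $\Theta$, and this happens for a positive-measure set of parameterizations of $G_{1}$ (and symmetrically). Conversely, if $G_{3}\not\cong G^{*}$ then $\Theta(G_{3})\cap\Theta(G^{*})$ is Lebesgue-null, so a generic distribution produced by $G^{*}$ is not in $\Theta(G_{3})$ and the learner need not -- and generically should not -- output $G_{3}$. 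Thus the coarsest correct output, generically, is precisely the set of structures quasi equivalent to the truth (which is well defined even though $\cong$ need not be transitive).

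For part (b) I would analyze the following procedure, which the score-based estimator realizes asymptotically: from the observed $\Theta^{*}$ read off the set $S$ of all hard constraints it satisfies, and among all DGs $G$ with $H(G)\subseteq S$ output one, $\widehat{G}$, with the fewest edges. By Proposition~\ref{prop:mes0}, outside a measure-zero set $\Theta^{*}$ is g-faithful to $G^{*}$, so $S=H(G^{*})$. Using reducibility, fix an irreducible $\widetilde{G}^{*}$ with $\widetilde{G}^{*}\equiv G^{*}$; since equivalent structures impose the same distributional constraints, $H(\widetilde{G}^{*})=H(G^{*})=S$ and $|E(\widetilde{G}^{*})|\le|E(G^{*})|$, so $\widetilde{G}^{*}$ is feasible for the procedure and $|E(\widehat{G})|\le|E(\widetilde{G}^{*})|$. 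Applying the second clause of Assumption~\ref{ass:G_faith} to the irreducible structure $\widetilde{G}^{*}$ and the feasible $\widehat{G}$ (which satisfies $H(\widehat{G})\subseteq H(\widetilde{G}^{*})$ and $|E(\widehat{G})|\le|E(\widetilde{G}^{*})|$) forces $H(\widehat{G})=H(\widetilde{G}^{*})=H(G^{*})$.

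The remaining, and I expect hardest, step is to upgrade ``$\widehat{G}$ and $G^{*}$ impose the same hard constraints'' to ``$\widehat{G}\cong G^{*}$'', and this is exactly where the cyclic case departs from the acyclic one. For DAGs there is nothing left to do: there are no soft constraints, so $H(\widehat{G})=H(G^{*})$ already yields Markov equivalence and, via Proposition~\ref{prop:dag}, distribution equivalence. In general, $\Theta(G^{*})$ and $\Theta(\widehat{G})$ are semialgebraic subsets of the common solution set of $H(G^{*})=H(\widehat{G})$, each possibly trimmed by soft (inequality-type) constraints. The plan is to show that these soft constraints are open conditions, so that each of $\Theta(G^{*})$ and $\Theta(\widehat{G})$ is full-dimensional in that common solution set -- here I would lean on irreducibility of $\widetilde{G}^{*}$ and on edge-minimality of $\widehat{G}$ to rule out a deficient parameterization -- and that the realized $\Theta^{*}\in\Theta(G^{*})\cap\Theta(\widehat{G})$ can be taken in the relative interior of both; letting $\Theta^{*}$ range over a positive-measure subset of $\Theta(G^{*})$ then gives $\mu(\theta_{\widehat{G}}\cap\theta_{G^{*}})\ne 0$, i.e. $\widehat{G}\cong G^{*}$, which together with part (a) completes the proof.
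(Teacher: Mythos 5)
Your part~(a) is, by itself, essentially the paper's entire proof of this proposition, and it is correct: the paper argues exactly the two directions you give there, namely that if $G_1\cong G^*$ the positive-measure overlap of their distribution sets makes them indistinguishable on any $\Theta$ drawn from that overlap, and that if $G_1\not\cong G^*$ the overlap is Lebesgue-null so the observed $\Theta$ is not generatable by $G_1$ and $G_1$ can be ruled out. One refinement: in your converse you appeal to the observed distribution being ``generic,'' but identifiability is a statement about the \emph{actual} observed $\Theta$, so you should invoke Assumption~\ref{ass:G_faith} (g-faithfulness) explicitly at that point, as the paper does --- Proposition~\ref{prop:mes0} is what justifies the assumption, but the assumption itself is what licenses excluding $\Theta$ from the measure-zero intersection. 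Your part~(b) is a misreading of what the proposition requires: it is the content of Theorem~\ref{thm:l0} (consistency of the $\ell_0$-regularized score), not of Proposition~\ref{prop:consistent}, which is a purely information-theoretic claim needing no constructive estimator. Moreover, part~(b) as written is incomplete --- you yourself flag that upgrading $H(\widehat{G})=H(G^*)$ to $\widehat{G}\cong G^*$ is only a ``plan'' --- so if that step were actually needed for this proposition, the proof would have a gap; as it stands, part~(a) suffices and part~(b) can be dropped entirely (or redirected at Theorem~\ref{thm:l0}, where the paper makes that final step by appeal to the definition of quasi equivalence).
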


\subsection{Score-Based Structure Learning}

We propose a score-based method for structure learning based on local search.
Score-based methods are well-established in the literature for learning DAGs. The predominant approach is to maximize the regularized likelihood of the data by performing a greedy search over all DAGs \cite{heckerman1995learning}, equivalence classes of DAGs \cite{chickering2002optimal}, or permutations of the variables \cite{teyssier2012ordering,solus2017consistency}.
Also, works such as
\cite{van2013ell_,fu2013learning,aragam2015concave,raskutti2018learning,zheng2018dags} specifically consider the problem of learning a linear Gaussian acyclic model via penalized parameter estimation.

To the best of our knowledge, there are no existing score-based structure learning approaches for the cyclic linear Gaussian model.
In light of our theory, we propose to use the $\ell_0$-regularized negative log likelihood function as the score, which is a standard choice of the score in the literature of learning DAGs, and show that it is able to recover the quasi equivalence class of the underlying DG.
Let $\mathbf{X}$ be the $n\times p$ data matrix. The $\ell_0$-regularized ML estimator solves the following unconstrained optimization problem:
\begin{equation}
\label{eq:l0ML}
\min_G\min_{\substack{(B,\Omega): \supp(B)\subseteq\supp(B_G)}}\mathcal{L}(\mathbf{X}:B,\Omega)+\lambda\|B\|_0,
\end{equation}
 where $\mathcal{L}(\mathbf{X}\!:\!B,\Omega)\!=\!-n\log(\det(I\!-\!B))\!+\!\sum_{i=1}^p\!\frac{n}{2}\!\log(\sigma_i^2)+\frac{1}{2\sigma_i^2}\|\mathbf{X}_{\cdot,i}\!-\!\mathbf{X}B_{\cdot,i}\|_2^2$
is the negative log-likelihood of the data, $\|B\|_0:=\sum_{i,j}\mathds{1}_{x\neq0}(B_{i,j})$, and similar to the BIC score, we set $\lambda=0.5\log n$.
\begin{remark}
\label{rmk:irr}
The estimator in \eqref{eq:l0ML} will never output a reducible DG, since removing redundant edges improves the score. This is in line with the minimality assumption in the literature for DAGs \cite{pearl1988probabilistic,raskutti2018learning}.
\end{remark}
\begin{theorem}
\label{thm:l0}
Under Assumption \ref{ass:G_faith}, the
global minimizer of \eqref{eq:l0ML}
with $\lambda=0.5\log n$ outputs $\hat{G}\cong G^*$ asymptotically.
\end{theorem}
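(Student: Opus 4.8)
The plan is to establish consistency by combining three ingredients: the asymptotic behavior of the maximized likelihood over a fixed support, the effect of the $\ell_0$ penalty in breaking ties, and Proposition~\ref{prop:consistent}, which guarantees that quasi equivalence is the best we can hope for. First I would recall that for a fixed candidate structure $G$, the inner minimization $\min_{(B,\Omega):\supp(B)\subseteq\supp(B_G)}\mathcal{L}(\mathbf{X}:B,\Omega)$ is, up to constants, $n$ times a sample analogue of the KL projection of the true distribution $\Theta^*$ onto the parametric family $\Theta(G)$. By the law of large numbers, the empirical objective converges uniformly (on compacta of the parameter space) to its population counterpart, whose minimizer over $\Theta(G)$ attains the value of the true negative log-likelihood precisely when $\Theta^*\in\overline{\Theta(G)}$, i.e.\ when $G$ can (quasi-)generate the truth. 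Under Assumption~\ref{ass:G_faith} and g-faithfulness, $\Theta^*$ lies in $\Theta(G)$ with nonzero measure exactly for $G\cong G^*$, so for such $G$ the leading $O(n)$ term of the score matches that of $G^*$, while for any $G$ that cannot generate $\Theta^*$ there is a strictly positive KL gap, making its score larger by $\Omega(n)$ with probability tending to one. This rules out all structures that are not "large enough'' to contain the truth in their distribution set.

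The second step handles structures $G$ whose distribution set does contain $\Theta^*$ but which are not quasi equivalent to $G^*$ — necessarily these satisfy $H(G)\subsetneq H(G^*)$ and, by the second part of Assumption~\ref{ass:G_faith} applied in its contrapositive form (here I use that $G^*$ is irreducible, which holds by Remark~\ref{rmk:irr} since the global optimum is never reducible, so we may restrict attention to irreducible competitors too), have strictly more edges: $|E(G)|>|E(G^*)|$. Here the $O(n)$ likelihood terms are asymptotically equal (both families can fit $\Theta^*$ to first order), and the $\lambda\|B\|_0 = 0.5\log n\cdot|E(G)|$ term, together with the $O(\log n)$ fluctuation of the maximized likelihood around its limit (a standard BIC-type argument via a local quadratic expansion of $\mathcal{L}$ and boundedness of the relevant empirical Fisher information), makes the score of $G^*$ strictly smaller: the penalty difference $0.5\log n\,(|E(G)|-|E(G^*)|)\ge 0.5\log n$ dominates the $O_P(1)\cdot$(difference in effective dimension)$\cdot\log n$-type terms when the likelihood terms coincide, or more carefully, the gap in maximized log-likelihood between two nested-fitting families is $O_P(1)$ so the $\Theta(\log n)$ penalty wins. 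I would also need to note that $G$ with fewer or equal edges than $G^*$ but not containing $\Theta^*$ is already excluded by step one. Finally, among structures $G\cong G^*$, they all achieve the same asymptotic score to leading and subleading order, so the global minimizer is w.h.p.\ some $\hat G\cong G^*$; this is the desired conclusion.

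The main obstacle I anticipate is the second step: carefully controlling the $O_P(\log n)$ and $O_P(1)$ stochastic terms in the maximized likelihood for cyclic models, where the $-n\log\det(I-B)$ Jacobian term is present and the parameter space $\{B:\supp(B)\subseteq\supp(B_G)\}$ need not correspond to a "nice'' curved exponential family. One has to verify that the usual regularity conditions (identifiability of the parameters within a support up to the relevant isometry, non-degeneracy of the limiting Hessian, compactness after reparameterization so that $\det(I-B)$ stays bounded away from zero near the optimum) hold, so that a BIC-style expansion $\min \mathcal{L} = n\cdot(\text{pop.\ value}) + O_P(\log n)$ or even $+O_P(1)$ is legitimate. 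The key simplification is that we only need \emph{comparisons}: we never need the exact constant, only that competing non-quasi-equivalent structures pay a strictly larger $\Theta(\log n)$ penalty while matching the $\Theta(n)$ term, and that g-unfaithful-to-$G^*$ limits occur with probability zero (Proposition~\ref{prop:mes0}), so the event that $\Theta^*$ fails to be g-faithful contributes nothing asymptotically. Everything else — uniform LLN for the empirical negative log-likelihood, continuity of the KL projection, strict positivity of KL divergence between distinct Gaussians — is standard.
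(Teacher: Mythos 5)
Your proposal follows essentially the same route as the paper's proof: the $\Theta(n)$ likelihood term forces the output to fit the true distribution (hence, by g-faithfulness, $H(\hat G)\subseteq H(G^*)$), the $0.5\log n$ penalty forces $|E(\hat G)|$ to be no larger than that of a minimal fitting structure, and the second part of Assumption~\ref{ass:G_faith} then upgrades the inclusion of hard constraints to equality, giving $\hat G\cong G^*$. Your version is more explicit about the stochastic orders (uniform LLN, strictly positive KL gap of order $n$ for non-fitting structures, $O_P(1)$ likelihood-ratio fluctuation versus the $\Theta(\log n)$ penalty for nested-fitting ones), which the paper compresses into the single remark that the likelihood grows much faster than the penalty.

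One step is off as written: you justify applying the second part of Assumption~\ref{ass:G_faith} by claiming $G^*$ is irreducible ``by Remark~\ref{rmk:irr}.'' That remark only says the \emph{estimator's output} is never reducible; it says nothing about the ground truth, which may well be reducible. Since the assumption's second clause is stated only for irreducible $G^*$, your step 2 does not go through directly when $G^*$ is reducible. The paper's fix is to introduce an irreducible equivalent $G^\dagger$ of $G^*$, note $H(G^\dagger)=H(G^*)$ and that $G^\dagger$ also fits $\Theta$ with $|E(G^\dagger)|\le|E(G^*)|$, and run the penalty comparison and Assumption~\ref{ass:G_faith} against $G^\dagger$ instead. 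With that substitution your argument closes; without it, the edge-count comparison against a reducible $G^*$ is not the one the assumption licenses.
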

Hence, by Prop. \ref{prop:consistent} and Theorem \ref{thm:l0}, the score \eqref{eq:l0ML} is consistent, i.e., it asymptotically achieves the extent of identifiability. 

\subsubsection{Structure Search}

We solve the outer optimization problem in \eqref{eq:l0ML} via local search over the structures. We choose the search space to contain all DGs and use the standard operators (i.e., local changes) of edge addition, deletion, and reversal. See \cite{koller2009probabilistic} for a discussion regarding the necessity of these operators. Two main issues arise when cycles are allowed in the structure:

{\bf Virtual edges.}
There exists a virtual edge between non-adjacent vertices $X_i$ and $X_j$ if they have a common child $X_k$ which is an ancestor of $X_i$ or $X_j$ \cite{richardson1996polynomial}.
If a greedy search algorithm does not find $X_k$ and $X_i$ (or $X_j$) to be on a cycle, it can significantly increase the likelihood by adding an edge at the location of the virtual edge. The algorithm would therefore be trapped in a local optimum with one more edge than the ground truth. To resolve this issue, we propose adding the following fourth search operator: 
Suppose we have a triangle over three variables $X_i$, $X_j$ and $X_k$, and there exists an additional sequence of edges connecting $X_j$ and $X_k$. In one atomic move, we perform a series of edge reversals to form a cycle containing $X_j\to X_k$ along the sequence, delete the edge connecting $X_i$ to $X_j$, and orient the edge $X_i\to X_k$. If the likelihood is unchanged, the edge deletion improves the score. In the case that the oriented cycle is of length two, additional considerations are needed; see the Supplementary Materials for details as well as simulations justifying this fourth operator.

{\bf Score decomposability.}
When the DG is acyclic, the distribution generated by a linear Gaussian structural equation model satisfies the local Markov property. This implies that the joint distribution can be factorized into the product of the distributions of the variables conditioned on their parents. The benefit of this factorization is that the computational complexity of evaluating the effect of operators can be dramatically reduced since a local change in the structure does not change the score of other parts of the DAG. In contrast, for the case of cyclic DGs the distribution does not necessarily satisfy the local Markov property.
However, the distribution still satisfies the global Markov property \cite{spirtes1995directed}. Therefore, our search procedure factorizes the joint distribution into the product of conditional distributions. Each of these distributions is over the variables in a maximal strongly connected subgraph (MSCS), conditioned on their parents outside of the MSCS. After applying an operation, the likelihoods of all involved MSCSs are updated; see the Supplementary Materials for additional details.

The implementation of the approach is publicly available at \href{https://github.com/syanga/dglearn}{https://github.com/syanga/dglearn}.

\vspace{-2mm}
\section{Experiments}
\label{sec:exp}
\vspace{-1mm}

We generated $100$ random ground truth DGs of orders $p\in\{5,20,50\}$, all with maximum degree $4$. The DGs are constrained to have maximum cycle lengths $5$, $5$, and $10$, respectively. For each structure, we sampled the edge weights uniformly from $B_{i,j}\in[-0.8,-0.2]\cup[0.2,0.8]$ and the exogenous noise variances uniformly from $\sigma^2_{i}\in[1,3]$ to generate the data matrix $\mathbf{X}$ of size $10^4\times p$. We constrained the ground truth $B$ matrices to be stable via an accept-reject approach; the modulus of all eigenvalues of $B$ should be strictly less than one. The stability of a model guarantees that the effects of one-time noise dissipate. Our search algorithms were also constrained to only output stable structures. We used the following standard local search methods: 1. Hill climbing 2. Tabu search \cite{koller2009probabilistic}.

Evaluating the performance of a learning approach is not trivial for the case of general DGs. 
As seen before, equivalent cyclic DGs may have very different skeletons. Hence, conventional evaluation metrics such as structural Hamming distance (SHD) with the ground truth DG or comparison of the learned and ground truth adjacency matrices cannot be used. We propose the following evaluation methods:

{\bf 1. SHD Evaluation.}
We enumerate the set of all DGs equivalent to the ground truth DG using Algorithm 1 in the Supplementary Materials to form the distribution equivalence class of the ground truth. We then compute the smallest SHD between the algorithm's output DG and the members of the equivalence class as a measure of the performance.

{\bf 2. Multi-Domain Evaluation.}
Suppose the input data is sampled from a distribution $\Theta$ generated by ground truth DG $G^*$, and let $\hat{G}$ denote an algorithm's output structure.
Due to finite sample size and the possible violation of Assumption \ref{ass:G_faith}, $\hat{G}$ may be able to maximize the likelihood yet not be (quasi) equivalent to $G^*$.
In general, we expect such an output to be compatible with only the given data and not with data sampled from other distributions generated by $G^*$. We therefore propose the following evaluation approach.
\begin{enumerate}
\item For ground truth structure $G^*$, generate $d$ distributions $\{\Theta_{1},...,\Theta_{d}\}$ by sampling edge weights and variances.
\item For each $\Theta_{i}$, run the algorithm to obtain $\hat{G}_{i}$.
\item For each $\hat{G}_{i}$, optimize its edge weights and variances to generate distributions $\{\hat{\Theta}_{i,1},...,\hat{\Theta}_{i,d}\}$ such that $\hat{\Theta}_{i,j}$ minimizes the KL-divergence to
$\Theta_{j}\in\{\Theta_{1},...,\Theta_{d}\}$.
\item The success rate of $\hat{G}_{i}$ is the percentage of domains for which the minimizing KL-divergence computed in step 3 is below a threshold $\eta$.
\end{enumerate}

\begin{figure}[t]
\centering
\begin{minipage}{.25\textwidth}
  \centering
  \includegraphics[width=.99\linewidth,height=33mm]{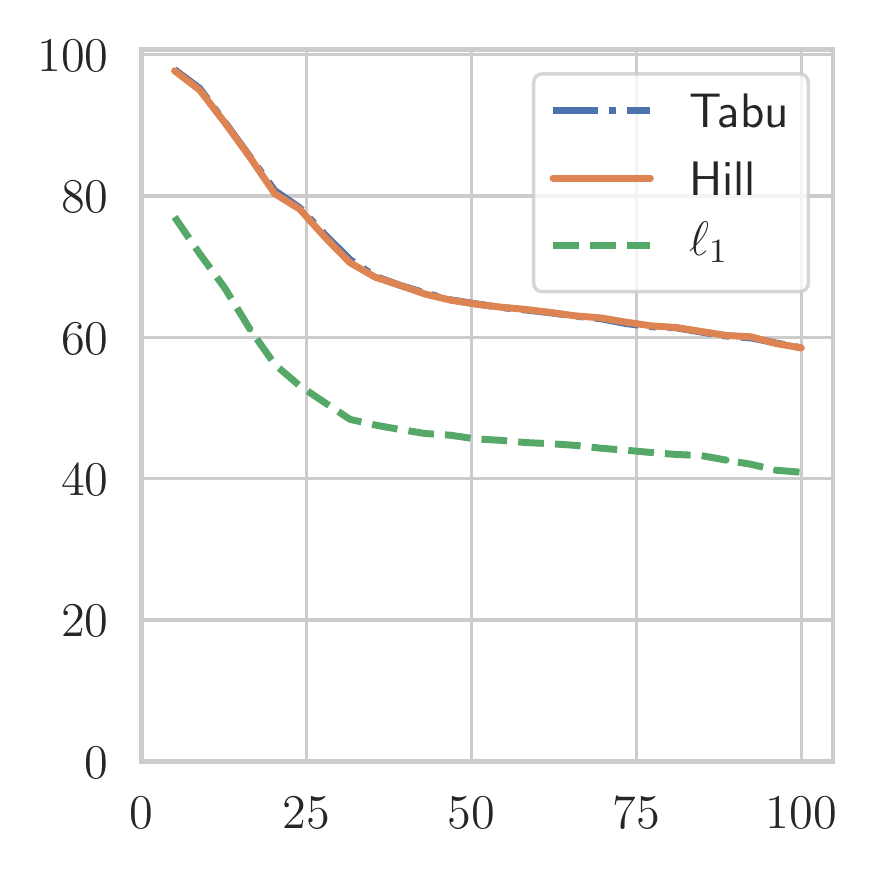}
\end{minipage}%
\begin{minipage}{.25\textwidth}
  \centering
  \includegraphics[width=.99\linewidth,height=33mm]{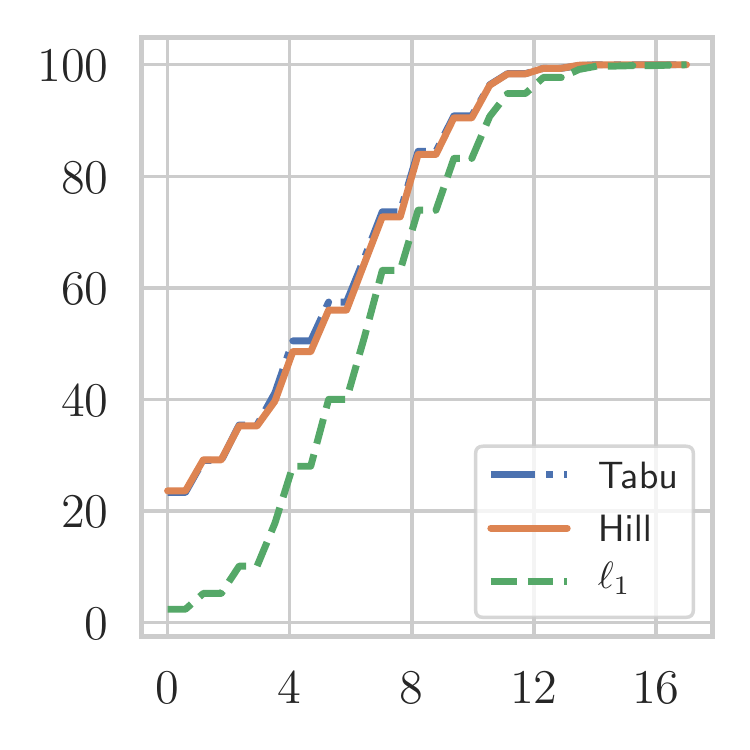}
\end{minipage}\vspace{0mm}
\begin{minipage}{.25\textwidth}
  \centering
  \includegraphics[width=.99\linewidth,height=33mm]{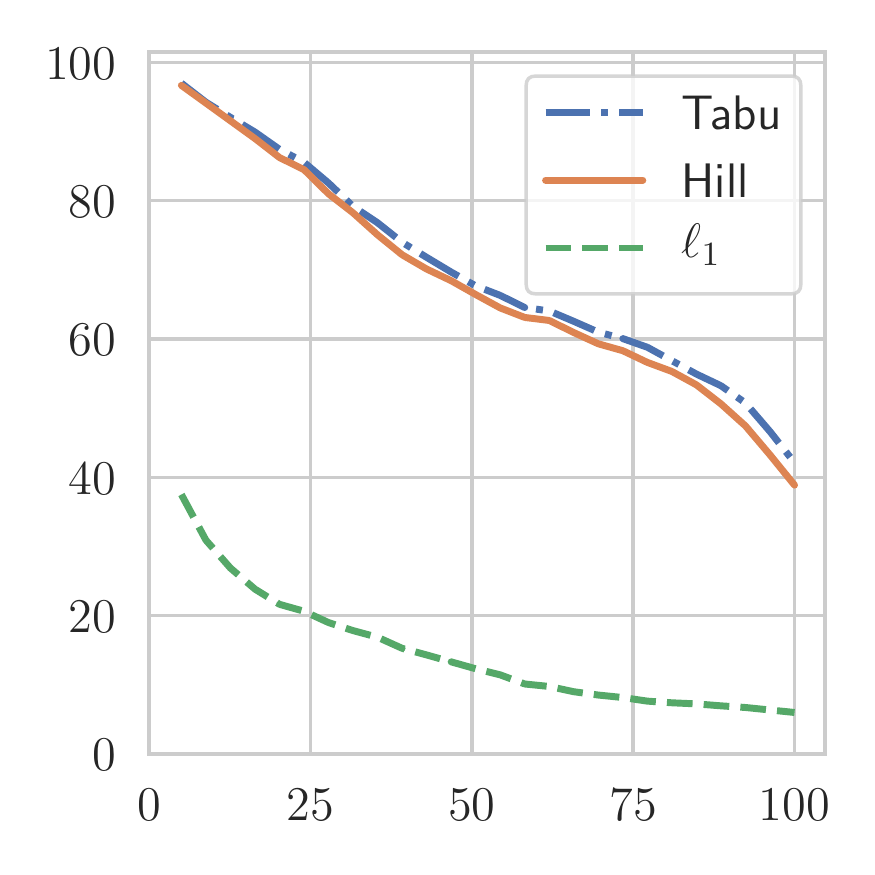}
\end{minipage}%
\begin{minipage}{.25\textwidth}
  \centering
  \includegraphics[width=.99\linewidth,height=33mm]{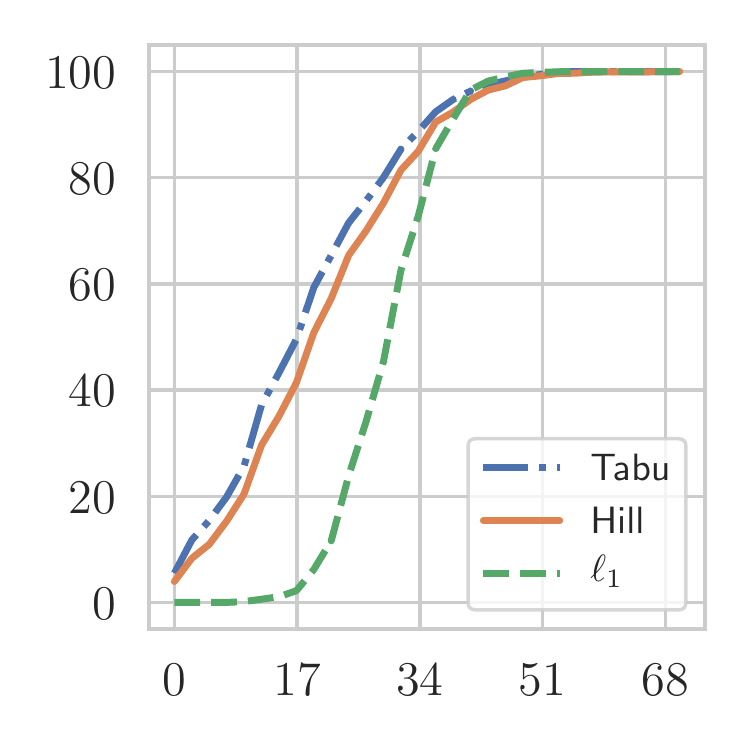}
\end{minipage}\vspace{0mm}
\begin{minipage}{.25\textwidth}
  \centering
  \includegraphics[width=.99\linewidth,height=33mm]{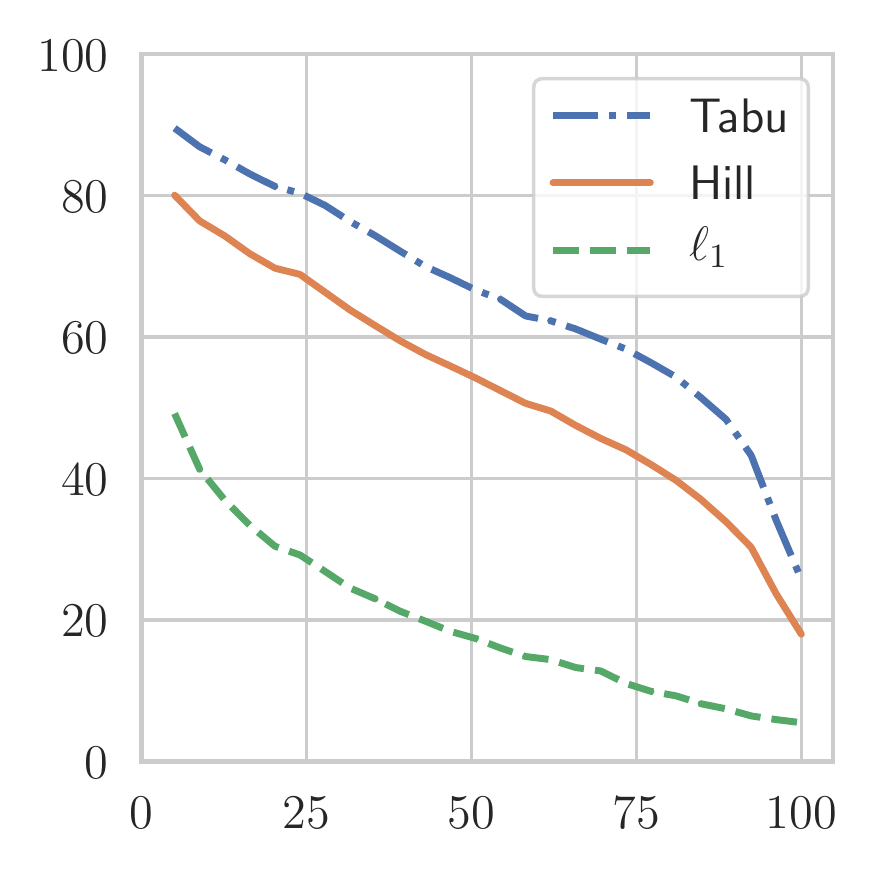}
\end{minipage}%
\begin{minipage}{.25\textwidth}
  \centering
  \includegraphics[width=.99\linewidth,height=33mm]{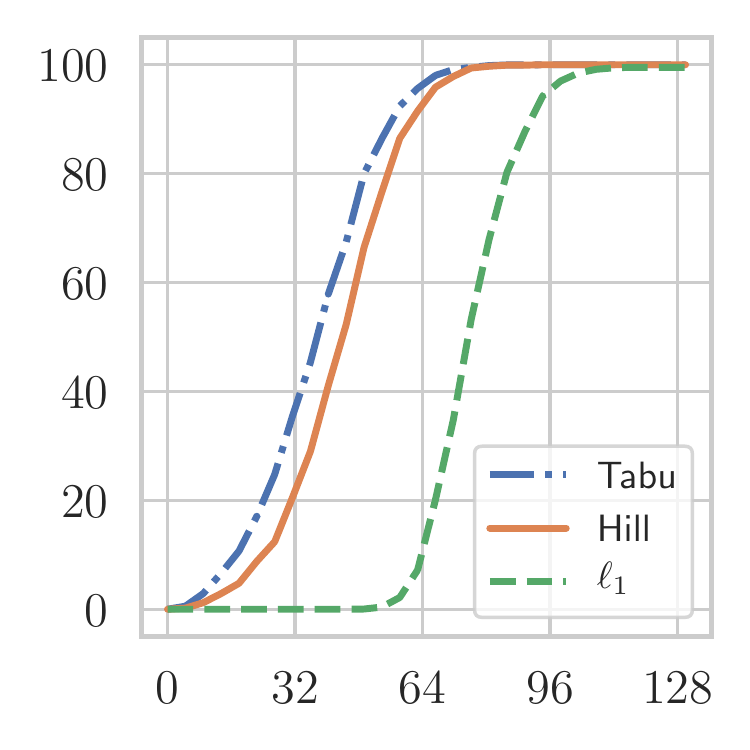}
\end{minipage}
\caption{Results for $p=5,20,50$, top to bottom. {\bf Left column:} multi-domain evaluation. The percentage of outputs with success rate larger than a certain value is plotted vs. success percentages; e.g., for $p=20$, $80\%$ of the outputs could generate more than $25\%$ of the distributions generated by their corresponding ground truth. {\bf Right column:} SHD evaluation. The percentage of outputs with SHD less than or equal to a certain value is plotted vs. SHD.} 
\label{fig:exps}
\end{figure}

Since domain distributions are generated randomly, if the success rate of output $\hat{G}_{i}$ is large, there is a non-negligible subset of the distribution set of $G^*$ that $\hat{G}_{i}$ can generate as well. Hence, $\hat{G}_{i}$ is quasi equivalent to $G^*$.
In our evaluations, we used $d=50$ and $\eta=p\times10^{-3}$.
We emphasize that multi-domain data is \emph{only} used for evaluation. In the learning stage, only one distribution is used.

We cannot compare the performance of our approach with the performance of methods based on CI relationships (such as CCD), since those approaches return a PAG representing all Markov equivalent DGs, which usually represents a much larger set of DGs than the distribution equivalence class. We therefore only compared our approach with an $\ell_1$-regularized maximum likelihood estimator which directly solves the optimization problem
$\min_{B,\Omega}\mathcal{L}(\mathbf{X}:B,\Omega)+\lambda\|B\|_1$, which does not need a separate structure search.
The results are given in Figure \ref{fig:exps}. The figure shows that our proposed approach successfully finds DGs capable of generating distributions generated by the ground truth structure.
While the SHD evaluation shows that the outputs are not always distribution equivalent, the multi-domain evaluation provides evidence that many are quasi equivalent to the ground truth.
We also evaluated the effect of sample size on the performance in the Supplementary Materials.

\subsection{fMRI hippocampus data}
We considered the fMRI hippocampus dataset \cite{fMRI}, which contains signals from six separate brain regions: perirhinal cortex (PRC), parahippocampal cortex
(PHC), entorhinal cortex (ERC), subiculum (Sub), CA1, and CA3/Dentate Gyrus (CA3) in the resting state.
We used the anatomical connections  \cite{bird2008hippocampus,zhang2015discovery} as the ground truth, depicted in Figure \ref{fig:fmri}.
We applied our proposed method on one of the domains in the dataset and found that two out of eight structures equivalent to the ground truth were (local) optima for the score even though there is no evidence that the data are linear Gaussian.

\begin{figure}[t]
\begin{center}
\includegraphics[scale=0.48]{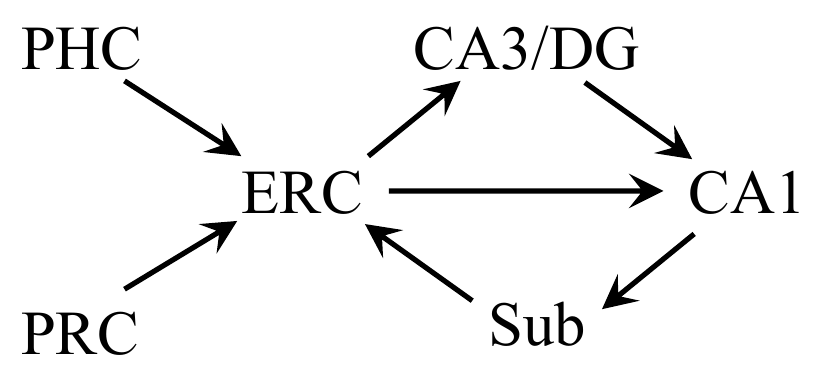}		
\end{center}	
\caption{Ground truth structure for the fMRI hippocampus dataset.}
\vspace{-2mm}
\label{fig:fmri}
\end{figure}

\section{Conclusion}
\label{sec:conc}

We presented a general, unified notion of equivalence for linear Gaussian DGs and proposed methods for characterizing the equivalence of two structures. We also proposed a score-based structure learning approach that asymptotically achieves the extent of identifiability. Our results are instrumental to the fields of causality and graphical models. 
From the causality perspective, consider for example Figure \ref{fig:fex}. Our results guarantee a direct causal effect between $X_2$ and $X_4$ and show that a direct causal effect does not necessarily exist between $X_3$ and $X_4$. From the graphical models perspective, our results provide the tools to handle distributions that lack a DAG representation but can be modeled by a cyclic DG. We hope that this work spurs further research in the study of directed graphs.

\newpage

\section*{Acknowledgements}

This work was supported in part by ONR grant W911NF-15-1-0479, NSF CCF 1704970, and NSF CNS 16-24811. 
KZ would like to acknowledge the support by National Institutes of Health under Contract No. NIH-1R01EB022858-01, FAIN-R01EB022858, NIH-1R01LM012087, NIH5U54HG008540-02, and FAIN-U54HG008540, and by the United States Air Force under Contract No. FA8650-17-C7715.

\bibliography{Refs.bib}
\bibliographystyle{icml2020}

\setlength{\parskip}{2mm}
\setlength{\parindent}{0cm}
\newpage
\onecolumn

\newpage

 \fontsize{10}{15}\selectfont
~\vspace{1cm}
\begin{center}
{\LARGE \bf 
Supplementary Materials}
\end{center}

\vskip 0.7in

\appendix

\section{Proof of Proposition \ref{prop:dag}}

Two DAGs are Markov equivalent if and only if they have the same skeleton and v-structures \citep{verma1991equivalence}. 
Therefore, it suffices to show that two DAGs $G_1$ and $G_2$ are distribution equivalent if and only if they have the same skeleton and v-structures.

By Corollary \ref{cor:DAGgraphical}, DAGs $G_1$ and $G_2$ are equivalent if and only if there exist sequences of parent exchanges that map them to one another.
Suppose $G_1$ and $G_2$ are distribution equivalent. Therefore there exists a sequence of parent exchanges mapping one to another. Since DAGs do not have 2-cycles, parent exchange for them will only result in flipping an edge, and since the other parents of the vertices at the two ends of that edge should be the same, it does not generate or remove a v-structure. Therefore, the sequence of parent exchanges does not change the skeleton or change the set of v-structures. Therefore, $G_1$ and $G_2$ are Markov equivalent.

If two DAGs $G_1$ and $G_2$ have the same skeleton and v-structures, then their difference can be demonstrated as a sequence of edge flips such that in each flip, all the parent of the two ends have been the same, which means this flip is a parent exchange. Therefore, by Corollary \ref{cor:DAGgraphical}, DAGs $G_1$ and $G_2$ are distribution equivalent.

\section{Proof of Proposition \ref{prop:rot}}

If side:\\
If $\supp(Q_1U^{(1)})\subseteq\supp(Q_{G_2})$, then we can simply choose the entries of $Q_1U^{(1)}$ as the entries  of $Q_2$ (as they are all free variables). Therefore, 
\[
Q_2Q_2^\top=Q_1U^{(1)}(U^{(1)})^\top Q_1^\top=Q_1Q_1^\top.
\]
That is, $Q_2$ can generate the distribution which was generated by $Q_1$. Since this is true for all choices of $Q_1$, and since the reverse (i.e., starting with $Q_2$) is also true, by definition, $G_1$ is distribution equivalent to $G_2$.

Only if side:\\
If $G_1$ is distribution equivalent to $G_2$, then for all choices of $Q_1$, generating $Q_1Q_1^\top=\Theta$, there exists $Q_2$ generated by $G_2$, such that $Q_2Q_2^\top=\Theta$. Since $Q_2$ is generated by $G_2$, by definition, $\supp(Q_2)\subseteq\supp(Q_{G_2})$. Also, since $Q_1Q_1^\top=\Theta$ and $Q_2Q_2^\top=\Theta$, we have $Q_2=Q_1U$, for some orthogonal transformation $U$, due to the fact that the generating vectors of a Gramian matrix can be determined up to isometry. Therefore, since $Q_2=Q_1U$ and $\supp(Q_2)\subseteq\supp(Q_{G_2})$, we conclude that $\supp(Q_1U)\subseteq\supp(Q_{G_2})$. It remains to show that there exists a rotation $U^{(1)}$, for which $\supp(Q_1U^{(1)})\subseteq\supp(Q_{G_2})$. Note that $U$ is an orthogonal transformation and hence, $UU^\top=I$ and $\det(U)=1 $ or $-1$.
\begin{itemize}
\item If $\det(U)=1$, it means that $U$ is a rotation and we are done by choosing $U^{(1)}=U$. 
\item If $\det(U)=-1$ (i.e., $U$ is an improper rotation), all we need is to find an orthogonal transformation $V$, such that (a) $\supp(Q_1U)=\supp(Q_1UV)$, i.e., it does not change the support, (b) $\det(V)=-1$, which implies that $\det(UV)=1$. That is, adding the transformation $V$ to $U$ does not change the support but makes the combination $UV$ into a rotation. Finding such a $V$ is easy, simply choosing a diagonal matrix with an odd number of diagonal entries equal to $-1$ and the rest equal to $1$. This will not change the support and only changes the sign of a subset of the entries. Therefore, we are done by choosing $U^{(1)}=UV$. Note that we are not forced to add a specific reflection at the end, we just add a particular one to do a sign flipping to show that the improper rotation can be changed into a rotation.

\end{itemize}

\section{Proof of Proposition \ref{prop:effect}}

\begin{itemize}
\item If $\xi_{i,j}=0$, then by definition, the Givens rotation corresponding to $A(i,j,k)$ is a zero degree rotation. Therefore, applying $A(i,j,k)$ has no effect.
\item If $\xi_{i,j}=\xi_{i,k}=\times$, then there exists a matrix $Q$ for which zeroing $\xi_{i,j}$ is an acute rotation and the other rows of $Q$ either have no element in the $(j,k)$ plane, or if they do, they will not become aligned with either $j$ or $k$ axis in the $(j,k)$ plane after the rotation. Therefore, support $(0,0)$ will stay at $(0,0)$, and any other support will become $(\times,\times)$.
\item If $\xi_{i,j}=\times$ and $\xi_{i,k}=0$, then the $i$-th row has been aligned with the $j$ axis in the $(j,k)$ plane before the rotation and since the rotation is planar, will become aligned with the $k$ axis after the rotation, and hence we have a $\pi/2$ rotation. Therefore, all other rows aligned with one axis will become aligned with the other axis, and any vector not aligned with either axes will remain the same. Therefore, we have support transformations $(\times,0)\rightarrow(0,\times)$, $(0,\times)\rightarrow(\times,0)$, $(\times,\times)\rightarrow(\times,\times)$, and $(0,0)\rightarrow(0,0)$, which is equivalent to switching columns $j$ and $k$.
\end{itemize}

\section{Proof of Theorem \ref{thm:mainrot2}}

We first prove the following weaker result:
\begin{theorem}
\label{thm:mainrotapp}
Let $\xi_1$ and $\xi_2$ be the support matrices of directed graphs $G_1$ and $G_2$, respectively.
$G_1$ is distribution equivalent to $G_2$ if and only if both following conditions hold:
\begin{itemize}
\item There exists a sequence of support rotations that maps $\xi_1$ to a subset of $\xi_2$.
\item There exists a sequence of support rotations that maps $\xi_2$ to a subset of $\xi_1$.
\end{itemize}
\end{theorem}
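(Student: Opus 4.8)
The plan is to establish the equivalence by combining Proposition~\ref{prop:rot} with the fact that support rotations are the "worst-case" support-level shadows of Givens rotations, together with the observation that any rotation in $\mathbb{R}^p$ factors into a product of Givens rotations. First I would prove the \emph{if} direction. Suppose there is a sequence of support rotations $A(i_1,j_1,k_1),\dots,A(i_m,j_m,k_m)$ carrying $\xi_1$ to some $\xi'\subseteq\xi_2$. For an arbitrary parameterization $Q_1$ with support $\xi_1$, I apply the corresponding Givens rotations one at a time: each $A(i_t,j_t,k_t)$ is realized by the Givens rotation $G(j_t,k_t,\theta_t)$ with $\theta_t=\tan^{-1}(-Q_{i_t,j_t}/Q_{i_t,k_t})$ evaluated on the current matrix. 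The key point from the definition of support rotation and from Proposition~\ref{prop:effect} is that the support of the image is always \emph{contained} in the support that $A(i_t,j_t,k_t)$ produces (since $A$ is defined via the $Q'$ that \emph{maximizes} the resulting support among all matrices with the given support, any actual $Q$ with that support does at least as well, i.e. has support contained in $A$'s output). Hence after all $m$ rotations, the product $U^{(1)}$ of these Givens matrices is a rotation (each $G(j,k,\theta)$ has determinant $1$), and $\supp(Q_1U^{(1)})\subseteq\xi'\subseteq\xi_2=\supp(Q_{G_2})$. Since this works for every $Q_1$, and the symmetric statement holds for $\xi_2\to\xi_1$, Proposition~\ref{prop:rot} gives $G_1\equiv G_2$.

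For the \emph{only if} direction I would go the other way. Assume $G_1\equiv G_2$. By Proposition~\ref{prop:rot}, for every $Q_1$ there is a rotation $U^{(1)}$ with $\supp(Q_1U^{(1)})\subseteq\xi_2$; in particular pick $Q_1$ to be a \emph{generic} matrix with support exactly $\xi_1$ — one whose only zero/alignment coincidences are those forced by the support pattern. Decompose $U^{(1)}$ into a finite product of Givens rotations $G(j_1,k_1,\theta_1)\cdots G(j_m,k_m,\theta_m)$ using the standard QR-type factorization. I then track the support along this product: at step $t$, the Givens rotation $G(j_t,k_t,\theta_t)$ acting on the current matrix either fixes its support, swaps columns $j_t,k_t$, or performs an acute rotation that can turn some $(0,0)$ pairs into $(\times,\times)$ and may zero a single entry in row $i$ (where $i$ is chosen so that $\theta_t=\tan^{-1}(-Q_{i,j_t}/Q_{i,k_t})$; if $\theta_t$ is not of this special form for any row it simply fills in all non-aligned rows, which is an \emph{irreversible acute rotation} move, still a support rotation). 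The subtle point, which I expect to be the main obstacle, is that a generic Givens angle $\theta_t$ may not zero \emph{any} entry, so it is not literally of the form required by the definition of $A(i,j,k)$; but since we only care about support containment and $Q_1$ is generic, I can argue that the overall product's support is contained in the support obtained by a sequence of \emph{genuine} support rotations — either the angle zeros some entry (a support rotation of reduction / acute / irreversible type) or it does not (in which case replacing it by a slightly perturbed angle that zeros nothing only \emph{enlarges} the support, and the cumulative support is still bounded by applying a sequence of $A$'s that realize at least as much filling-in). Making this "genericity plus support-monotonicity" argument precise — that the support reached by the real rotation is dominated by the support reachable through a finite chain of support rotations — is the technical heart of the proof.

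The remaining bookkeeping is routine: support rotations form the complete repertoire of support-level effects of Givens rotations (Proposition~\ref{prop:effect} lists exactly the three cases), the product of Givens rotations is a rotation, and column swaps $G(j,k,\pi/2)$ are support rotations of the column-swap type, so no generality is lost by working with $A(i,j,k)$'s rather than arbitrary Givens rotations. Running the same argument with the roles of $G_1$ and $G_2$ exchanged yields the second bullet. Finally, to upgrade Theorem~\ref{thm:mainrotapp} to Theorem~\ref{thm:mainrot2} one shows that irreversible acute rotations are never needed — any sequence using an irreversible acute rotation can be replaced by one using only reductions, reversible acute rotations, and column swaps that reaches a support at least as small — but that refinement is a separate step beyond the weaker statement proved here.
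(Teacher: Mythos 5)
Your plan follows essentially the same route as the paper: reduce to Proposition~\ref{prop:rot}, decompose the orthogonal map into Givens rotations, and pass between Givens rotations and support rotations in both directions, using a generic $Q_1$ for the converse. Two places where the paper supplies details that your sketch leaves open are worth noting. First, in your forward direction the angle $\tan^{-1}(-Q_{i,j}/Q_{i,k})$ is ill-defined or gives the wrong support behaviour when the running matrix has accidental zeros (e.g.\ $Q_{i,j}=Q_{i,k}=0$ while $(\xi_{i,j},\xi_{i,k})=(\times,0)$, where one must force $\theta=\pi/2$ so that the column swap actually happens on $Q$); the paper isolates this case analysis in a lemma showing $\supp(QG(j,k,\theta))\subseteq\supp(\xi A(i,j,k))$ for a suitably chosen $\theta$ in every degenerate configuration. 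Second, the step you flag as the ``technical heart'' --- handling Givens factors that zero nothing --- is resolved in the paper by a simple monotonicity observation: such a rotation only adds entries to the support, so it can be deleted from the sequence outright, provided one also deletes any later zeroing rotation whose target entry was nonzero only because of the deleted step; combined with the genericity of $Q^*$ (so that each zeroing rotation kills exactly one entry), every surviving factor is literally a support rotation. With those two points filled in, your argument coincides with the paper's proof.
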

We need the following lemma for the proof.
\begin{lemma}
\label{lem:thm1}	
Consider a matrix $Q$ and a support matrix $\xi$.
If the support matrix of $Q$ is a subset of $\xi$, then for all $i$, $j$, $k$, the support matrix of $QG(j,k,\theta)$ is subset of $\xi A(i,j,k)$, where,
\[
\theta=
\begin{cases}
0,\quad  &\text{if } Q_{i,j}=Q_{i,k}=0 \text{ and } \xi_{i,j}=\xi_{i,k}\neq 0,\\
0,\quad  &\text{if } Q_{i,j}=Q_{i,k}=0 \text{ and } \xi_{i,k}\neq\xi_{i,j}= 0,\\
\pi/2,\quad  &\text{if } Q_{i,j}=Q_{i,k}=0 \text{ and } \xi_{i,j}\neq\xi_{i,k}= 0,\\
\tan^{-1}(-Q_{i,j}/Q_{i,k}),\quad  &\text{otherwise}.
\end{cases}
\]
\end{lemma}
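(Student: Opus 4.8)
\textbf{Proof plan for Lemma \ref{lem:thm1}.}

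The plan is to argue case by case according to the four branches in the definition of $\theta$, showing in each case that every support entry of $QG(j,k,\theta)$ is matched by a nonzero (i.e. $\times$) entry of $\xi A(i,j,k)$. Recall that both the Givens rotation $G(j,k,\theta)$ and the support rotation $A(i,j,k)$ only touch columns $j$ and $k$, so it suffices to track what happens to an arbitrary row $l$ restricted to its two coordinates in the $(j,k)$ plane, and compare $(QG(j,k,\theta))_{l,\{j,k\}}$ against $(\xi A(i,j,k))_{l,\{j,k\}}$. For the first three branches ($Q_{i,j}=Q_{i,k}=0$), the angle $\theta$ is forced to be $0$ or $\pi/2$ purely from the structure of $\xi$ in row $i$; for the last ("otherwise") branch, $\theta$ is the genuine zeroing angle $\tan^{-1}(-Q_{i,j}/Q_{i,k})$, which is exactly the angle used in the definition of the support rotation applied to the maximizing matrix.

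First I would dispose of the three degenerate branches. If $Q_{i,j}=Q_{i,k}=0$ then $\theta=0$ gives $QG=Q$, so its support is still a subset of $\xi$; and when $\xi_{i,j}=\xi_{i,k}$, the support rotation $A(i,j,k)$ has no effect (it is the "no effect" case of Prop.\ \ref{prop:effect} when $\xi_{i,j}=0$, and the reduction/rotation case only removes entries when $\xi_{i,j}=\times$), so $\xi A(i,j,k)\supseteq\xi\supseteq\supp(Q)=\supp(QG)$. When $\xi_{i,k}\neq\xi_{i,j}=0$ we again take $\theta=0$, and $A(i,j,k)$ again has no effect since $\xi_{i,j}=0$ (Case 1 of Prop.\ \ref{prop:effect}), so the containment is immediate. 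The only genuinely different degenerate branch is $\xi_{i,j}\neq\xi_{i,k}=0$, i.e. $\xi_{i,j}=\times$: here $A(i,j,k)$ is a column swap (Case 3 of Prop.\ \ref{prop:effect}), and correspondingly $\theta=\pi/2$ turns $G(j,k,\pi/2)$ into (up to a sign flip, which does not affect support) the swap of columns $j$ and $k$; since $\supp(Q)\subseteq\xi$ and both sides get their columns $j,k$ swapped, containment is preserved.

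The main work, and the main obstacle, is the "otherwise" branch: $Q_{i,j},Q_{i,k}$ not both zero, so $\theta=\tan^{-1}(-Q_{i,j}/Q_{i,k})$ genuinely zeros $Q_{i,j}$ (or, if $Q_{i,k}=0$, is $\pi/2$ and swaps). The subtlety is that $\xi A(i,j,k)$ is defined using a \emph{specific} matrix $Q'$ — the one maximizing the support of the rotated matrix among all matrices with support exactly $\xi$ — whereas our $Q$ has support only a \emph{subset} of $\xi$, and is rotated by \emph{its own} zeroing angle, which may differ from $Q'$'s zeroing angle. I would handle this by the monotonicity structure of Givens rotations on supports: for a fixed pair of columns, rotating by any nonzero angle $\theta$ sends a row with support pattern $(0,0)$ to $(0,0)$, and sends any other pattern to $(\times,\times)$ \emph{unless} $\theta$ happens to align that row with an axis. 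The maximizing $Q'$ is precisely chosen so that \emph{no} row other than $i$ gets accidentally aligned, hence $\xi A(i,j,k)$ records the "generic" outcome: $\xi_{i,j}\to 0$ (or columns swapped if $\xi_{i,k}=0$), $(0,0)$ rows unchanged, and every other row becomes $(\times,\times)$. Since $\supp(Q)\subseteq\xi$: a row that is $(0,0)$ in $\xi$ is also $(0,0)$ in $Q$, stays $(0,0)$ under $QG$, and is $(0,0)$ in $\xi A(i,j,k)$ — fine. Row $i$: $Q_{i,j}$ becomes $0$ under its own zeroing rotation, and $\xi_{i,j}$ is also $0$ in $\xi A(i,j,k)$ while $\xi_{i,k}=\times$ still dominates whatever $(QG)_{i,k}$ is — fine. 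Any other row $l$ with some nonzero entry in $\xi$'s columns $j,k$: $\xi A(i,j,k)$ has both entries $=\times$ there, which dominates whatever $(QG)_{l,j},(QG)_{l,k}$ are, \emph{regardless} of whether $QG$ happened to create an accidental zero. This last observation is exactly why we are allowed to "not take advantage of accidental zeroings" — the containment is into the larger (generic) support, so accidental zeros in $QG$ only help. Wrapping these row-by-row comparisons together over all $l$ gives $\supp(QG(j,k,\theta))\subseteq\xi A(i,j,k)$, completing the proof.
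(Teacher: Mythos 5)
Your overall strategy is the same as the paper's: restrict attention to columns $j$ and $k$ (the only ones either transformation touches), compare those two columns row by row according to the joint support pattern of $Q$ and $\xi$, and use the fact that the support rotation records the generic, maximal outcome so that accidental zeroings produced by $QG(j,k,\theta)$ can only help the containment. Your handling of the $\pi/2$ branch, of the $\theta=0$ branch with $\xi_{i,j}=0$, and of the ``otherwise'' branch all match the paper's case analysis in substance.

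There is, however, one step that is wrong as written. In the first degenerate branch ($Q_{i,j}=Q_{i,k}=0$ with $\xi_{i,j}=\xi_{i,k}=\times$) you assert that $A(i,j,k)$ ``has no effect'' and conclude $\xi A(i,j,k)\supseteq\xi$. By Case~2 of Proposition~\ref{prop:effect} this is false: since $\xi_{i,j}=\xi_{i,k}=\times$, the support rotation \emph{does} act --- it sets $\xi_{i,j}$ to $0$ (and possibly fills in entries of other rows in columns $j$ and $k$), so $\xi A(i,j,k)\not\supseteq\xi$ precisely at the entry $(i,j)$. The lemma's conclusion survives, but for a different reason, which you need to state: the only entry that $A(i,j,k)$ can \emph{remove} in this case is $(i,j)$, and there $Q_{i,j}=0$ by the hypothesis of this branch, while every other change made by $A(i,j,k)$ only enlarges the support on the right-hand side; hence $\supp(QG(j,k,0))=\supp(Q)\subseteq\xi A(i,j,k)$ still holds entrywise. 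With that sentence substituted for the incorrect ``no effect'' claim, your argument goes through and coincides with the paper's proof.
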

\begin{proof}	
The rotation and the support rotation do not alter any columns except the $j$-th and $k$-th columns. Hence we only need to see if the desired property is satisfied by those two columns. If the support of $Q$ and $\xi$ are the same on those two columns, the desired result follows from the definition of support rotation. Otherwise, 
\begin{itemize}
\item If the support of $(Q_{i,j},Q_{i,k})$ is the same as $(\xi_{i,j},\xi_{i,k})$, then the effect of the rotation on $Q$ is the same as the effect of the support rotation on $\xi$, except that if we are in the second case of Proposition \ref{prop:effect}, the support rotation cannot introduce any extra zeros in rows $[p]\setminus\{i\}$, while this is possible for the rotation on $Q$. Therefore, the support matrix of $QG(j,k,\theta)$ is subset of $\xi A(i,j,k)$.

\item If $Q_{i,j}\neq0$ and $Q_{i,k}=0$, and $(\xi_{i,j},\xi_{i,k})=(\times,\times)$, then the rotation is a $\pm\pi/2$ while we have an acute rotation for $\xi$ (second case of Proposition \ref{prop:effect}). Hence, if a zero entry of $Q$ in a row in $[p]\setminus\{i\}$ has become non-zero after the rotation, $\xi$ has non-zero entries in both entries of that row. Therefore, the support matrix of $QG(j,k,\theta)$ is subset of $\xi A(i,j,k)$.

\item If $[Q_{i,j}=0$ and $Q_{i,k}\neq0$, and $(\xi_{i,j},\xi_{i,k})=(\times,\times)]$,
or $[Q_{i,j}=0$ and $Q_{i,k}=0$, and $(\xi_{i,j},\xi_{i,k})=(0,\times)]$,
or $[Q_{i,j}=0$ and $Q_{i,k}=0$, and $(\xi_{i,j},\xi_{i,k})=(\times,\times)]$, then the rotation has no effect on $Q$, while the support rotation can only turn some of the zero entries in rows $[p]\setminus\{i\}$ to non-zero. Therefore, the support matrix of $QG(j,k,\theta)$ is subset of $\xi A(i,j,k)$.

\item Finally, if $[Q_{i,j}=0$ and $Q_{i,k}=0$, and $(\xi_{i,j},\xi_{i,k})=(\times,0)]$, then by the statement of the lemma, the rotation on $Q$ will be $\pi/2$. Due to this fact and part three of Proposition \ref{prop:effect}, for both $Q$ and $\xi$, columns $j$ and $k$ will be flipped. Therefore, the support matrix of $QG(j,k,\theta)$ is subset of $\xi A(i,j,k)$.
\end{itemize}
\end{proof}

\begin{proof}[Proof of Theorem \ref{thm:mainrotapp}]
By Propositions \ref{prop:rot}, it suffices to show that there exists a sequence of support rotations $A_1,\cdots A_m$, such that $\xi_1 A_1,\cdots A_m\subseteq\xi_2$ if and only if for all choices of $Q_1$, there exists a sequence of Givens rotations $G_1,\cdots G_{m'}$ such that $\supp(Q_1G_1,\cdots G_{m'})\subseteq\supp(Q_{G_2})$.

Only if side:\\
For any matrix $Q_1$, by definition, the support matrix of $Q_1$ is a subset of $\xi_1$. In the sequence of support rotations, use the first support rotation $A_1(i,j,k)$ to generate Givens rotation $G_1(j,k,\theta)$, where $\theta$ is defined in the statement of Lemma \ref{lem:thm1}. Therefore, by Lemma \ref{lem:thm1}, the support matrix of $Q_1G_1(j,k,\theta)$ is a subset of $\xi_1A_1(i,j,k)$. Repeating this procedure, we see that the support matrix of $Q_1G_1,\cdots G_{m}$ is a subset of $\xi_1 A_1,\cdots A_m$. Now, by the assumption, $\xi_1 A_1,\cdots A_m\subseteq\xi_2$, and by definition, $\supp(\xi_2)=\supp(Q_{G_2})$. Therefore, $\supp(Q_1G_1,\cdots G_{m})\subseteq\supp(Q_{G_2})$.

If side:\\ 
Consider Givens rotation $G(j,k,\theta)$ applied to matrix $Q$. The effect of this rotation is one of the following:
\begin{enumerate}
	\item For an acute rotation, zeroing a subset of entries in columns $j$ and $k$.
	\item For a $\pm\pi/2$ rotation, swapping the support of columns $j$ and $k$.
	\item For an acute rotation, making no entries zero, while making a subset of the entries in columns $j$ and $k$ non-zero.
	\item For an acute rotation, no change to $\supp(Q)$.
\end{enumerate}
Since the assumption is true for all $Q$, we focus on matrices with support matrix $\xi_1$ (i.e., none of the free parameters are set at zero). If in case 1 above the subset has more than one element, more than one rows of $Q$ have been aligned on the $(j,k)$ plane, not on the $j$ and $k$ axes. Therefore, there exists another $Q$ (i.e., another choice of free parameters), in which those rows are not aligned. Consider $Q^*$ for which no such alignment happens, and hence, each of the Givens rotations in its sequence of rotations that causes case 1 above, only makes one entry zero. Therefore, its corresponding sequence of rotations acts exactly the same as support rotations for effects 1 and 2 above, in terms of their effect on the support. 

Hence, the proof is complete by showing that cases 3 and 4 can be ignored, because we assumed that the support matrix of $Q^*$ is $\xi_1$, and each not ignored Givens rotation corresponds to a support rotation, and by definition, $\supp(Q_{G_2})=\supp(\xi_2)$.
Clearly, case 4 can be ignored as it has no effect on the support. For case 3, we note that this effect only adds elements to the support, and hence we want the support after rotations to be a subset of $\supp(Q_{G_2})$, the rotations of this type do not serve for that purpose. Therefore, if we ignore such rotations, the resulting support would be smaller compared to the case of considering these rotations. Note that if due to such rotation entry $Q_{i,j}$ has become non-zero and later in the sequence there exists a type 1 rotation making $Q_{i,j}$ zero again, we already have zero in position $(i,j)$ and that type 1 rotation should be ignored as well.

\end{proof}


Similar to the notion of distribution set, for a support matrix $\xi$ we define
\[
\Theta(\xi)\coloneqq\{\Theta:\Theta=\tilde{Q}\tilde{Q}^\top,\textit{ for any }\tilde{Q}\textit{ s.t. }\supp(\tilde{Q})\subseteq\supp(\xi)\}.
\]
Note that unlike $Q$, the matrix $\tilde{Q}$ is allowed to have zeros on its diagonal.
\begin{definition}
A support rotation mapping $\xi$ to $\xi'$ is lossless if $\Theta(\xi)=\Theta(\xi')$.
\end{definition}
Similar to the test for distribution equivalence, losslessness can be evaluated by checking if there exists a sequence of support rotations that maps $\xi'$ back to a subset of $\xi$. Clearly, reduction, reversible acute rotation, and column swap are lossless, as they are reversible. 
In most of the cases, irreversible acute rotations are lossy and lead to expansion of $\Theta(\xi)$, as it introduces capacity for having extra free variables. However, this is not necessarily the case. 

We have the following observations regarding checking for distribution equivalence.
\begin{lemma}
\label{lem:loss}
All the support rotations for checking the distribution equivalence of two directed graphs should be lossless.
\end{lemma}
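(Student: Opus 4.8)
The plan is to argue by contradiction using the characterization of distribution equivalence from Theorem \ref{thm:mainrotapp} (the intermediate result proved just above, which is equivalent to Theorem \ref{thm:mainrot2} modulo which rotation types suffice). Suppose $G_1 \equiv G_2$ and that some sequence $A_1, \dots, A_m$ of support rotations witnesses $\xi_1 A_1 \cdots A_m \subseteq \xi_2$, but that one of the rotations in this sequence — say the first lossy one, $A_t$, taking $\eta := \xi_1 A_1 \cdots A_{t-1}$ to $\eta' := \eta A_t$ — is lossy, i.e. $\Theta(\eta') \supsetneq \Theta(\eta)$ (containment in this direction always holds for an irreducible rotation since the output support is not smaller and the zeroed entry merely constrains). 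I would first observe that every support rotation is ``monotone'' in the sense that $\Theta(\xi) \subseteq \Theta(\xi A(i,j,k))$ whenever the rotation is an irreversible acute rotation or a reduction-that-isn't-reducible, and $\Theta(\xi) = \Theta(\xi A(i,j,k))$ for the three reversible types; this is essentially the content of the paragraph preceding the lemma. Hence along the whole sequence we get $\Theta(\xi_1) \subseteq \Theta(\eta) \subsetneq \Theta(\eta') \subseteq \Theta(\xi_1 A_1\cdots A_m) \subseteq \Theta(\xi_2)$, where the strict inclusion is the one coming from $A_t$.

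Next I would pin down where the strictness bites. Pick $\Theta^\star \in \Theta(\eta') \setminus \Theta(\eta)$, realized by some $\tilde Q$ with $\supp(\tilde Q) \subseteq \supp(\eta')$. The crucial point is that $\tilde Q$ may have zeros on its diagonal, but by perturbing the free entries of $\tilde Q$ slightly we may assume $\supp(\tilde Q) = \supp(\eta')$ exactly and, moreover, that $\tilde Q$ still lies outside $\Theta(\eta)$-realizability — this uses that $\Theta(\eta)$ is a closed lower-dimensional-or-equal image, so its complement in $\Theta(\eta')$ is open and dense relative to the full-support locus (this is where I expect to need a small dimension-counting or Sard-type argument, analogous to Proposition \ref{prop:mes0}). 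Now run the backward sequence guaranteed by distribution equivalence: there is a sequence of support rotations mapping $\xi_2$ to a subset of $\xi_1$, hence $\Theta(\xi_2) \subseteq \Theta(\xi_1) \subseteq \Theta(\eta)$ by the forward monotonicity applied to $\xi_1 \to \eta$ — wait, that direction gives $\Theta(\xi_1) \subseteq \Theta(\eta)$ only if the rotations $A_1,\dots,A_{t-1}$ are lossless, which they are by our choice of $A_t$ as the \emph{first} lossy one. Combining, $\Theta^\star \in \Theta(\eta') \subseteq \Theta(\xi_2) \subseteq \Theta(\xi_1) \subseteq \Theta(\eta)$, contradicting $\Theta^\star \notin \Theta(\eta)$. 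So no rotation in the sequence can be lossy; and by symmetry the same holds for the sequence mapping $\xi_2$ into $\xi_1$.

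The main obstacle I anticipate is the second step: making precise the claim that a lossy support rotation genuinely enlarges $\Theta(\cdot)$ in a way that survives the restriction to full-support parameterizations — i.e. that ``lossy'' (as defined, $\Theta(\eta) \neq \Theta(\eta')$) cannot be an artifact that disappears once we insist on generic, nowhere-zero entries. One clean way around this is to define ``lossy'' up front in terms of the dimensions of the associated varieties (number of free parameters) rather than set equality, and to show the support-rotation operations interact with that dimension count transparently: reversible operations preserve it, irreversible acute rotations and genuine reductions strictly change it in a fixed direction. Then the contradiction becomes a simple inequality chain on integers, and the topological perturbation argument is relegated to the (already available) measure-theoretic machinery behind $\Theta(\xi)$. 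I would also double-check the edge case where the lossy rotation zeros a diagonal entry — but such rotations are exactly the ones with no graphical meaning, and one can verify directly that they are either vacuous or correspond on the support level to an ordinary reduction/acute rotation, so they do not give new behavior.
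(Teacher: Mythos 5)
Your argument is correct and follows essentially the same route as the paper: establish monotonicity $\Theta(\xi)\subseteq\Theta(\xi A(i,j,k))$ for every support rotation (the paper's Lemma \ref{lem:subset}), then chain the inclusions through the witnessing sequence so that a single lossy step forces $\Theta(\xi_1)\subsetneq\Theta(\xi_2)$, contradicting equivalence. The perturbation/Sard-type machinery you anticipate needing is unnecessary, since $\Theta(\eta)\neq\Theta(\eta')$ together with monotonicity already hands you an element of $\Theta(\eta')\setminus\Theta(\eta)$ with no genericity argument required.
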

We need the following lemma for the proof.
\begin{lemma}
\label{lem:subset}
If support matrix $\xi$ is mapped to $\xi'$ via a  support rotation, then $\Theta(\xi)\subseteq\Theta(\xi')$.
\end{lemma}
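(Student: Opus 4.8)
\textbf{Proof plan for Lemma \ref{lem:subset}.}

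The plan is to show that any precision-type matrix $\Theta = \tilde Q \tilde Q^\top$ realizable with support inside $\xi$ is also realizable with support inside $\xi'$, where $\xi' = \xi A(i,j,k)$ for some support rotation. The natural strategy is to exploit the orthogonality of the underlying Givens rotation: if $\supp(\tilde Q) \subseteq \xi$, then for an appropriate angle $\theta$ the product $\tilde Q G(j,k,\theta)$ satisfies $\tilde Q G(j,k,\theta) (\tilde Q G(j,k,\theta))^\top = \tilde Q G G^\top \tilde Q^\top = \tilde Q \tilde Q^\top = \Theta$, so it suffices to check that $\supp(\tilde Q G(j,k,\theta)) \subseteq \xi'$. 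This is exactly the content of Lemma \ref{lem:thm1}, which was proved for arbitrary matrices $Q$ — crucially, its proof does not use that $Q$ has nonzero diagonal, so it applies verbatim to $\tilde Q$. Thus the heart of the argument is to invoke Lemma \ref{lem:thm1} with the angle $\theta$ prescribed there (the case split on whether $\tilde Q_{i,j}, \tilde Q_{i,k}$ are zero and how the support entries $\xi_{i,j}, \xi_{i,k}$ compare).

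Concretely, I would proceed as follows. First, fix $\Theta \in \Theta(\xi)$ and a witness $\tilde Q$ with $\supp(\tilde Q) \subseteq \supp(\xi)$. Second, let $A(i,j,k)$ be the support rotation mapping $\xi$ to $\xi'$, and set $\theta$ as in the statement of Lemma \ref{lem:thm1} (depending on $\tilde Q_{i,j}$, $\tilde Q_{i,k}$, $\xi_{i,j}$, $\xi_{i,k}$). Third, apply Lemma \ref{lem:thm1} to conclude $\supp(\tilde Q G(j,k,\theta)) \subseteq \xi A(i,j,k) = \xi'$. Fourth, observe that $G(j,k,\theta)$ is orthogonal, so $\tilde Q' := \tilde Q G(j,k,\theta)$ is a valid witness that $\Theta = \tilde Q' (\tilde Q')^\top \in \Theta(\xi')$. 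Since $\Theta$ was arbitrary, $\Theta(\xi) \subseteq \Theta(\xi')$.

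The only genuine subtlety — and the step I expect to need the most care — is making sure Lemma \ref{lem:thm1} really does transfer to the $\tilde Q$ setting where diagonal zeros are permitted. The lemma was stated for a generic matrix $Q$ and a generic support matrix $\xi$ with $\supp(Q) \subseteq \supp(\xi)$, with no constraint that either be supported off a particular diagonal, so the hypothesis is satisfied. I would state this transfer explicitly (perhaps as a one-line remark that Lemma \ref{lem:thm1} and its proof never used $Q \in \{Q : \supp(Q) \subseteq \supp(Q_G)\}$ with unit diagonal), and then the rest is the bookkeeping above. A secondary minor point is that in the degenerate cases where the support rotation has no effect or merely swaps columns $j$ and $k$, one must confirm the claimed inclusion still holds — but these are precisely handled by the first three branches of the $\theta$ definition in Lemma \ref{lem:thm1}, so no new work is needed.
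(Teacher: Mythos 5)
Your proof is correct, but it takes a different route from the paper's. The paper proves Lemma \ref{lem:subset} by a case split on the four rotation types: for reduction, reversible acute rotation, and column swap it appeals to the fact that these operations are reversible (hence ``lossless,'' $\Theta(\xi)=\Theta(\xi')$), and for the irreversible acute rotation it gives a separate composition argument --- after $A(i,j,k)$ produces $\xi'$, one more reversible acute rotation $A(i',j,k)$ yields a $\xi''$ with $\xi\subsetneq\xi''$, so $\Theta(\xi)\subseteq\Theta(\xi'')=\Theta(\xi')$. You instead give a single uniform argument covering all four cases at once: take any witness $\tilde Q$ with $\supp(\tilde Q)\subseteq\xi$, apply Lemma \ref{lem:thm1} with the prescribed angle to get $\supp(\tilde Q\,G(j,k,\theta))\subseteq\xi A(i,j,k)=\xi'$, and use orthogonality of the Givens rotation to conclude $\tilde Q\,G(j,k,\theta)$ is a witness for the same $\Theta$ in $\Theta(\xi')$. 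Your observation that Lemma \ref{lem:thm1} never uses a nonzero-diagonal assumption, so it applies to the $\tilde Q$'s appearing in the definition of $\Theta(\xi)$, is the right thing to check and it does hold. What your route buys is uniformity and a cleaner logical footing: the paper's treatment of the three reversible cases leans on the prior assertion that reversible operations are lossless, whose forward inclusion is itself an instance of the statement being proved, whereas your argument establishes the one-step inclusion directly from Lemma \ref{lem:thm1}. What the paper's route buys is that it stays entirely within the combinatorial support-matrix calculus and makes explicit \emph{why} the irreversible acute rotation can only enlarge the distribution set (it introduces extra free entries), which is the intuition later used in Lemma \ref{lem:loss}.
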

\begin{proof}	
For reduction, reversible acute rotation, and column swap, we have $\Theta(\xi)=\Theta(\xi')$, and irreversible acute rotation only introduces extra free variables, and hence, leads to $\Theta(\xi)\subseteq\Theta(\xi')$. To make the argument regarding irreversible acute rotation rigorous, consider irreversible acute rotation $A(i,j,k)$, which zeros $\xi_{i,j}$. For all $l\in[p]\setminus\{i\}$, if $\xi_{l,j}\neq\xi_{l,k}$, this rotation results in $(\xi_{l,j},\xi_{l,k})=(\times,\times)$. Suppose $(\xi_{i',j},\xi_{i',k})=(0,\times)$.  $A(i',j,k)$ will be a reversible acute rotation for $\xi'$ and leads to $\xi''$ such that $\xi\subsetneq\xi''$. Therefore, $\Theta(\xi)\subseteq\Theta(\xi'')=\Theta(\xi')$.

\end{proof}
\begin{proof}[Proof of Lemma \ref{lem:loss}]

If support matrix $\xi$ is mapped to $\xi'$ via a lossy support rotation, i.e., $\Theta(\xi)\neq\Theta(\xi')$ then by Lemma \ref{lem:subset}, we have $\Theta(\xi)\subsetneq\Theta(\xi')$.
Suppose we want to check the equivalence of directed graphs $G_1$ and $G_2$ with support matrices $\xi_1$ and $\xi_2$, respectively. We note that $\Theta(G_1)=\Theta(\xi_1)$. Suppose $\xi_1$ is mapped to $\xi$ through a sequence of
support rotations, including a lossy rotation, which in turn is mapped to $\xi'\subseteq\xi_2$. Therefore, 
\[
\Theta(G_1)=\Theta(\xi_1)\subsetneq\Theta(\xi)\subseteq\Theta(\xi')\subseteq\Theta(\xi_2)=\Theta(G_2).
\]
Therefore,
\[
\Theta(G_1)\neq\Theta(G_2).
\]
\end{proof}

Using Lemma \ref{lem:loss}, we can prove Theorem \ref{thm:mainrot2}:

\begin{proof}
The if side is clear by Theorem \ref{thm:mainrotapp}. For the only if side, by Theorem \ref{thm:mainrotapp} and Lemma \ref{lem:loss} we show that if $\xi_1$ can be mapped to $\xi_2$ via a sequence of lossless support rotations (i.e., $\Theta(\xi_1)=\Theta(\xi_2)$) including an irreversible acute rotation, then there exists a sequence of support rotations which does not include any irreversible acute rotations that maps $\xi_1$ to a subset of $\xi_2$.

We show that every irreversible acute rotation can be replaced by other types of support rotation. Consider the first irreversible acute rotation $A(i,j,k)$ in the sequence, which maps $\xi$ to $\xi'$. 
Applying this rotation, we have $(\xi'_{i,j},\xi'_{i,k})=(0,\times)$, and columns $\xi'_{\cdot,j}$ and $\xi'_{\cdot,k}$ agree on the rest of the entries.
Suppose, prior to applying this rotation, columns $\xi_{\cdot,j}$ and $\xi_{\cdot,k}$ disagree on $m$ entries in rows with indices $\textit{diff}=\{s_1,\cdots,s_m\}$. Let 
\[
\textit{diff}_j=\{l:l\in\textit{diff},\xi_{l,j}=0\},
\]  
\[
\textit{diff}_k=\{l:l\in\textit{diff},\xi_{l,k}=0\}, 
\]
and 
\[
M=
\begin{cases}
\max\{m_j,m_k\},&\quad m_j\neq m_k,\\
m_j+1,&\quad\textit{otherwise}.
\end{cases}
\]
where $m_j=|\textit{diff}_j|$ and $m_k=|\textit{diff}_k|$. We can always swap two columns, hence, without loss of generality, assume $M=m_j+\mathds{1}_{\{m_j=m_k\}}$.
\begin{claim}
\label{clm:1}

$\xi$ can be transformed via reduction and reversible acute rotation to a support matrix, in which there exist columns with indices $\{t_1,\cdots,t_{M-1}\}$ such that the sub-matrix of $\xi$ on columns $\{t_1,\cdots,t_{M-1},j,k\}$ and rows $\textit{diff}\cup\{i\}$ has a column with $i$ zeros, for all $i\in\{0,1,...,M\}$, and the sub-matrix of $\xi$ on columns $\{t_1,\cdots,t_{M-1},j,k\}$ and the rest of the rows has equal columns.
\end{claim}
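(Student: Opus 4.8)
The plan is to build up the required structure by carefully peeling off the disagreements between columns $j$ and $k$ one row at a time, using only reductions and reversible acute rotations (both of which are lossless, so the final support matrix still parameterizes the same distribution set). First I would set up notation: the disagreement rows $\textit{diff} = \{s_1,\dots,s_m\}$ split into $\textit{diff}_j$ (rows where $\xi_{\cdot,j}$ has a zero and $\xi_{\cdot,k}$ has a $\times$) and $\textit{diff}_k$ (the reverse), with $m_j = |\textit{diff}_j|$, $m_k = |\textit{diff}_k|$, and $m = m_j + m_k$. After the (harmless) column swap normalizing $m_j \ge m_k$, the target count is $M = m_j + \mathds{1}_{\{m_j = m_k\}}$. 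Note $i$ is the row being zeroed by the irreversible rotation $A(i,j,k)$, so $\xi_{i,j} = \xi_{i,k} = \times$ and $i \notin \textit{diff}$.

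The core idea is to introduce $M-1$ auxiliary columns $t_1,\dots,t_{M-1}$, each of which ``absorbs'' one of the disagreement rows so that, restricted to the rows $\textit{diff} \cup \{i\}$, the sub-matrix on columns $\{t_1,\dots,t_{M-1},j,k\}$ realizes every zero-count in $\{0,1,\dots,M\}$. The key mechanism is the \emph{reversible acute rotation}: recall from the bullet list in Section \ref{sec:test} that if $\xi_{a,j} = \xi_{a,k} = \times$ and columns $j,k$ differ in exactly one row $b$, then $A(a,j,k)$ zeros $\xi_{a,j}$ and turns $\xi_{b,j},\xi_{b,k}$ both into $\times$; since this is reversible, we can move a single ``$\times$ vs $0$'' discrepancy around the $(j,k)$ columns without changing $\Theta(\xi)$. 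I would iterate: at step $\ell$, locate a disagreement row $s_{\ell}$ that (after previous steps) is the \emph{unique} disagreement between the current pair of worked-on columns, use a reversible acute rotation pivoting on row $i$ (or a previously processed row) to transfer that discrepancy into a fresh column $t_\ell$, and record that column. Because a reversible acute rotation requires a pivot row with $\times$ in both columns and exactly one differing row, I would need to verify that at each stage such a pivot exists — here row $i$ serves as the ``anchor'' pivot since it has $\times$ in both $j$ and $k$, and I can reduce the number of active disagreements to one by first handling the $\textit{diff}_j$ rows and the $\textit{diff}_k$ rows separately in the auxiliary columns. The asymmetry in the definition of $M$ (the $m_j + 1$ in the tie case versus $\max$ otherwise) comes out of exactly how many auxiliary columns are needed: when $m_j \ne m_k$ the larger side forces $\max\{m_j,m_k\}$ distinct zero-counts among the $M+1$ columns, whereas when $m_j = m_k$ a parity obstruction forces one extra column.

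The verification that the resulting sub-matrix on $\{t_1,\dots,t_{M-1},j,k\}$ and rows $\textit{diff} \cup \{i\}$ attains each zero-count $0,1,\dots,M$ is a direct bookkeeping argument once the auxiliary columns are constructed: by design column $t_\ell$ carries $\ell$ (or $\ell-1$) zeros in those rows, columns $j$ and $k$ carry the two extreme counts, and the $M+1$ columns are arranged to hit consecutive values. Finally I would check the last clause — that on the \emph{complementary} rows the columns $\{t_1,\dots,t_{M-1},j,k\}$ all agree: this holds because the only rows ever touched by our reductions and reversible acute rotations are the pivot rows ($i$ and previously processed disagreement rows) and the disagreement rows themselves, all of which lie in $\textit{diff} \cup \{i\}$, and the fresh columns $t_\ell$ were chosen to have been copies of $j$ (hence equal to $k$ on the agreement rows) before being modified only on rows in $\textit{diff} \cup \{i\}$. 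The main obstacle I anticipate is making the inductive construction of the auxiliary columns fully rigorous — in particular ensuring at every step that a legal reversible acute rotation (correct pivot row, exactly one differing row) is available, and correctly tracking the tie-case parity argument that produces the extra column; the rest is careful accounting rather than deep mathematics.
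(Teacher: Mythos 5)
There is a genuine gap in your proposal: you never use the hypothesis that the irreversible acute rotation $A(i,j,k)$ is \emph{lossless}, and that hypothesis is the entire engine of the claim. Your construction speaks of ``introducing'' $M-1$ auxiliary columns and transferring discrepancies ``into a fresh column $t_\ell$,'' with the justification that each $t_\ell$ ``was chosen to have been a copy of $j$'' on the agreement rows. But $\xi$ is a fixed $p\times p$ support matrix, and reductions and reversible acute rotations only modify existing columns --- they cannot create new ones. So the $t_\ell$ must be found among the columns $\xi$ already has, and nothing in your argument explains why $\xi$ should contain $M-1$ columns whose supports match column $j$ (or $k$) closely enough to serve as pivots. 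The existence of those columns is precisely the content of the claim, not something you may assume.

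The paper's proof obtains these columns as a \emph{consequence} of losslessness: since $A(i,j,k)$ is lossless, $\xi'$ must map back to a subset of $\xi$, which requires reinstating the $m-1$ zeros destroyed by the irreversible rotation; reduction is the only operation that increases the number of zeros, and a reduction on a pair of columns requires them to be identical; chasing this requirement (by induction on $M$, with a case analysis on $(m_j,m_k)$ for the base case $M=2$) forces columns $t_1,\dots,t_{M-1}$ with the stated zero-count profile to already be present in $\xi$, because the original rotation touched only columns $j$ and $k$ and so could not have manufactured them. Your instinct that the ``main obstacle'' is guaranteeing a legal pivot at each step is pointing at the right place, but the resolution is not more careful bookkeeping of your forward construction --- it is running the argument backwards from the losslessness assumption. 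Without that, the claim is simply false for a generic $\xi$ (an irreversible acute rotation on an arbitrary support matrix is typically lossy, and no such auxiliary columns exist).
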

\begin{proof}[Proof of Claim \ref{clm:1}]
Since $A(i,j,k)$ is lossless, we can map $\xi'$ to a subset of $\xi$. Therefore, we should be able to introduce zeros in $\xi'$ in indices $\textit{diff}_j$ of column $j$ and indices $\textit{diff}_k$ of column $k$, without removing the existing zeros, except potentially $\xi'_{ij}$.
We first use a reversible acute rotation on columns $j$ and $k$ to move the newly introduce zero in $\xi'_{ij}$ to the first index in $\textit{diff}_j$, and we denote the resulting support matrix by $\xi^{(1)}$.
We note that reduction is the only support rotation, which increases the number of zeros in the support matrix. Therefore, we need one reduction for reviving each of the $m-1$ other removed zeros in the transformation of $\xi$ to $\xi'$.

The claim can be proven by induction. The base of the induction, i.e., for $M=2$ can be proven as follows:
\begin{itemize}
\item {\bf Case 1: }$m_j=m_k=1$. In order to have the zero in column $k$, we need to perform a reduction, for which, we need another column $\xi^{(1)}_{\cdot,t_1}$ equal to $\xi^{(1)}_{\cdot,k}$, i.e., $d_H(\xi^{(1)}_{\cdot,t_1},\xi^{(1)}_{\cdot,k})=0$, where $d_H(\cdot,\cdot)$ denotes the Hamming distance between its two arguments. Since the original irreversible acute rotation was on the $(j,k)$ plane and did not affect other columns, the column $t_1$ with the aforementioned property exists in the original support matrix $\xi$ as well, i.e., $\xi_{\cdot,t_1}=\xi^{(1)}_{\cdot,t_1}$. Now, a reversible acute rotation can be performed on columns $t_1$ and $k$ to set $d_H(\xi_{\cdot,j},\xi_{\cdot,j})=0$, and then a reduction can be performed to introduce another zero in column $j$ of $\xi$. The resulting support matrix has the desired property stated in the claim.
\item {\bf Case 2: }$m_j=2,m_k=0$. In order to have the zero in the second index of $\textit{diff}_j$, we need to perform a reduction, for which, we need another column equal to $\xi^{(1)}_{\cdot,j}$. This can be obtained by one of the following cases:
\begin{itemize}
\item There already exists a column $t_1$, such that $d_H(\xi^{(1)}_{\cdot,t_1},\xi^{(1)}_{\cdot,j})=0$. Similar to Case 1, This implies that column $t_1$ also exists in $\xi$. Therefore, $\xi$ has the desired property.
\item There exists a column $t_1$, such that $d_H(\xi^{(1)}_{\cdot,t_1},\xi^{(1)}_{\cdot,j})\neq0$, but $d_H(\xi^{(1)}_{\cdot,t_1},\xi^{(1)}_{\cdot,k})=1$. Similar to Case 1, This implies that column $t_1$ also exists in $\xi$. Therefore, a reversible acute rotation can transform $\xi$ to a support matrix with the desired property.
\item There exists a column $t_1$, such that  $d_H(\xi^{(1)}_{\cdot,t_1},\xi^{(1)}_{\cdot,k})=0$. Similar to Case 1, This implies that column $t_1$ also exists in $\xi$. Therefore, two reductions, one on columns $(t_1,k)$, and then one on columns $(t_1,j)$ can transform $\xi$ to a support matrix with the desired property.
\end{itemize}
\item {\bf Case 3: }$m_j=2,m_k=1$. In order to have the zero in column $k$, we need to perform a reduction, for which, we need another column $t_1$ equal to column $k$, i.e., $d_H(\xi^{(1)}_{\cdot,t_1},\xi^{(1)}_{\cdot,k})=0$. Similar to Case 1, This implies that column $t_1$ also exists in $\xi$. Therefore, $\xi$ has the property desired in the claim.
\end{itemize}
Now, suppose the property holds for $M=n$. To show that it also holds for $M=n+1$, a reasoning same as the one provided for the base case of the induction can be used, and it can be shown that for the required extra reduction, an extra column $t_n$ should exist in $\xi$.

\end{proof}

By Claim \ref{clm:1}, $\xi$ can be transformed via reduction and reversible acute rotation to a support matrix with the stated property. Therefore, we assume $\xi$ has the property. Therefore, we have columns $\{t_1,\cdots,t_{M-1},j,k\}$ with any number of zeros $0\le i\le M$ on rows $\textit{diff}\cup\{i\}$, and it is easy to see the $i$ zeros in these columns can be relocated to any other indices via only reversible acute rotations amongst these columns. Therefore, any effect sought to be achieved via columns $j$ and $k$ of $\xi'$, can be obtained via columns $\{t_1,\cdots,t_{M-1},j,k\}$ of $\xi$, and hence, the irreversible acute rotation could have been replaced by other types of rotations.

\end{proof}

\section{Proof of Proposition \ref{prop:dir}}

To show that the property holds for cycle $C=(X_1,\cdots,X_m,X_1)$, we note that our desired support matrix is $\xi_1$, when columns $2$ to $m$ are all shifted to left by one, and column $1$ is moved to location $m$. Therefore, it suffices to first flip columns $1$ and $2$, then $2$ and $3$, all the way to $m-1$ and $m$. For each flip, we use the third part of Proposition \ref{prop:effect}. For instance, for flipping columns $j$ and $j+1$, we find row $i$ such that $\xi_{i,j}\neq\xi_{i,j+1}$ (if there is no such row, then no flip for those columns is needed as they are already the same). If, say $\xi_{i,j}=\times$, we use support rotation $A(i,j,j+1)$ for flipping columns $j$ and $j+1$. Following the same reasoning, we see that  support rotation of $\xi_2$ leads to a subset of $\xi_1$.

\section{Proof of Proposition \ref{prop:col}}

If side:\\
If columns of $\xi_2$ are permutation of columns of $\xi_1$, then $\xi_1$ can be mapped to $\xi_2$ and vice versa via a sequence of column swap rotations. Therefore, by Theorem \ref{thm:mainrot2}, $G_1\equiv G_2$.

Only if side:\\
If $G_1\equiv G_2$, the by Theorem \ref{thm:mainrot2}, $\xi_1$ can be mapped to a subset of $\xi_2$ and $\xi_2$ can be mapped to a subset of $\xi_1$, both via only reductions, reversible acute rotations and column swaps. If each pair of column of $\xi_1$ are different in more than one entry, then we are not able to perform any reversible acute rotations and reductions. Therefore, we have been able to perform the mapping merely via column swaps. Therefore, columns of $\xi_2$ are permutation of columns of $\xi_1$.

\section{Proof of Proposition \ref{prop:red}}

Only if side:\\
By definition, directed graph $G$ is reducible if there exists directed graph $G'$ such that $G\equiv G'$ and $\xi'\subset \xi$. By Theorem \ref{thm:mainrot2}, $\xi$ can be mapped to a subset of $\xi'$ via a sequence of support rotations comprised of reductions, reversible acute rotations and column swaps. We note that reduction is the only support rotation, which increases the number of zeros in the support matrix. 
Therefore, there should be a reduction in the sequence. We can always swap any two columns and the location of two columns does not influence the feasibility of reduction or reversible acute rotations. Therefore, column swaps can be ignored in reducibility.

If side:\\
Suppose the performed reduction turns a non-zero entry in column $j$ to zero, using a reduction on columns $j$ and $k$. Note that prior to the reduction, these columns have the same number of zeros and in order to be able to perform the reduction a sequence of reversible acute rotations have been performed to prepare column $k$ such that the hamming distance of columns $j$ and $k$ be equal to zero. That is, its zeros have been moved to match the zero pattern of column $j$. We can always assume that we only moved the zeros of column $k$, as if there are columns to move the zeros of column $j$, they can be used to move the zeros of column $k$ as well.
The only concern is that the zeroed entry may be on the diagonal. In this case, a reversible acute rotation can be performed on columns $j$ and $k$ to move the new zero to another index of column $j$.
Also, entry $(j,j)$ cannot be the only non-zero entry of column $j$; otherwise, column $k$ should also have only one non-zero entry, which should initially be located at $(k,k)$. Therefore, to perform a reversible acute rotation on any other column $l$ and $k$, column $l$ should have only two non-zero entries, on $(k,l)$ and $(j,l)$, while one of them should initially be located at $(l,l)$. This reasoning can be repeated $p$ times and leads to the contradiction that the final column is not allowed to have a non-zero entry on the diagonal, which contradicts the fact that $\xi$ is the support matrix corresponding to a directed graph.
 Finally, all the performed reversible acute rotations can be done in the reverse direction to obtain the initial zero pattern for columns $[p]\setminus\{j\}$.

\section{Proof of Proposition \ref{prop:2cycle}}

Using Proposition \ref{prop:red}, we show that for directed graph $G$ with support matrix $\xi$, if there exists a sequence of reversible support rotations that enables us to apply a reduction to $\xi$, then $G$ has a 2-cycle.
Suppose the reduction is performed on columns $j$ and $k$, to turn a non-zero entry of column $j$ to zero. 
If no reversible support rotations prior to the reduction is needed, it implies that already columns $j$ and $k$ are identical. Therefore, $\xi_{j,k}=\xi_{j,j}=\times$, and $\xi_{k,j}=\xi_{k,k}=\times$. Therefore, there exists a 2-cycle between $j$ and $k$ and the proof is complete. Therefore, we assume some reversible support rotations are needed.

Consider the first rotation in the sequence of reversible support rotations applied to column $k$. Assume it is performed on columns $t_1$ and $k$. Therefore, the support of column $t_1$ has one element more than the support of column $k$, and the Hamming distance between these two columns is one. The only way that this does not cause a 2-cycle between $t_1$ and $k$ is that $\xi_{t_1,k}=0$, and $\xi_{k,t_1}=\times$, and all the entries show be the same.
This rotation is supposed to move the extra zero in column $k$ to an index, which is zero in column $j$ (to reduce the Hamming distance between columns $j$ and $k$). Therefore, since after this rotation, $\xi_{t_1,k}$ will become non-zero, we should have $\xi_{t_1,j}=\times$. This will lead to a 2-cycle unless if $\xi_{j,t_1}=0$. Now, if $\xi_{j,t_1}=0$, because all the entries of columns $t_1$ and $k$ where the same, we also have $\xi_{j,k}=0$. This gives us two options for $\xi_{k,j}$:
\begin{itemize}
\item	If $\xi_{k,j}=0$, then we need another column $t_2$ so that we perform a reversible acute rotation on columns $t_2$ and $k$ to move $\xi_{j,k}=0$ to entry $\xi_{k,k}$, which is currently non-zero. This means that columns $t_2$ and $k$ should be the same on all the entries, except that $\xi_{j,t_2}=\times$, but $\xi_{j,k}=0$. Therefore, $\xi_{k,t_2}=\xi_{k,k}=\times$ and $\xi_{t_2,k}=\xi_{t_2,t_2}=\times$, which implies that there is a 2-cycle between $t_2$ and $k$.
\item	If $\xi_{k,j}=\times$, then in order for columns $k$ and $j$ to have the same number of non-zero entries, there should exist index $l$ such that $\xi_{l,k}=\times$, and $\xi_{l,j}=0$. Now, we need another column $t_2$ so that we perform a reversible acute rotation on columns $t_2$ and $k$ to move $\xi_{j,k}=0$ to entry $\xi_{l,k}$. This means that columns $t_2$ and $k$ should be the same on all the entries, except that $\xi_{j,t_2}=\times$, but $\xi_{j,k}=0$. Therefore, $\xi_{k,t_2}=\xi_{k,k}=\times$ and $\xi_{t_2,k}=\xi_{t_2,t_2}=\times$, which implies that there is a 2-cycle between $t_2$ and $k$.
\end{itemize}

\section{Proof of Corollary \ref{cor:DAG}}

We first prove the following corollary:
\begin{corollary}
\label{cor:irred}
Irreducible directed graphs $G_1$ and $G_2$ with support matrices $\xi_1$ and $\xi_2$ are equivalent if and only if there exist sequences of reversible acute rotations and column swaps that map their support matrices to one another.
\end{corollary}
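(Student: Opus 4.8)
The plan is to derive the corollary from Theorem~\ref{thm:mainrot2} by showing that, when both graphs are irreducible, the \emph{reduction} operations in the rotation sequences guaranteed by that theorem can be dispensed with entirely. The ``if'' direction requires no irreducibility: reversible acute rotations and column swaps are among the operations permitted in Theorem~\ref{thm:mainrot2}, so if such sequences map $\xi_1$ onto $\xi_2$ and $\xi_2$ onto $\xi_1$, then in particular they map each support matrix to a subset of the other, and Theorem~\ref{thm:mainrot2} yields $G_1\equiv G_2$.

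For the ``only if'' direction, suppose $G_1\equiv G_2$ with $G_1,G_2$ irreducible. Theorem~\ref{thm:mainrot2} supplies a sequence $\sigma$ of reductions, reversible acute rotations, and column swaps taking $\xi_1$ to some $\hat\xi\subseteq\xi_2$, and a sequence $\tau$ taking $\xi_2$ to some $\check\xi\subseteq\xi_1$. I would first normalize $\sigma$ by pushing every column swap to the end: a column swap commutes past any subsequent reduction or reversible acute rotation once the two swapped column indices are relabeled, and the relabeled operation is again a reduction or a reversible acute rotation, so repeated adjacent bubbling rewrites $\sigma$ as a block of reductions and reversible acute rotations (still started from $\xi_1$) followed by a block of column swaps. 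If this rewritten sequence contained a reduction, its first one would be applied to a matrix obtained from $\xi_1$ by a (possibly empty) prefix of reversible acute rotations alone; that is, some sequence of reversible acute rotations applied to $\xi_1$ enables a reduction, so Proposition~\ref{prop:red} would make $G_1$ reducible, contradicting the hypothesis. Hence $\sigma$, and by the same argument applied to $\tau$ using irreducibility of $G_2$ also $\tau$, uses only reversible acute rotations and column swaps.

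To finish, I would note that reversible acute rotations and column swaps each preserve the number of $\times$-entries of a support matrix: a reversible acute rotation zeros exactly one entry while turning exactly one $0$ into $\times$ (the entry on the unique row where the two columns differ), and a column swap merely permutes entries. Therefore $|\supp(\hat\xi)|=|\supp(\xi_1)|$ and $|\supp(\check\xi)|=|\supp(\xi_2)|$; combined with $\hat\xi\subseteq\xi_2$ and $\check\xi\subseteq\xi_1$, this forces $|\supp(\xi_1)|=|\supp(\xi_2)|$, whence $\hat\xi=\xi_2$ and $\check\xi=\xi_1$. So $\sigma$ maps $\xi_1$ to $\xi_2$ and $\tau$ maps $\xi_2$ to $\xi_1$, both using only reversible acute rotations and column swaps, which is the claim. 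As an aside, Corollary~\ref{cor:DAG} then follows immediately, since DAGs contain no $2$-cycles and are hence irreducible by Proposition~\ref{prop:2cycle}.

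The main obstacle is the reduction-elimination step in the second paragraph; everything around it is routine counting. Its crux is the recognition that, after column swaps are moved to the end, the mere presence of any reduction in what remains is literally an instance of the reducibility criterion of Proposition~\ref{prop:red}. The only things that need verification are that column swaps genuinely commute, up to index relabeling, past both reductions and reversible acute rotations, and that these two operation types are $\times$-count preserving; neither is difficult.
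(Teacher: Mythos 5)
Your proof is correct and takes essentially the same route as the paper's: the ``if'' direction is immediate from Theorem~\ref{thm:mainrot2}, and the elimination of reductions is exactly the paper's appeal to Proposition~\ref{prop:red} via irreducibility. Your explicit commutation of column swaps to the end of the sequence and the $\times$-count-preservation argument that upgrades the subset inclusions to equalities simply spell out details the paper's two-sentence proof leaves implicit.
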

\begin{proof}
By Proposition \ref{prop:red}, there exists no sequence of reversible acute rotations that enables us to apply a reduction to the support matrix. Therefore, we only need to consider reversible acute rotations and column swaps, and we need to map one support matrix to the other, rather than mapping it to a subset of the other.

\end{proof}

\begin{proof}[Proof of Corollary \ref{cor:DAG}]
DAGs do not have 2-cycles. Therefore, by Proposition \ref{prop:2cycle}, DAGs are irreducible. Therefore, the result follows from Corollary \ref{cor:irred}.

\end{proof}



\section{Proof of Theorem \ref{thm:graphical}}

If side:\\
If there exist sequences of parent reduction, parent exchange, and cycle reversion, mapping one graph to a subgraph of the other, then there exist sequences of reduction, reversible acute rotation, and column swap mapping the support matrix of one graph to a subset of the support matrix of the other. Therefore, by Theorem \ref{thm:mainrot2}, $G_1$ is distribution equivalent to $G_2$.

Only if side:\\
The proof of the only if side consists of two steps:
\begin{itemize}
\item {\bf Step 1. }We note that
\begin{enumerate}
\item All support rotations of reduction type, that do not make a diagonal entry zero are representable by a parent reduction. This is clear from the definitions of reduction and parent reduction.
\item All reversible acute rotations, that do not make a diagonal entry zero are representable by a parent exchange. This is clear from the definitions of reversible acute rotation and parent exchange.
\item If we have a reversible acute rotation and a column swap on columns $j$ and $k$ such that the reversible acute rotation makes the diagonal entry $\xi_{j,j}$ zero and then the column swap swaps columns $j$ and $k$ (we call such a pair a flip pair), then this pair can be replaced by a reversible acute rotation that makes the non-diagonal entry $\xi_{j,k}$ zero, and hence, is representable by a parent exchange.
\item If we start with a support matrix with no diagonal entries equal to zero and by performing a sequence of column swaps reach another support matrix with no diagonal entries equal to zero, then this sequence is representable by a cycle reversion. To see this, we note that if after the sequence of column swaps, column $j$ has moved to location $k$, it implies that its $j$-th and $k$-th elements are non-zero. Therefore, the original support matrix corresponds to a graph containing the edge $j\rightarrow k$, and the final support matrix corresponds to a graph containing the edge $k\rightarrow j$. This reasoning identifies the cycle before, and the reversed cycle after the transformation.
\end{enumerate}
\end{itemize}
Step 1 implies that if we have a sequence of support rotations which includes
1. reduction rotations, that do not make a diagonal entry zero,
2. reversible acute rotations, that do not make a diagonal entry zero,
3. flip pairs, and 
4. sequence of column swaps starting and ending on a support matrix with non-zero diagonal entries,
(we call such a sequence, a representable sequence) then we can represent this sequence with a sequence of parent reductions, parent exchanges, and cycle reversions.
\begin{itemize}
\item {\bf Step 2. }If $G_1$ is distribution equivalent to $G_2$, then by Theorem \ref{thm:mainrot2}, there exists a sequence of reduction, reversible acute rotations, and column swap mapping the support matrix of one to the other. We show that in this case, there exists a representable sequence as well that maps the support matrix of one to the other. Therefore, by Step 1 the only if side will be concluded.

We note that since $\xi_1$ is a support matrix of a directed graphs, it does not have any zeros on the main diagonal.
Given the sequence of support rotations, the column swaps do not enable us or prevent us from performing reversible acute rotations and reductions,  and merely change the indices of the columns. Therefore, we can have an equivalent sequence of support rotations, in which we have moved all the column swaps, except those involved in flip pairs, to the end of the sequence. 
Consider the first rotation in the sequence of the rotations which zeros out a diagonal entry. If this rotation is of reduction type and has zeroed out $\xi_{i,i}$ using columns $i$ and $j$, then $\xi_{i,j}$ should have been non-zero. Therefore, we can instead replace it by zeroing $\xi_{i,j}$, and use column $j$ instead of column $i$ in the next steps. If this rotation is of reversible acute rotation type and has zeroed out $\xi_{i,i}$ using columns $i$ and $j$, then $\xi_{i,j}$ should have been non-zero. Therefore, again we can instead replace it by zeroing $\xi_{i,j}$, and use column $j$ instead of column $i$ in the next steps. Therefore, we can perform all the reductions and reversible acute rotations and from $\xi_1$ obtain $\xi'_1$, which does not have any zeros on the main diagonal, and via a sequence of column swaps can be mapped to a subset of $\xi_2$.

Now, we perform the reverse of that sequence of column swaps on $\xi_2$, which gives us a superset of $\xi'_1$ (call it $\xi''_2$), and hence, does not have any zeros on the main diagonal. Therefore, since $\xi_2$ is a support matrix of a directed graph and hence, it also does not have any zeros on the main diagonal, by part 4 of Step 1, this is equivalent to a cycle reversion. $\xi''_2$ is a superset of $\xi'_1$, and both $\xi''_2$ and $\xi'_1$ are graphically representable. By Lemma \ref{lem:loss}, the corresponding directed graph of $\xi''_2$ is the same (if the directed graph corresponding to $\xi''_2$ is irreducible) or reducible to the directed graph corresponding to $\xi'_1$. Therefore, by Proposition \ref{prop:red} we can perform the reduction via a sequence of reversible acute rotations. Similar to the reasoning in the previous paragraph, since we start with a support matrix with no zeros on the main diagonal, this can be done without zeroing any element of the main diagonal, and hence, we can map $\xi''_2$ to $\xi'_1$. Finally, reversing the reversible acute rotations of the sequence from $\xi_1$ to $\xi'_1$, we obtain a subset of $\xi_1$, and the whole sequence from $\xi_2$ to a subset of $\xi_1$ is a representable sequence. Similarly, we can construct a representable sequence mapping $\xi_1$ to a subset of $\xi_2$, which completes the proof.

\end{itemize}

\section{Proof of Corollary \ref{cor:DAGgraphical}}

DAGs do not have 2-cycles. Therefore, by Proposition \ref{prop:2cycle}, DAGs are irreducible. Hence, a parent reduction cannot be performed. Also, DAGs do not have cycles. Hence, there will not be any cycle reversions. Therefore, the result follows from Theorem \ref{thm:graphical}.

\section{Proof of Proposition \ref{prop:mes0}}
To violate faithfulness, there are finite number of sets of hard constraints that should be satisfied (since hard constraints are distributional constraints and hence limited).
Let $\theta_i$ be the set of values satisfying the $i$-th set of constraints. By the definitions of hard constraints, $\theta_i$ is Lebesgue measure zero. Therefore, the set of distributions not g-faithful to $G$, which is the finite union is also Lebesgue measure zero.

\section{Proof of Proposition \ref{prop:consistent}}

Suppose $G^*$ is the ground truth DG and it generates distribution $\Theta$, and $G_1$ is a candidate DG which we want to decide whether it is the ground truth or not.

Suppose $G_1\cong G^*$. Then there exists a set of distribution with non-zero Lebesgue measure that both $G_1$ and $G^*$ can generate. Suppose $\Theta$ is a distribution coming from this intersection which also satisfies Assumption \ref{ass:G_faith}. Then clearly, since both DGs can generate $\Theta$, there is no way to realize which one has been the ground truth, and hence, $G_1$ is non-identifiable from $G^*$.

For the opposite direction, suppose $G_1\not\cong G^*$ then either there is no distribution that they can both generate, or the measure of such distributions is zero. In the first case, $\Theta$ is not generatable by $G_1$ and hence we can identify that $G_1$ is not the ground truth. In the second case, by Assumption \ref{ass:G_faith}, $\Theta$ cannot be from the intersection and hence again is not generatable by $G_1$ and hence we can identify that $G_1$ is not the ground truth.

\section{Proof of Theorem \ref{thm:l0}}

Let $G^*$ and $\Theta$ be the ground truth structure and the generated distribution, and for an ML estimator, assume we are capable of finding a correct pair $(\hat{B}_{ML},\hat{\Omega}_{ML})$, such that $(I-\hat{B}_{ML})\hat{\Omega}_{ML}^{-1}(I-\hat{B}_{ML})^{\top}=\Theta$ and denote the directed graph corresponding to $\hat{B}_{ML}$ by $\hat{G}_{ML}$. We have $\Theta\in\Theta(\hat{G}_{ML})$, which implies that $\Theta$ contains all the distributional constraints of $\hat{G}_{ML}$. Therefore, under Assumption \ref{ass:G_faith}, we have 
$H(\hat{G}_{ML})\subseteq H(G^*)$.

Let $(\hat{B}_{\ell_0},\hat{\Omega}_{\ell_0})$ be the output of $\ell_0$-regularized ML estimator, and denote the directed graph corresponding to $\hat{B}_{\ell_0}$ by $\hat{G}_{\ell_0}$.
Since the likelihood term increases much faster with the sample size compared to the penalty term, asymptotically, we still have the desired properties that $\Theta$ contains all the distributional constraints of $\hat{G}_{\ell_0}$, and hence, under Assumption \ref{ass:G_faith}, we again have 
$H(\hat{G}_{\ell_0})\subseteq H(G^*)$.

Now, consider an irreducible equivalent of $G^*$, denoted by $G^\dagger$. Since $H(G^*)=H(G^\dagger)$, we have 
$H(\hat{G}_{\ell_0})\subseteq H(G^\dagger)$. Also, because of the penalty term we have $|E(\hat{G}_{\ell_0})|\le|E(G^\dagger)|$, otherwise the algorithm would have outputted $G^\dagger$. Therefore, by Assumption \ref{ass:G_faith}, we have $H(\hat{G}_{\ell_0})=H(G^\dagger)$, and hence $H(\hat{G}_{\ell_0})=H(G^*)$. Therefore, by definition, $\hat{G}_{\ell_0}\cong G^*$.


\section{Algorithm for Enumerating Members of a Distribution Equivalence Class and Determining the Equivalence of Two Structures}

We first propose an algorithm for enumerating members of the distribution equivalence class of a directed graph with support matrix $\xi$, based on a depth-first traversal. The algorithm is based on a search tree that is rooted at $\xi$ and branches out via $\textsc{Reduction}$ and $\textsc{AcuteRotation}$ operations.
These two operations
are defined in Algorithm \ref{algorithm:operations}.
Since those two rotation operations are independent of column swaps, we perform a similar depth-first traversal of column swaps at the end, leveraging the graphical, cycle reversion representation for efficiency. 

\begin{algorithm}[H]
\footnotesize
\begin{algorithmic}[1]
    \Function{$\textsc{Reduction}$}{$\xi,i,j$}
        \State Initialize $\xi'\gets \xi$
        \State $\xi'_{i,j} \gets 0$
        \State \textbf{return} $\xi'$
    \EndFunction
    \State
    \Function{$\textsc{AcuteRotation}$}{$\xi,i,j,k,\ell$}
        \State Initialize $\xi'\gets \xi$
        \State $\xi'_{i,j} \gets 0$
        \State $\xi'_{\ell,j}\gets 1$
        \State $\xi'_{\ell,k}\gets 1$
        \State \textbf{return} $\xi'$
    \EndFunction
\end{algorithmic}
\caption{Reduction and Acute Rotation Operations}
\label{algorithm:operations}
\end{algorithm}

Each vertex in the search tree corresponds to a support matrix and each of its children corresponds to the outputs of an admissible $\textsc{Reduction}$ and $\textsc{AcuteRotation}$ operation. 
Algorithm \ref{algorithm:find_rotations} represents the pseudo-code of the function which compiles a set of those operations for a given support matrix.

\begin{algorithm}[H]
\footnotesize
\begin{algorithmic}[1]
    \Function{FindRotations}{$\xi$}
    \State Initialize $Rotations = \emptyset$
    \State \textit{// Find Legal Reductions}
    \For{$j,k$ such that $\|\xi_{\cdot,j}-\xi_{\cdot,k}\|_1 = 0$}
        \For {$i$ such that $\xi_{i,j} = 1$}
            \If {$i \neq j$}
                \State $Rotations\gets Rotations\cup \{\textsc{Reduction}(\xi,i,j)\}$
            \EndIf
            \If {$i \neq k$}
                \State $Rotations\gets Rotations\cup \{\textsc{Reduction}(\xi,i,k)\}$
            \EndIf
        \EndFor
    \EndFor
    \State \textit{// Find Legal Acute Rotations}
    \For{$j,k$ such that $\|\xi_{\cdot,j}-\xi_{\cdot,k}\|_1 = 1$}
        \State $\ell \gets$ index such that $\xi_{\ell,j}\ne\xi_{\ell,k}$
        \For {$i\ne \ell$ such that $\xi_{i,j}=1$}
            \If{$i\ne j$}
                \State $Rotations\gets Rotations\cup \{\textsc{AcuteRotation}(\xi,i,j,k,\ell)\}$
            \EndIf
            \If{$i\ne k$}
                \State $Rotations\gets Rotations\cup \{\textsc{AcuteRotation}(\xi,i,k,j,\ell)\}$
            \EndIf
        \EndFor
    \EndFor
    \State \textbf{return} $Rotations$
    \EndFunction
\end{algorithmic}
\caption{Finding Legal Rotations}
\label{algorithm:find_rotations}
\end{algorithm}

Algorithm \ref{algorithm:enum} enumerates the equivalence class. The algorithm keeps track of the search tree state using a stack $S$ which contain sets of rotated support matrices. The first step of the algorithm enumerates a subset of the equivalence class of $\xi^*$ by finding sequences of $\textsc{Reduction}$ and $\textsc{AcuteRotation}$ operations. The second step enumerates column swaps in a similar depth-first fashion. It is made efficient by using the fact that sequences of legal column swaps correspond to sequences of cycle reversions.

\begin{algorithm}[H]
\footnotesize
\begin{algorithmic}[1]
    \Function{ReverseCycles}{$\xi$}
        \State $Reversed\gets \emptyset$
        \State $\mathcal{C}\gets$ list of cycles in $\xi$
        \For{$C$ in $\mathcal{C}$}
            \State $\xi'\gets$ Column-permuted $\xi$ with cycle $C$ reversed
            \State $Reversed\gets Reversed \cup\{\xi'\}$
        \EndFor
        \State \textbf{return} $Reversed$
    \EndFunction
    \State {}
    \Procedure{EnumerateEquiv}{$p\times p$ support matrix $\xi^*$}
        \State Initialize $Equiv\gets \{\xi^*\}$.
        \State Initialize empty stack $S$
        \State $S.push(\textsc{FindRotations}(\xi^*))$
        \While{$S$ is not empty}
            \State $Rotations\gets S.pop()$
            \If{$|Rotations| = 0$}
                \State \textbf{continue}
            \Else
                \State $\xi \gets$ a support matrix in the set $Rotations$
                \State $Rotations\gets Rotations\setminus\{\xi\}$
                \State $S.push(Rotations)$
                \If{$\xi$ not in $Equiv$}
                    \State $Equiv\gets Equiv \cup \{\xi\}$
                    \State $S.push(\textsc{FindRotations}(\xi))$
                \EndIf
            \EndIf
        \EndWhile
        \State \textit{// Enumerate legal column swaps via cycle reversion}
        \For{$\tilde{\xi}$ in $Equiv$}
            \State Initialize empty stack $S$
            \State $S.push(\textsc{ReverseCycles}(\tilde{\xi}))$
            \While {$S$ is not empty}
                \State $Reversals\gets S.pop()$
                \If {$|Reversals| = 0$}
                    \State \textbf{continue}
                \Else
                    \State $\xi\gets$ a support matrix in the set $Reversals$
                    \State $Reversals\gets Reversals\setminus\{\xi\}$
                    \State $S.push(Reversals)$
                    \If{$\xi$ not in $Equiv$}
                        \State $Equiv\gets Equiv\cup \{\xi\}$
                        \State $S.push(ReverseCycles(\xi))$
                    \EndIf
                \EndIf
            \EndWhile
        \EndFor
    \EndProcedure
\end{algorithmic}
\caption{Enumerating equivalent structures}
\label{algorithm:enum}
\end{algorithm}

Finally, the procedure $\textsc{EnumerateEquiv}$ in Algorithm \ref{algorithm:enum} may be used to determine whether or not two DGs with respective support matrices $\xi_1$ and $\xi_2$ are equivalent by enumerating the equivalence class of $\xi_1$ and checking whether or not $\xi_2$ is in that equivalence class.

\section{Virtual Edge Search Operator}

\begin{figure}[t]
\begin{center}
\includegraphics[scale=0.43]{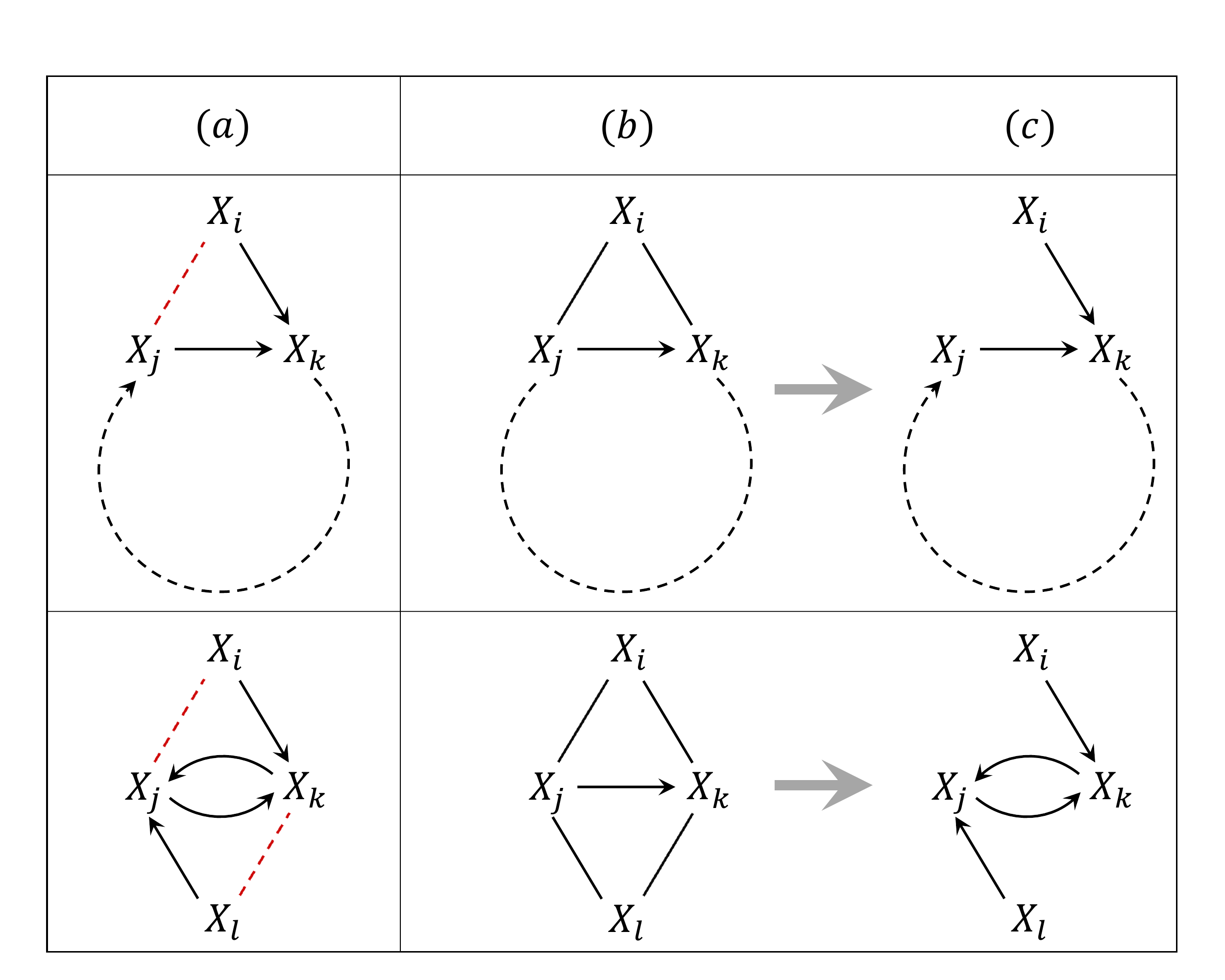}	
\end{center}
\caption{Virtual edge search operator.}
\label{fig:vedge}
\end{figure}

For acyclic DGs, under the Markov and faithfulness assumptions, a variable $X_i$ is adjacent to a variable $X_j$ if and only if $X_i$ and $X_j$ are dependent conditioned on any subset of the rest of the variables. This is not the case for cyclic DGs \citep{richardson1996polynomial}. Two non-adjacent variables $X_i$ and $X_j$ are dependent conditioned on any subset of the rest of the variables if they have a common child $X_k$ which is an ancestor of $X_i$ or $X_j$. In this case, we say there exists a virtual edge between $X_i$ and $X_j$. Figure \ref{fig:vedge}(a) demonstrates two examples. In this figure, virtual edges are shown with dashed red edges.

There are two cases that detecting a virtual edge as a real edge can trap the greedy search into a local optima which can be improved.\\

\noindent
{\bf Case 1.} This case is shown in the first row of Figure \ref{fig:vedge}.
If a greedy search algorithm finds the edges between $X_k$ and $X_j$ but does not find $X_k$ and $X_j$ to be on a cycle, that is, if it does not find the directions correctly, it can significantly increase the likelihood by adding an edge at the location of the virtual edge between $X_i$ and $X_j$. The algorithm would therefore be trapped in a local optimum shown in Figure \ref{fig:vedge}(b) with one more edge than the ground truth shown in Figure \ref{fig:vedge}(c).
To resolve this issue, we propose adding the following search operator: 
Suppose we have a triangle over three variables $X_i$, $X_j$ and $X_k$, and there exists an additional sequence of edges connecting $X_j$ and $X_k$. In one atomic move, we perform a series of edge reversals to form a cycle containing $X_j\to X_k$ along the sequence, delete the edge connecting $X_i$ to $X_j$, and orient the edge $X_i\to X_k$. If the likelihood is unchanged, the edge deletion improves the score.\\

\noindent
{\bf Case 2.} This case is shown in the second row of Figure \ref{fig:vedge}. This case involves the case that the cycle over $X_j$ and $X_k$ in the ground truth is a 2-cycle.
If a greedy search algorithm finds one edges between $X_k$ and $X_j$, it can significantly increase the likelihood by adding edges at the location of the virtual edges between $X_i$ and $X_j$ and between $X_l$ and $X_k$. The algorithm would therefore be trapped in a local optimum shown in Figure \ref{fig:vedge}(b) with one more edge than the ground truth shown in Figure \ref{fig:vedge}(c).
To resolve this issue, we propose adding the following search operator: 
Suppose we have triangles over three variables $X_i$, $X_j$ and $X_k$ and $X_l$, $X_j$ and $X_k$, as shown in the figure. In one atomic move, we  delete the edge connecting $X_i$ to $X_j$ and the edge connecting $X_l$ to $X_k$, and add the edge $X_k\to X_j$. If the likelihood is unchanged, the edge deletion improves the score. \\

\begin{figure}[t!]
    \centering
    \subfigure[]{
        \label{fig:virtsupp_2cycle_nominal}
        \includegraphics[scale=0.55]{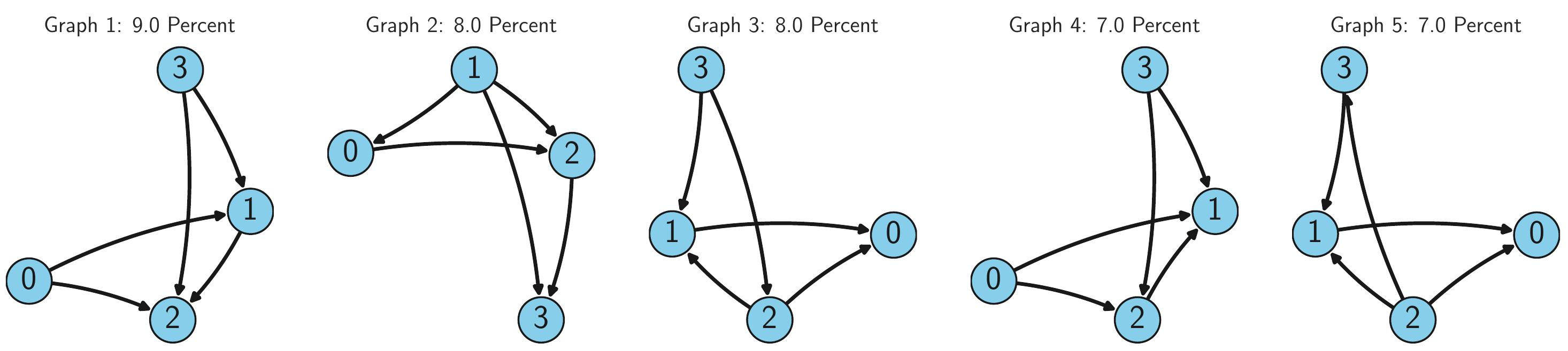}
    }
    \newline
    \subfigure[]{
        \label{fig:virtsupp_2cycle_virtual}
        \includegraphics[scale=0.55]{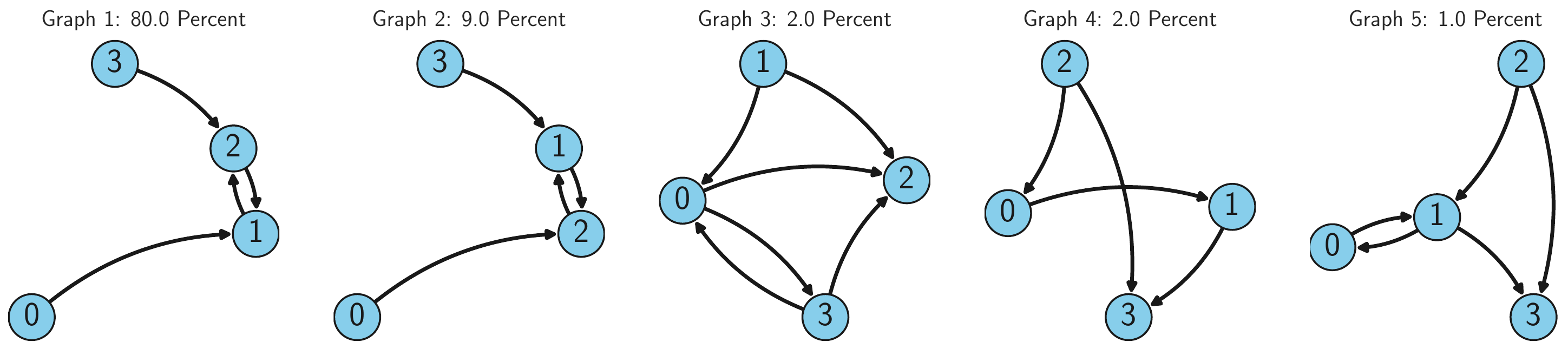}
    }
\caption{Example 1. Comparison of 5 most commonly learned structures.}
\label{fig:vedgesim_2cycle}
\end{figure}

\begin{figure}[t!]
    \centering
    \subfigure[]{
        \centering
        \includegraphics[scale=0.55]{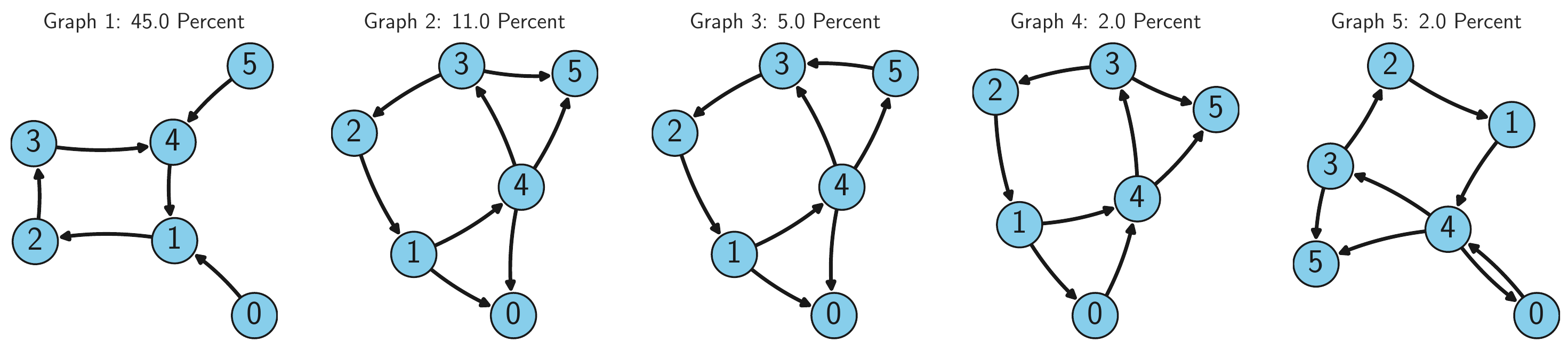}
        \label{fig:virtsupp_4cycle_nominal}
    }
    \newline
    \subfigure[]{
        \centering
        \includegraphics[scale=0.55]{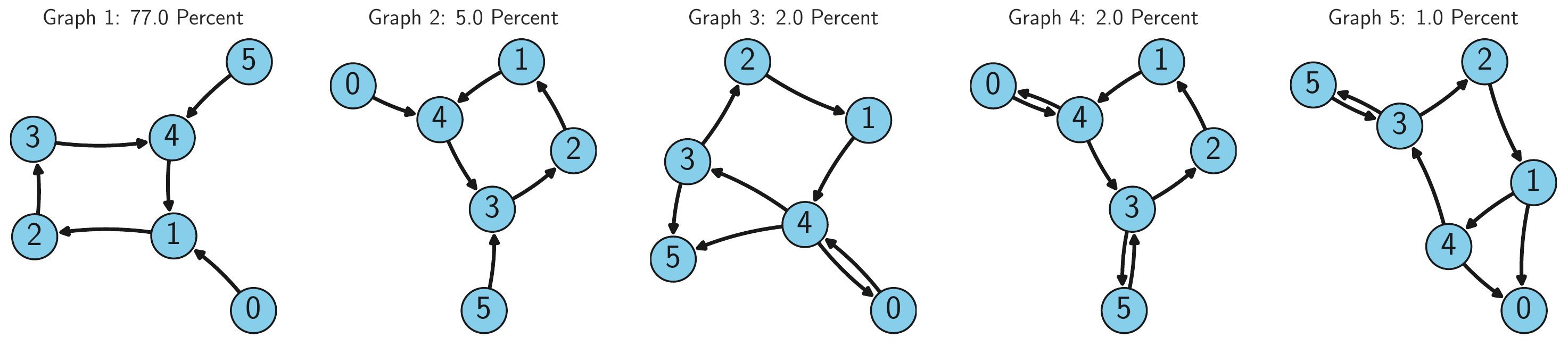}
        \label{fig:virtsupp_4cycle_virtual}
    }
\caption{Example 2. Comparison of 5 most commonly learned structures.}
\label{fig:vedgesim_4cycle}
\end{figure}

In order to evaluate the proposed search operator, we performed two experiments. The first involves the ground truth structure shown in Figure \ref{fig:virtsupp_2cycle_virtual}, Graph 1. This graph has one equivalent structure, which is Graph 2 in the same figure. We run the tabu search algorithm with and without the proposed search operator for 100 instantiations of the edge weights and variances. The 5 most commonly found structures found by tabu search without and with the proposed operator are shown in Figures \ref{fig:virtsupp_2cycle_nominal} and \ref{fig:virtsupp_2cycle_virtual}, respectively. While the proposed algorithm finds an equivalent structure $89\%$ of the time, the nominal tabu search never finds an equivalent structure. 

Next, we consider the ground truth structure shown in Figure \ref{fig:virtsupp_4cycle_virtual}, Graph 1. This structure has one equivalent, which is Graph 2 in the same figure. While the nominal tabu search algorithm finds an equivalent structure $45\%$ of the time, the proposed algorithm is much more reliable, finding an equivalent structure $83\%$ of the time.

\section{Score Decomposability}

When the DG is acyclic, the distribution generated by a linear Gaussian structural equation model satisfies the local Markov property. This implies that the joint distribution can be factorized into the product of the distributions of the variables conditioned on their parents as follows.
\[
P(V)=\prod_{X_i\in V}P(X_i|\Pa(X_i)).
\]
The benefit of this factorization is that the computational complexity of evaluating the effect of operators can be dramatically reduced since a local change in the structure does not change the score of other parts of the DAG. 

In contrast, for the case of cyclic DGs the distribution does not necessarily satisfy the local Markov property.
However, the distribution still satisfies the global Markov property \citep{spirtes1995directed}. Therefore, our search procedure factorizes the joint distribution into the product of conditional distributions. Each of these distributions is over the variables in a maximal strongly connected subgraph (MSCS), conditioned on their parents outside of the MSCS. This can be shown as follows, where an MSCS is denoted by $S$.
\[
P(V)=\prod_{S_i\subseteq V}P(S_i|\Pa(S_i)).
\]
After applying an operation, the likelihoods of all involved MSCSs are updated. Note that an operation can merge several MSCSs or break one into several smaller MSCSs. We perform the updates as follows:
\begin{itemize}
    \item If the change adds an edge from MSCS $S_1$ to $S_2$, These two MSCSs and any MSCS on any path from $S_2$ to $S_1$ will fused into a new large MSCS.
    \item If the change is performed inside an MSCS, the score of the rest of MSCSs do not change.
    \item If the change removes or reverses an edge inside an MSCS, we find the MSCSs in that subset again, as it may be divided into smaller MSCSs.
\end{itemize}

\section{Effect of Sample Size on the Performance}

In this section, we compare the performance of the discussed structure learning algorithms in the case of $p=5$ variables and three different sample sizes: $n=10^3, 10^4,$ and $10^5$. The results of the comparison are shown in Figure \ref{fig:exps_sample_size}. As can be seen in the figure, the performance of the $\ell_0$-regularized local search methods show marked improvement as sample size is increased.

For all experiments, including those in the main text, we use the following hyperparameters for the search algorithms. For the $\ell_1$-regularized MLE, we use a regularization coefficient of $0.1$, and threshold the learned $B$ matrix at $0.05$. See \citep{koller2009probabilistic} for details on greedy hill search and  tabu search and its parameters. For tabu search, we use a tabu length of 5 for the $p=5$ case and 10 for the $p=20$ and $p=50$ cases. In all cases, we used a tabu search patience of 5.

\begin{figure}[t]
\centering
\begin{minipage}{.25\textwidth}
  \centering
  \includegraphics[width=.99\linewidth,height=33mm]{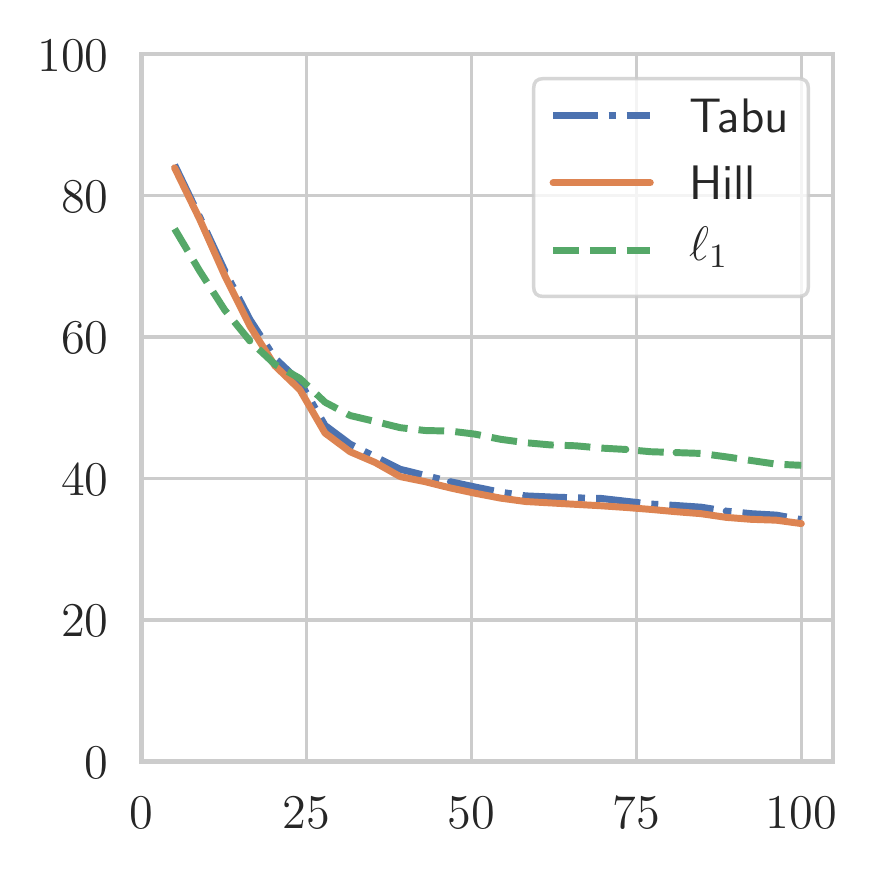}
\end{minipage}%
\begin{minipage}{.25\textwidth}
  \centering
  \includegraphics[width=.99\linewidth,height=33mm]{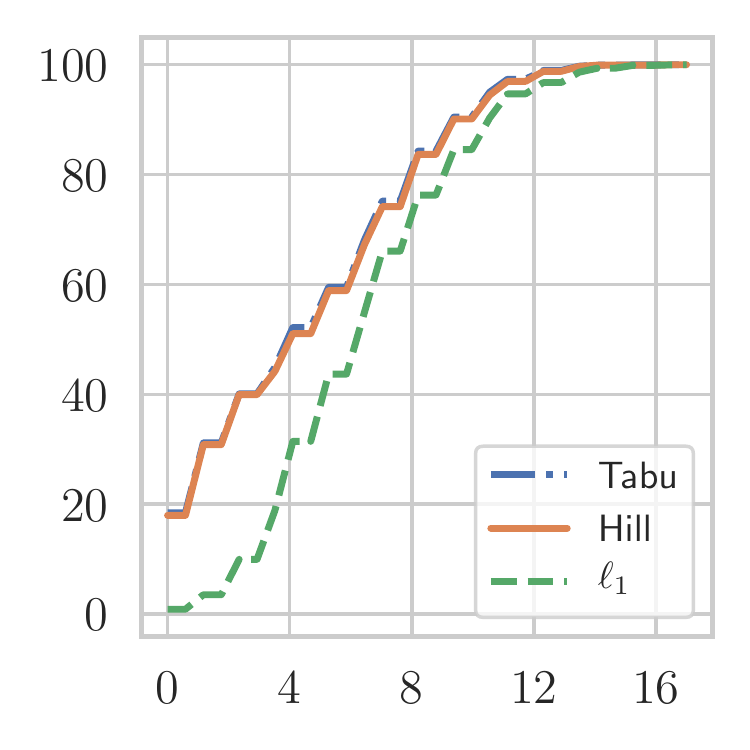}
\end{minipage}\vspace{-0mm}
\begin{minipage}{.25\textwidth}
  \centering
  \includegraphics[width=.99\linewidth,height=33mm]{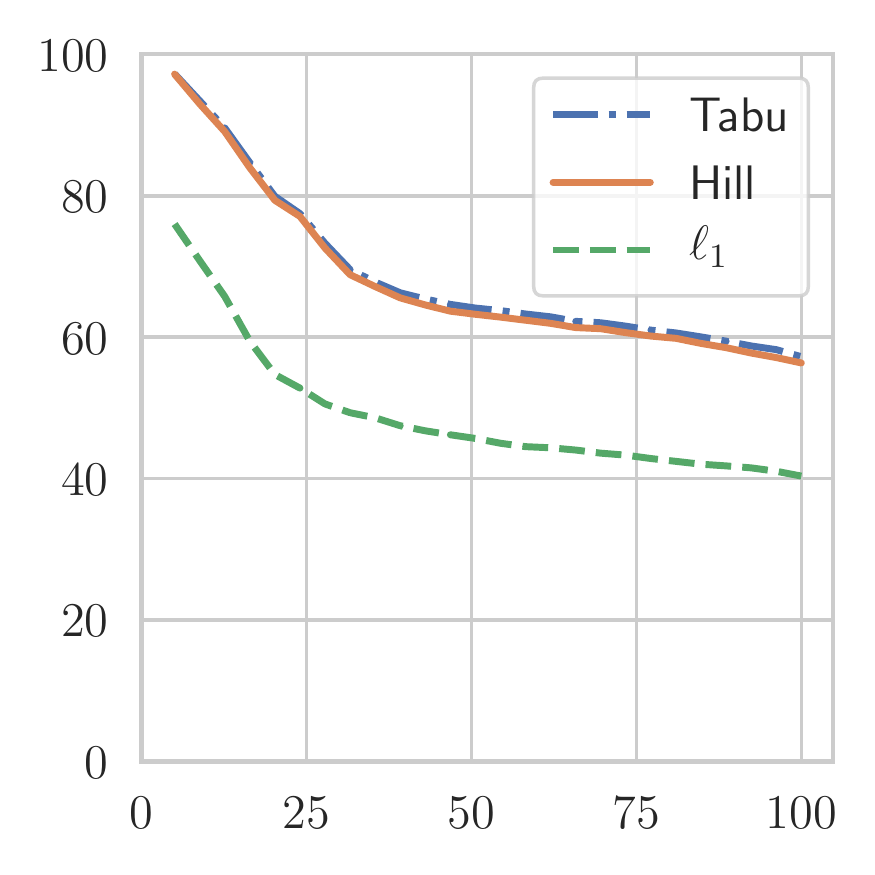}
\end{minipage}%
\begin{minipage}{.25\textwidth}
  \centering
  \includegraphics[width=.99\linewidth,height=33mm]{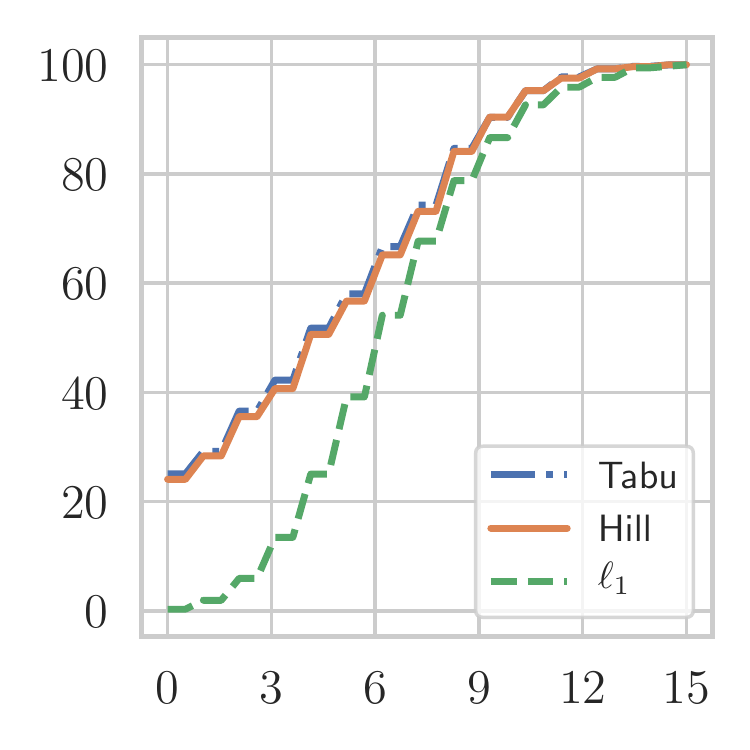}
\end{minipage}\vspace{-0mm}
\begin{minipage}{.25\textwidth}
  \centering
  \includegraphics[width=.99\linewidth,height=33mm]{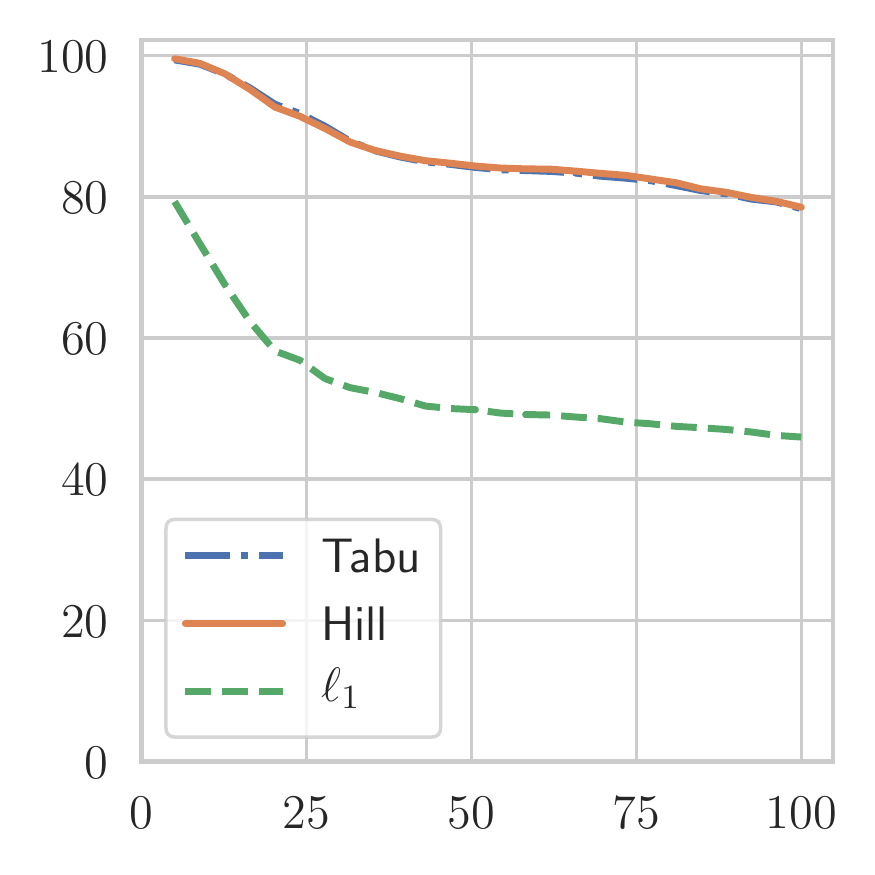}
\end{minipage}%
\begin{minipage}{.25\textwidth}
  \centering
  \includegraphics[width=.99\linewidth,height=33mm]{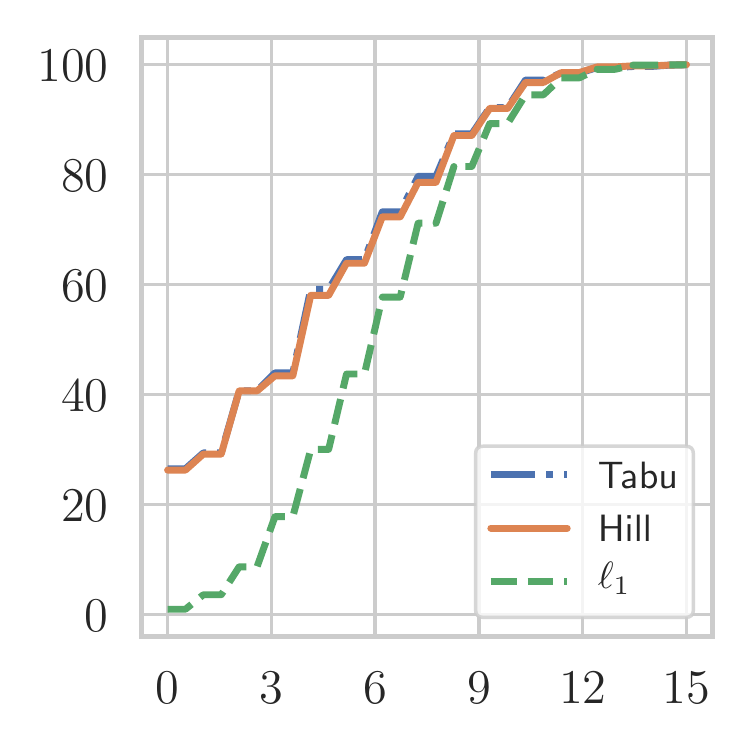}
\end{minipage}
\caption{Results for $n=10^3,10^4,10^5$, top to bottom. {\bf Left column:} multi-domain evaluation. The percentage of outputs with success rate larger than a certain value is plotted vs. success percentages. {\bf Right column:} SHD evaluation. The percentage of outputs with SHD less than or equal to a certain value is plotted vs. SHD.} 
\label{fig:exps_sample_size}
\end{figure}

\end{document}